\documentclass[11pt]{article}

\usepackage[margin=1in]{geometry}
\usepackage[T1]{fontenc}

\usepackage{amsmath,amssymb,amsthm}
\usepackage{graphicx}
\usepackage{xcolor}
\definecolor{mygreen}{RGB}{0,130,0}
\usepackage{hyperref}
\hypersetup{
  colorlinks=true,
  linkcolor=red,
  citecolor=mygreen,
  urlcolor=blue
}
\usepackage{float}
\usepackage{algorithm}
\usepackage{algpseudocode}
\usepackage{tikz}
\usetikzlibrary{positioning, shapes, arrows.meta, calc}
\usepackage{bm}
\usepackage{dsfont}
\usepackage{mathrsfs}
\usepackage{subcaption}
\usepackage{authblk}

\usepackage{times}
\usepackage{microtype}

\newtheorem{theorem}{Theorem}
\newtheorem{lemma}{Lemma}
\newtheorem{proposition}{Proposition}
\newtheorem{remark}{Remark}

\begin{document}

\title{\textbf{Learning When to Automate: Queue Control in Human-AI Service Systems}}

\author[1,3]{Giovanni Montanari}
\author[4]{Marco Scarsini}
\author[1,2,3]{Vianney Perchet}

\affil[1]{FairPlay Joint Team, Inria, France}
\affil[2]{Criteo AI Lab, Paris, France}
\affil[3]{CREST, ENSAE, Institut Polytechnique de Paris}
\affil[4]{Department of Economics and Financial Markets, Luiss University}

\date{}
\maketitle

%
%
%
%
%
%

\begin{abstract}
We study a human-AI service system in which tasks arrive sequentially and are processed through a two-stage architecture: an automated chatbot followed, when necessary, by a human agent. We consider $T$ sequentially arriving tasks, each belonging to one of $K$ heterogeneous types. For each task the decision maker chooses how many resources to allocate to the chatbot, whose type-dependent success probabilities are initially unknown. Tasks not resolved by the chatbot enter type-dependent human-service queues, where they are processed by a human agent with unknown service rates. This model captures a central tradeoff in hybrid service systems: relying more on automation reduces human congestion but increases chatbot costs, while insufficient automation may overload the human agent. We propose the \texttt{UCB-DPP} policy, which combines Upper Confidence Bounds with Drift-Plus-Penalty control to learn the unknown parameters of the system while making queue-aware decisions. We prove that \texttt{UCB-DPP} achieves regret $\widetilde{\mathcal{O}}(K\sqrt{T})$ and guarantees mean-rate stability of the human-service queues. Simulations on synthetic instances show that the proposed policy outperforms natural baselines.
\end{abstract}

\section{Introduction}
The increasing deployment of large language models (LLMs) in service systems raises a fundamental question: how should automated agents and human operators be jointly coordinated? In many practical settings, an LLM-based assistant can resolve part of the incoming workload quickly, but its performance depends on the amount of computational or operational resources allocated to it and it may still fail on difficult requests. Human agents, on the other hand, are often more reliable but slower and capacity-constrained. A central challenge is therefore to exploit automation while controlling both operational costs and congestion in the human-service system.

We study this challenge through an online learning and queueing-control model for human-AI service systems. Tasks arrive sequentially over a finite horizon $T$ and belong to one of $K$ heterogeneous types, representing different levels of difficulty or service requirements. Each task is first processed by a chatbot. The decision maker chooses a cost level for the chatbot, interpreted as the amount of resources devoted to automated resolution. A higher cost increases the probability that the chatbot resolves the task, while tasks not resolved automatically are routed to a type-dependent human queue. The human agent can serve only one queue at a time, so the controller must also decide how to allocate limited human service capacity across task types.

The key feature of this setting is that automation and human scheduling control two different sides of the queueing system. The chatbot cost decisions shape the arrival process of the human queues: using more automation reduces future human workload but increases immediate chatbot costs. Scheduling decisions, instead, determine the departure process by allocating human service capacity across queues. Thus, automation and human service cannot be optimized independently: they must be coordinated over time in response to the current congestion state.

Another key difficulty is that the platform typically does not know in advance how different task types will be handled by the chatbot and by the human agent. Empirical evidence suggests that the relative effectiveness of AI and human service depends on task complexity \cite{xu2020ai}, while recent work on generative AI highlights that model performance may vary sharply across tasks that appear similar \cite{dellacqua2026navigating}. These observations motivate treating chatbot effectiveness and human service rates as task-dependent and initially unknown. The platform must therefore learn them from experience while simultaneously deciding how much to rely on automation and how to allocate limited human service capacity.

\subsection{Motivation}

The proposed framework is motivated by a broad class of human-AI service platforms in which automation and human intervention coexist. A canonical example is customer support, where incoming requests are first handled by a chatbot and escalated to a human operator only when the automated response is insufficient. This hybrid architecture is increasingly relevant in practice: chatbot-based customer service can generate economic value for firms~\cite{fotheringham2023effect}, but empirical studies also show that user adoption and satisfaction depend on factors such as communication quality, trust, privacy concerns, anthropomorphism, and the quality of the interaction in failure scenarios~\cite{sheehan2020customer,song2022will,hsu2023understanding,huang2024can,chen2023chatbot,cai2024communication,greilich2025consumer}. These findings suggest that replacing human service entirely with automation is often not desirable; instead, effective systems should decide when to rely on the chatbot and when to route unresolved requests to human agents.

The same two-stage structure appears beyond customer support. In corporate IT helpdesks, automated assistants may resolve routine access or software issues, while more complex tickets are routed to human technicians. In fraud detection, automated screening systems may clear routine transactions, whereas uncertain cases are forwarded to human analysts for manual review. In all these examples, the platform must jointly determine how much to rely on automation and how to allocate limited human service capacity across heterogeneous queues, while learning the effectiveness of both components online.

\subsection{Contributions}

The first contribution is a queueing model for human-AI service platforms in which automation decisions and human scheduling decisions are coupled. Chatbot decisions endogenously control the residual arrival process into the human queues, while scheduling decisions determine how the human agent serves the resulting queues. This two-stage structure captures a common workflow in modern service systems and differs from standard queueing-control models in which the arrival process is typically exogenous.

The second contribution is algorithmic. We propose the \texttt{UCB-DPP} policy, which combines Upper Confidence Bounds (UCB) with Drift-Plus-Penalty (DPP) control. The policy uses optimistic estimates of the unknown chatbot success probabilities and human service rates, while making queue-aware decisions through a weighted drift-plus-penalty objective. A key difficulty is that DPP is traditionally used to obtain asymptotic stability and time-average optimality guarantees, whereas here it must be adapted to a finite-horizon regret analysis. Moreover, learning errors affect both the arrival and the service sides of the queueing system.

The third contribution is theoretical. We prove that \texttt{UCB-DPP} achieves a sublinear regret bound of order $\widetilde{\mathcal O}(K\sqrt T)$ and ensures mean-rate stability of the human-service queues. The analysis combines UCB concentration bounds with Lyapunov-drift arguments and controls estimation errors weighted by the queue backlog. Finally, we complement the theoretical results with simulations on synthetic instances, showing that \texttt{UCB-DPP} outperforms natural baseline policies.

\subsection{Related Work}

\paragraph{Human-AI service systems and chatbot escalation.}
A growing literature studies service systems in which automated agents and human operators jointly handle customer requests. Hybrid customer-service architectures combining virtual agents and human operators have been proposed and analyzed from empirical and multidisciplinary perspectives~\cite{C-H_paper1,C-H_paper2}. Closer to the service-operations literature, human-AI service systems have been studied in settings with strategic customers, with a focus on whether the use of AI should be mandated~\cite{C-H_paper3}. Related work also analyzes human-AI interaction in congested service systems, but in settings where AI assists human decision-making rather than acting as a first-stage service channel~\cite{C-H_paper4}.

A related line of work studies the machine--human handoff problem in dialogue systems. This literature considers the task of predicting when a chatbot conversation should be transferred to a human agent~\cite{C-H_paper5}, jointly models handoff prediction and service satisfaction~\cite{C-H_paper6}, and incorporates cost-aware considerations into handoff decisions~\cite{C-H_paper7}. Other works study intent recognition mechanisms for identifying requests to switch from chatbot to human service~\cite{C-H_paper8}. Recent field evidence from large-scale customer-service operations further highlights the operational importance of human-in-the-loop interventions in agentic AI systems~\cite{C-H_paper9}.

This work takes a different perspective by combining online learning with queue control in a human-AI service model. The platform must jointly minimize chatbot costs and stabilize the human-service queues, while learning both the chatbot success probabilities and the human service rates across task types. We propose a learning policy for this setting and provide a finite-time regret analysis, showing how automation decisions and human-capacity allocation can be optimized jointly under uncertainty.

\paragraph{Queueing control and DPP.}
The present work is also related to the classical literature on queueing control and Lyapunov-based scheduling. A central line of work studies queue-length-based scheduling policies, such as MaxWeight and backpressure, which select service actions according to the current congestion state of the system. These policies originate from the seminal work on stability and maximum-throughput scheduling in constrained queueing networks \cite{queue_control1}, and have been further analyzed in generalized switch and heavy-traffic regimes \cite{queue_control2}. More broadly, queue-length-based scheduling and resource-allocation rules have become standard tools for the control of stochastic networks \cite{queue_control4,queue_control5}.

The algorithmic approach proposed in this work is particularly inspired by Lyapunov optimization and the Drift-Plus-Penalty methodology~\cite{Book_on_Queues,queue_control3}. This framework designs online control policies by minimizing, at each time step, an upper bound on the Lyapunov drift plus a weighted penalty term. In the analyzed setting, the drift term captures the evolution of the human-service queues, while the penalty corresponds to the chatbot cost. The main difference from the classical queueing-control literature is that the relevant system parameters, namely the chatbot success probabilities and the human service rates, are unknown and must be learned online while the queues are being controlled.

\paragraph{Online learning and bandits.}
The learning component of the model is related to the classical literature on multi-armed bandits and online learning. The bandit framework studies sequential decision-making problems in which a learner must balance exploration and exploitation while minimizing regret. Foundational results on adaptive allocation rules were established in~\cite{bandits1}, while finite-time regret guarantees for Upper Confidence Bound algorithms were developed in~\cite{bandits2}. We refer to~\cite{bandits3,bandits_book} for broader treatments of stochastic and adversarial bandit models.

Although \texttt{UCB-DPP} relies on UCB-type confidence bounds, the setting differs from standard bandit problems. 
Observations are generated by the control decisions and the queueing state: chatbot outcomes provide information only for task types on which resources are allocated, while human service outcomes are observed only when the policy selects a nonempty queue for service.
Moreover, estimation errors are weighted by the current backlog, coupling learning with queue control. Thus, standard bandit arguments must be combined with Lyapunov-drift techniques.

\paragraph{Online learning in queue control.}
Finally, the most closely related line of work studies online learning in queueing and stochastic network control systems, where control decisions must be made under unknown system parameters.
Several works combine bandit-learning ideas with queueing objectives, including the learning of unknown service rates \cite{learning_queue1}, scheduling with unknown statistics through MaxWeight and UCB-type rules \cite{learning_queue2}, and the analysis of transient congestion effects induced by learning \cite{learning_queue3}. Related approaches combine Lyapunov-drift methods with bandit feedback or adversarial learning in constrained scheduling and stochastic network optimization problems \cite{learning_queue4,learning_queue5,learning_queue6,DPOP}.

This work is related to the literature on learning in queueing systems, but the model has a different two-stage structure. In standard queueing-control problems, arrivals are typically exogenous and the controller acts mainly on the service side. In the proposed setting, chatbot decisions endogenously shape the arrival process into the human queues, while scheduling decisions determine the departure process. The platform must therefore learn and control both sides of the system: the chatbot layer, through unknown chatbot success probabilities, and the human-service layer, through unknown service rates.

\section{The Model}

\begin{figure}[H]
\centering
\resizebox{0.90\textwidth}{!}{
\begin{tikzpicture}[
    node distance=1.0cm and 1.0cm,
    >=Latex,
    font=\small,
    block/.style={draw, rounded corners, align=center, minimum width=2.0cm, minimum height=0.8cm, inner sep=3pt},
    smallblock/.style={draw, rounded corners, align=center, minimum width=1.7cm, minimum height=0.7cm, inner sep=3pt},
    line/.style={-Latex, thick}
]

\node[block] (arrival) {Task arrival\\ $X_t \in [K]$};

\node[block, right=of arrival] (chatbot) {Chatbot\\ cost $c_t$\\ success prob. $p_{X_t}c_t$};

\node[smallblock, above right=0.55cm and 0.8cm of chatbot] (success) {Resolved\\ automatically};

\node[block, below right=0.55cm and 0.8cm of chatbot] (queues) {Human queues\\ $(Q_{t,1},\dots,Q_{t,K})$};

\node[smallblock, right=of queues] (scheduler) {Scheduling\\ action $a_t$};

\node[block, right=of scheduler] (human) {Human agent\\ service prob. $\mu_{a_t}$};

\draw[line] (arrival) -- (chatbot);

\draw[line] (chatbot) -- node[above, sloped, font=\scriptsize] {success} (success);

\draw[line] (chatbot) -- node[above, sloped, font=\scriptsize] {failure} (queues);

\draw[line] (queues) -- (scheduler);

\draw[line] (scheduler) -- (human);

\end{tikzpicture}
}
\caption{Illustration of the human-chatbot service system.}
\label{fig:model}
\end{figure}

Tasks arrive sequentially over a finite time horizon $T$, which is assumed to be known in advance. At each time $t\in[T]$, a task of type $X_t$ enters the system, where
$$
X_t \in [K] := \{1,\ldots,K\}.
$$
Task types represent heterogeneous classes of requests, for instance different levels of difficulty, urgency, or complexity.

We assume that arrivals are independent and identically distributed over time according to an unknown distribution $\bm\lambda=(\lambda_1,\ldots,\lambda_K)$ over the set of task types. Thus, for every $k\in[K]$, $\lambda_k = \mathbb P(X_t=k)>0$, and $\sum_{k=1}^K \lambda_k=1$. The arrival distribution $\bm\lambda$ is not known to the decision maker.

\paragraph{Chatbot success probability.}
Upon arrival, each task is first routed to a chatbot. If the task arriving at time $t$ is of type $k$, the decision maker selects a cost level $c\in[0,1]$, representing the amount of resources allocated to the chatbot for handling that task. The chatbot then succeeds in resolving the task with probability $s_k(c)$, and fails otherwise. Equivalently, the chatbot outcome is a Bernoulli random variable with parameter $s_k(c)$.

We assume that the chatbot success probability is linear in the allocated cost. Namely, for each type $k\in[K]$, there exists an unknown parameter $p_k\in(0,1]$ such that
\begin{equation}\label{eq: form of the success proba}
    s_k(c)=p_k c, \qquad c\in[0,1].
\end{equation}
Thus, $p_k$ captures the intrinsic effectiveness of the chatbot on tasks of type $k$, while the control variable $c$ modulates the actual probability of automatic resolution.

\paragraph{Human service mechanism.}
Tasks that are not resolved by the chatbot are routed to a human service system. We assume that there exist unknown parameters $\{\mu_1,\ldots,\mu_K\}$ such that, whenever the human agent works on a task of type $k$, the task is completed by the end of the slot with probability $\mu_k\in(0,1]$. Equivalently, each slot devoted to a type-$k$ task produces an independent Bernoulli service attempt with success probability $\mu_k$. Hence, conditional on the task type $k$, the service time $L$ (measured in number of slots) follows a geometric distribution and satisfies
\begin{equation}
    \mathbb{E}[L \mid k] = \frac{1}{\mu_k}.
\end{equation}
To capture heterogeneity across task types, we consider $K$ separate queues, one for each type, and introduce a scheduling decision for the human server. At each slot $t$, the decision maker chooses an action
\begin{equation}
    a_t \in \{1,\ldots,K\}.
\end{equation}
If $a_t=k$ and the corresponding queue is nonempty, then the human agent serves one type-$k$ task during slot $t$. We assume that scheduling is \emph{preemptive}: the server may switch across task types from one slot to the next, interrupting the processing of a task and possibly returning to it later. This assumption is natural in the geometric service model, where each service attempt is memoryless and the probability of completion depends only on the type currently being served.

\paragraph{Queue dynamics.}
Let $\bm Q_t := (Q_{t,1},\ldots,Q_{t,K})$ denote the backlog vector at the beginning of slot $t$, where $Q_{t,k}\in\mathbb N$ is the number of pending type-$k$ tasks in the human queue. We assume that all queues are initially empty, namely $Q_{1,k}=0$ for all $k\in[K]$.

At each slot $t$, one task of type $X_t\in[K]$ arrives and is first routed to the chatbot. After the decision maker selects the chatbot cost $c_t\in[0,1]$, the task either leaves the system if the chatbot succeeds, or joins the human queue corresponding to its type if the chatbot fails. We define the per-type arrival indicators to the human queues as
\begin{equation}
    A_{t,k}
    :=
    \mathds{1}\{X_t=k\}\,
    \mathds{1}\{\text{the chatbot fails at slot }t\},
    \qquad k\in[K].
\end{equation}
Thus, at most one human queue receives an arrival in each slot. Moreover, conditional on $X_t=k$ and $c_t$, the expected arrival to queue $k$ is
\begin{equation}
    \mathbb E[A_{t,k}\mid X_t=k,c_t]=1-p_kc_t,
\end{equation}
while $A_{t,j}=0$ for all $j\neq k$.

The scheduling action $a_t\in[K]$ determines which queue is selected for service. If $a_t=k$ and queue $k$ is nonempty, the human agent attempts to complete one type-$k$ task during slot $t$. We define the departure indicators as
\begin{equation}
    D_{t,k}
    :=
    \mathds{1}\{a_t=k\}\,
    \mathds{1}\{Q_{t,k}>0\}\,
    Z_{t,k},
    \qquad k\in[K],
\end{equation}
where $Z_{t,k}\sim \mathcal{B}(\mu_k)$. Hence, conditional on the current backlog and on the scheduling decision,
\begin{equation}
    \mathbb E[D_{t,k}\mid \bm Q_t,a_t]
    =
    \mathds{1}\{a_t=k\}\mathds{1}\{Q_{t,k}>0\}\mu_k.
\end{equation}
In particular, $D_{t,k}=0$ whenever $a_t\neq k$ or $Q_{t,k}=0$.

The human queues then evolve according to
\begin{equation}\label{eq: queue dynamics}
    Q_{t+1,k}=Q_{t,k}+A_{t,k}-D_{t,k},
    \qquad k\in[K].
\end{equation}
Since $D_{t,k}\le \mathds{1}\{Q_{t,k}>0\}$, the queue lengths remain nonnegative at all times.

\begin{remark}
    Within each slot $t$, the decision maker first observes $(\mathbf{Q}_t,X_t)$ and chooses $(c_t,a_t)$; then the chatbot outcome and the service outcome are realized, yielding $A_{t,k}$ and $D_{t,k}$, and finally the queues are updated according to Equation \eqref{eq: queue dynamics}.
    Tasks forwarded to the human system during slot $t$ become available for service only from slot $t+1$ onward.
\end{remark}

Following \cite{Book_on_Queues}, we say that queue $k$ is \emph{mean rate stable} if its expected backlog grows sublinearly over time, namely if
\begin{equation}\label{eq: stability definition}
    \lim_{T\to\infty}\frac{\mathbb E[Q_{T,k}]}{T}=0.
\end{equation}
Equivalently, the queue does not accumulate unresolved tasks at a linear rate.

Figure \ref{fig:model} provides a schematic illustration of the model.

\subsection{Cost Objective and Regret}
A control policy specifies both the chatbot-cost decision and the human-service scheduling action over time. Formally, an admissible policy $\pi$ is a decision rule that, at each slot $t$, maps the current task type, the current backlog vector, and the past history into an action pair
\begin{equation}\label{eq: policy general form}
    \pi_t:\ (X_t,\bm Q_t^\pi,\mathcal F_{t-1})\mapsto (c_t^\pi,a_t^\pi),
\end{equation}
where $c_t^\pi\in[0,1]$ is the cost allocated to the chatbot and $a_t^\pi\in[K]$ is the queue selected for human service. Here $\mathcal F_{t-1}$ denotes the information available up to the end of slot $t-1$.

Given a horizon $T$, we evaluate a policy through the total cost
\begin{equation}\label{eq: cost definition}
    C^\pi_T := \sum_{t=1}^T c_t^\pi + \sum_{k=1}^K r_k\,Q_{T+1,k}^\pi,
\end{equation}
where $r_k\in[r_{\min},r_{\max}]$, with $0<r_{\min}<r_{\max}$, is the penalty weight associated with unfinished tasks of type $k$. The first term is the cumulative chatbot cost, while the second term penalizes the residual backlog at the end of the horizon. Thus, larger values of $r_k$ correspond to task classes for which leaving jobs unresolved is more costly.

Since the system parameters are unknown, the decision maker must learn them while controlling the system. We measure the performance of a policy through regret with respect to the best admissible policy that knows the true parameters in advance. Let $\Pi$ denote the class of admissible policies satisfying \eqref{eq: policy general form}. The regret of a policy $\pi\in\Pi$ is defined as
\begin{equation}\label{eq: regret definition}
    R_T^\pi := \mathbb E[C^\pi_T] - \inf_{\pi'\in\Pi}\mathbb E[C^{\pi'}_T].
\end{equation}
The expectation is taken with respect to the randomness of arrivals, chatbot outcomes, service completions, and any possible randomization of the policy. Regret therefore measures the excess expected cost incurred by $\pi$ relative to the best parameter-aware admissible policy over the same horizon.

\subsection{Lower Bound via Static Optimization Problem}
Characterizing the optimal policy in \eqref{eq: regret definition} is generally difficult, since such a policy may depend both on the true system parameters $\bm\theta = (\bm \lambda,\bm p,\bm \mu)$ and on the current backlog state $\bm Q_t$. To obtain a tractable benchmark, we introduce the following \emph{static} optimization problem:
\begin{equation}\label{eq: static opt problem}
\begin{aligned}
\text{OPT}(\bm\theta) = \min_{\substack{c_k \in [0,1] \\ \rho_k \ge 0}}
\quad & \sum_{k=1}^K \lambda_k c_k \\
\text{s.t.} \quad
&\lambda_k(1-p_k c_k)\le \mu_k \rho_k,\qquad \forall k\in[K], \\
& \sum_{k=1}^K \rho_k \le 1.
\end{aligned}
\end{equation}
Here, $c_k$ represents the chatbot cost that would be assigned to all tasks of type $k$ under a static type-dependent rule. The quantity $\lambda_k(1-p_kc_k)$ is the effective rate at which type-$k$ tasks are not resolved by the chatbot and therefore enter the human queue. The variable $\rho_k$ represents the fraction of human service capacity allocated to type $k$, so that $\mu_k\rho_k$ is the corresponding effective service capacity. The constraints impose that, for each type, the human service capacity is large enough to absorb the residual arrival rate, while the total allocated human capacity cannot exceed one.

The problem is static because the decision variables $(\bm c,\bm\rho)$ are fixed over time and do not depend on the instantaneous backlog of the system. It therefore provides a parameter-dependent benchmark, rather than a full dynamic policy.

We assume that the optimization problem \eqref{eq: static opt problem} is feasible. Equivalently, since the residual load is minimized by taking $c_k=1$ for all $k$, feasibility is guaranteed if and only if
$$
\sum_{k=1}^K \frac{\lambda_k(1-p_k)}{\mu_k}\le 1.
$$

The Lagrangian associated with the optimization problem \eqref{eq: static opt problem} is
\begin{equation}
\mathcal{L}(\bm c,\bm \rho;\bm y,\nu) = \sum_{k=1}^K \lambda_k c_k + \sum_{k=1}^K y_k\left(\lambda_k(1-p_kc_k)-\mu_k\rho_k\right) + \nu\left(\sum_{k=1}^K \rho_k-1\right),
\end{equation}
where $y_k\ge 0$ is the dual multiplier associated with the $k$-th balance constraint, and $\nu\ge 0$ is the dual multiplier associated with the total capacity constraint. Let $(\bm y^*,\nu^*)$ be an optimal dual solution.

We can now state a lower bound linking the static benchmark to the cost of the optimal admissible policy.

\begin{theorem}\label{thm: static lower bound}
Assume that the terminal backlog weights satisfy
$$
r_k \ge y_k^*, \qquad \forall k\in[K].
$$
Then, for every horizon $T\ge 1$,
$$
\inf_{\pi\in\Pi}\mathbb E[C^\pi_T] \geq T \cdot \text{OPT}(\bm \theta).
$$
\end{theorem}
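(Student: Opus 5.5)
Fix an arbitrary admissible policy $\pi$ and bound its expected cost from below with the Lagrangian. Weak duality gives, for every $\bm c\in[0,1]^K$ and $\bm\rho\ge 0$,
$$
\mathcal L(\bm c,\bm\rho;\bm y^*,\nu^*)\ge g(\bm y^*,\nu^*)=\text{OPT}(\bm\theta),
$$
where $g$ is the dual function. Strong duality holds because the problem is a feasible linear program. So the goal is to show that $\mathbb E[C^\pi_T]\ge T\cdot \mathcal L(\bar{\bm c},\bar{\bm\rho};\bm y^*,\nu^*)$ for suitable time-averaged empirical quantities $(\bar{\bm c},\bar{\bm\rho})$ induced by $\pi$.

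Define the averages as follows:
$$
\bar c_k:=\frac{\sum_t\mathbb E[c_t\mathds 1\{X_t=k\}]}{T\lambda_k}\in[0,1],\qquad
\bar\rho_k:=\frac1T\sum_t\mathbb P(a_t=k,\,Q_{t,k}>0)\ge 0.
$$
These satisfy $\sum_k\bar\rho_k\le 1$. The key observation is that $X_t$ is independent of $\mathcal F_{t-1}$ and $\bm Q_t$, so $\mathbb E[c_t]=\sum_k\mathbb E[c_t\mathds 1\{X_t=k\}]$. This gives
$$
\mathbb E\Big[\sum_t c_t\Big]=T\sum_k\lambda_k\bar c_k .
$$
Since $c_t$ is $\sigma(X_t,\bm Q_t,\mathcal F_{t-1})$-measurable, the tower rule applies to the arrival indicators:
$$
\mathbb E[A_{t,k}]=\mathbb E[\mathds 1\{X_t=k\}(1-p_kc_t)]=\lambda_k-p_k\mathbb E[c_t\mathds 1\{X_t=k\}].
$$
Similarly $\mathbb E[D_{t,k}]=\mu_k\mathbb P(a_t=k,Q_{t,k}>0)$, because $Z_{t,k}$ is independent of the decision. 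Telescoping the queue dynamics from $Q_{1,k}=0$ then gives
$$
\mathbb E[Q_{T+1,k}]=T\big(\lambda_k(1-p_k\bar c_k)-\mu_k\bar\rho_k\big).
$$

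Combining these identities:
$$
\mathbb E[C^\pi_T]=T\sum_k\lambda_k\bar c_k+\sum_k r_k\mathbb E[Q_{T+1,k}]\ge T\sum_k\lambda_k\bar c_k+\sum_k y^*_k\mathbb E[Q_{T+1,k}].
$$
The inequality uses $r_k\ge y_k^*$ together with $\mathbb E[Q_{T+1,k}]\ge0$; this is the only place the hypothesis enters, and it is exactly why nonnegativity of the queues matters. Substituting the telescoped expression, the right-hand side equals
$$
T\Big[\sum_k\lambda_k\bar c_k+\sum_k y_k^*\big(\lambda_k(1-p_k\bar c_k)-\mu_k\bar\rho_k\big)\Big]\ge T\,\mathcal L(\bar{\bm c},\bar{\bm\rho};\bm y^*,\nu^*),
$$
where the last step holds because $\nu^*(\sum_k\bar\rho_k-1)\le 0$. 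Weak duality then yields $\ge T\cdot\text{OPT}(\bm\theta)$. Taking the infimum over $\pi$ concludes the proof.

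The main obstacle is the measurability and independence bookkeeping that justifies replacing expectations of the products $c_t\mathds 1\{X_t=k\}$ and $D_{t,k}$ by the averaged variables. In particular, one must check that a policy's possible randomization and its dependence on $\bm Q_t$ do not break the factorization. I would handle this by conditioning on $(X_t,\bm Q_t,\mathcal F_{t-1})$ and using the fact that the chatbot and service outcomes are fresh Bernoulli draws given the decision. A minor point is the definition of $\bar c_k$, which uses $\lambda_k>0$.
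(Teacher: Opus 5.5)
Your proposal is correct and follows essentially the same route as the paper: the same averaged quantities $(\bar{\bm c},\bar{\bm\rho})$, the same telescoped balance identity $\mathbb E[Q_{T+1,k}]=T\big(\lambda_k(1-p_k\bar c_k)-\mu_k\bar\rho_k\big)$, and the same evaluation of the Lagrangian at $(\bm y^*,\nu^*)$ combined with $r_k\ge y_k^*$ and $\mathbb E[Q_{T+1,k}]\ge 0$; the only difference is that you replace $r_k$ by $y_k^*$ before invoking the Lagrangian, whereas the paper does so after. One small remark: the identity $\mathbb E[c_t]=\sum_k\mathbb E[c_t\mathds 1\{X_t=k\}]$ holds trivially because the indicators sum to one, so it does not rely on the independence of $X_t$ from the past, as you suggest it does.
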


The multiplier $y_k^*$ can be interpreted as the shadow congestion cost of an additional unit of residual type-$k$ load. The condition $r_k\ge y_k^*$ therefore requires the terminal penalty to be large enough to internalize the congestion cost captured by the static benchmark. In particular, it rules out settings in which residual backlog is underpriced relative to the marginal value of service capacity.

Let $(\bm{c}^*, \bm{\rho}^*)$ denote an optimal solution to the static optimization problem \eqref{eq: static opt problem}. The following proposition is a direct consequence of Theorem~\ref{thm: static lower bound}.

\begin{proposition}\label{propos: first regret bound}
    For any policy $\pi$, the regret $R_T^\pi$ satisfies
    $$
    R_T^\pi \leq \sum_{t=1}^T\mathbb{E}[c_t^\pi-c_{X_t}^*]+ \sum_{k=1}^K r_k \, \mathbb{E}[Q_{T+1,k}^\pi].
    $$
\end{proposition}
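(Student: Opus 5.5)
The plan is to combine the definition of regret in \eqref{eq: regret definition} with the lower bound of Theorem~\ref{thm: static lower bound}, and then to rewrite $T\cdot\text{OPT}(\bm\theta)$ as an expectation over the arriving types. We work under the standing assumption $r_k\ge y_k^*$ of Theorem~\ref{thm: static lower bound}, since this is what makes the proposition a direct consequence of it.

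\textbf{Step 1: bound the benchmark.} By Theorem~\ref{thm: static lower bound}, $\inf_{\pi'\in\Pi}\mathbb E[C^{\pi'}_T]\ge T\cdot\text{OPT}(\bm\theta)$. Substituting this into \eqref{eq: regret definition} gives
\begin{equation*}
R_T^\pi \le \mathbb E[C_T^\pi] - T\cdot \text{OPT}(\bm\theta).
\end{equation*}

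\textbf{Step 2: expand the policy's cost.} By \eqref{eq: cost definition} and linearity of expectation,
\begin{equation*}
\mathbb E[C_T^\pi]=\sum_{t=1}^T \mathbb E[c_t^\pi]+\sum_{k=1}^K r_k\,\mathbb E[Q_{T+1,k}^\pi].
\end{equation*}

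\textbf{Step 3: rewrite the static optimum as an expectation.} Since $(\bm c^*,\bm\rho^*)$ is an optimal solution of \eqref{eq: static opt problem}, we have $\text{OPT}(\bm\theta)=\sum_k \lambda_k c_k^*$. Because the $X_t$ are i.i.d. with law $\bm\lambda$, this equals $\mathbb E[c^*_{X_t}]$ for every $t$. Summing over $t$ gives $T\cdot\text{OPT}(\bm\theta)=\sum_{t=1}^T\mathbb E[c^*_{X_t}]$. This identity holds regardless of the policy, because $c^*$ is a fixed deterministic vector and the arrival process is exogenous: the law of $X_t$ is not influenced by $\pi$.

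\textbf{Step 4: combine.} Inserting Steps 2 and 3 into Step 1 and grouping the per-slot terms gives $\sum_t \mathbb E[c_t^\pi-c^*_{X_t}]+\sum_k r_k\mathbb E[Q_{T+1,k}^\pi]$, which is the claimed bound.

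There is essentially no obstacle here. The only point that needs care is Step 3: one must check that the arrivals $X_t$ under policy $\pi$ still have marginal law $\bm\lambda$, which holds by the modeling assumption that arrivals are i.i.d. and independent of the controls. One should also note that the hypothesis $r_k\ge y_k^*$ is inherited from Theorem~\ref{thm: static lower bound}.
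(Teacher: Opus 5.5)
Your proposal is correct and matches the paper's proof step for step. The paper also substitutes the lower bound of Theorem~\ref{thm: static lower bound} into the regret definition and uses i.i.d.\ arrivals to write $T\cdot\text{OPT}(\bm\theta)=\sum_{t=1}^T\mathbb{E}[c^*_{X_t}]$; your remark that the hypothesis $r_k\ge y_k^*$ is inherited from that theorem is a condition the paper's statement leaves implicit.
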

The proofs of this section are in Appendix \ref{app:proof_static_lower_bound}.

\section{\texttt{UCB-DPP} Policy}
The goal is to design an online policy that jointly chooses, at each slot $t$, the chatbot cost $c_t\in[0,1]$ and the human-service scheduling action $a_t\in[K]$. The policy uses the current backlog vector $\bm Q_t$ together with the information collected about the unknown chatbot success probabilities $\bm p=(p_1,\ldots,p_K)$ and human service rates $\bm\mu=(\mu_1,\ldots,\mu_K)$. Its objective is to minimize cumulative chatbot costs while keeping the human queues stable in the sense of Equation~\eqref{eq: stability definition}. Throughout this section, we omit the superscript $\pi$ from all policy-dependent quantities.

\paragraph{Estimation of $\bm\mu$.}
Whenever the human agent serves a nonempty queue of type $k$, the service attempt succeeds with probability $\mu_k$. Thus, each effective service attempt on type $k$ produces a Bernoulli observation. For each $k\in[K]$, define
\begin{equation}\label{eq: service counters}
    N_k(t) := \sum_{s=1}^t \mathds{1}\{a_s = k,\; Q_{s,k}>0\},
    \qquad
    S_k(t) := \sum_{s=1}^t D_{s,k}.
\end{equation}
Here, $N_k(t)$ is the number of effective service attempts on type $k$ up to time $t$, while $S_k(t)$ is the number of successful completions of type-$k$ tasks. We set $N_k(0)=S_k(0)=0$. The empirical estimator of $\mu_k$ is
\begin{equation}\label{eq: estimator of mu}
\hat{\mu}_k(t) :=
\begin{cases}
0, & \text{if } N_k(t)=0,\\[1mm]
\dfrac{S_k(t)}{N_k(t)}, & \text{if } N_k(t)\ge 1.
\end{cases}
\end{equation}
Given a confidence radius $\beta_k^\mu(t)\ge 0$, defined below, we use the optimistic estimate
\begin{equation}\label{eq: mu_ucb}
    \bar{\mu}_k(t) := \min\left\{1,\hat{\mu}_k(t)+\beta_k^\mu(t)\right\}.
\end{equation}

\paragraph{Estimation of $\bm p$.}
When a task of type $k$ arrives at slot $t$ and the chatbot cost $c_t$ is chosen, the chatbot succeeds with probability $p_kc_t$. Let $Y_t := \mathbf{1}\{\text{chatbot succeeds at slot } t\}$. Conditionally on $(X_t=k,c_t)$,
$$
\mathbb{E}[Y_t \mid X_t=k,c_t] = p_k c_t.
$$
To estimate $p_k$, for each $k\in[K]$ we define
\begin{equation}\label{eq: chatbot counters}
    M_k(t) := \sum_{s=1}^t c_s \mathds{1}\{X_s=k\},
    \qquad
    G_k(t) := \sum_{s=1}^t Y_s \mathds{1}\{X_s=k\}.
\end{equation}
Here, $M_k(t)$ is the cumulative exposure of type $k$ to the chatbot, while $G_k(t)$ is the cumulative number of chatbot successes on type $k$. As shown below, the \texttt{UCB-DPP} cost decision is threshold-based and therefore satisfies $c_t\in\{0,1\}$. Hence $M_k(t)$ counts the number of effective chatbot trials on type $k$. We set $M_k(0)=G_k(0)=0$.
The empirical estimator of $p_k$ is
\begin{equation}\label{eq: estimator of p}
\hat p_k(t):=
\begin{cases}
0, & \text{if } M_k(t)=0,\\[1mm]
\dfrac{G_k(t)}{M_k(t)}, & \text{if } M_k(t)>0.
\end{cases}
\end{equation}
Given a confidence radius $\beta_k^p(t)\ge 0$, we define
\begin{equation}\label{eq: p_ucb}
    \bar p_k(t):=\min\left\{1,\hat p_k(t)+\beta_k^p(t)\right\}.
\end{equation}

\paragraph{Drift-Plus-Penalty principle.}
We use a weighted quadratic Lyapunov function,
\begin{equation}\label{eq: lyapunov function}
    \Phi(\bm{Q}_t)=\frac12\sum_{k=1}^K r_k Q_{t,k}^2.
\end{equation}
Let us define the conditional drift as
\begin{equation}\label{eq: drift}
    \Delta_t = \mathbb{E}\big[\Phi(\bm{Q}_{t+1})-\Phi(\bm{Q}_t)\mid \mathcal{F}_t\big].
\end{equation}

The following drift bound follows directly from the queue dynamics.
\begin{lemma}[Drift Upper Bound] \label{lemma: bound drift DPP preemptive}
For all $t \in [T]$, it holds that
$$
\Delta_t \le r_{\max} + \sum_{k=1}^K r_k Q_{t,k}\,\mathbb{E}[A_{t,k}-D_{t,k}\mid \mathcal{F}_t].
$$
\end{lemma}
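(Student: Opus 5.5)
The plan is to expand the weighted quadratic Lyapunov function one step using the queue dynamics \eqref{eq: queue dynamics}. For each $k$, writing $Q_{t+1,k}=Q_{t,k}+A_{t,k}-D_{t,k}$, we have
\begin{equation*}
Q_{t+1,k}^2-Q_{t,k}^2 = 2Q_{t,k}(A_{t,k}-D_{t,k}) + (A_{t,k}-D_{t,k})^2 .
\end{equation*}
Multiplying by $r_k/2$ and summing over $k$ gives
\begin{equation*}
\Phi(\bm Q_{t+1})-\Phi(\bm Q_t)=\sum_{k=1}^K r_k Q_{t,k}(A_{t,k}-D_{t,k})+\frac12\sum_{k=1}^K r_k (A_{t,k}-D_{t,k})^2 .
\end{equation*}
Taking conditional expectation given $\mathcal F_t$ then yields the linear term in the statement, since $Q_{t,k}$ is $\mathcal F_t$-measurable (the backlog at the beginning of slot $t$ is known when decisions are made, per the Remark). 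It remains to bound the quadratic remainder by $r_{\max}$.

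For the remainder, the key observation is structural. At most one queue receives an arrival in a slot, namely $k=X_t$. Likewise, at most one queue has a departure, namely $k=a_t$, because $D_{t,k}$ contains $\mathds 1\{a_t=k\}$. Both $A_{t,k}$ and $D_{t,k}$ lie in $\{0,1\}$, so $(A_{t,k}-D_{t,k})^2\le A_{t,k}+D_{t,k}$. Hence
\begin{equation*}
\frac12\sum_k r_k(A_{t,k}-D_{t,k})^2\le \frac12\Big(\sum_k r_kA_{t,k}+\sum_k r_kD_{t,k}\Big)\le \frac12(r_{\max}+r_{\max})=r_{\max}.
\end{equation*}
This bound holds pointwise, so it survives the conditional expectation.

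There is no real obstacle here. The only points requiring care are two measurability facts: that $\bm Q_t$ is known at the time the conditional expectation is taken, which justifies pulling $r_kQ_{t,k}$ outside it, and that $\mathcal F_t$ is interpreted as the information at the start of slot $t$ (including $\bm Q_t$, and possibly $X_t$ and the decisions). The bound is stated with the conditional expectation of $A_{t,k}-D_{t,k}$ left unevaluated, so the exact choice of conditioning does not affect the inequality.
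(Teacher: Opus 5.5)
Your proof is correct and follows the paper's argument. You expand $Q_{t+1,k}^2$ via the queue dynamics, bound the quadratic remainder pointwise by $r_{\max}$ using that at most one arrival and at most one departure occur per slot, and take conditional expectation with $Q_{t,k}$ being $\mathcal F_t$-measurable; the only cosmetic difference is that you use $(A_{t,k}-D_{t,k})^2\le A_{t,k}+D_{t,k}$, whereas the paper notes that the vector $(A_{t,k}-D_{t,k})_k$ has at most two nonzero entries of magnitude at most one.
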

The proof is in Appendix \ref{app: UCB-DPP policy}.

Following the Drift-Plus-Penalty methodology, the control actions are chosen by minimizing an upper bound on
\begin{equation}\label{eq: original DPP minimization}
    \Delta_t + V c_t,
\end{equation}
where the parameter $V>0$ balances instantaneous chatbot costs against queue growth. Larger values of $V$ put more weight on cost minimization, while smaller values prioritize congestion control.

Using Lemma \ref{lemma: bound drift DPP preemptive}, we have
\begin{equation}\label{eq: objective DPP preemptive setting}
    \Delta_t+Vc_t \leq r_{\max}+\sum_{k=1}^K r_k Q_{t,k}\,\mathbb{E}[A_{t,k}-D_{t,k}\mid \mathcal{F}_t]+Vc_t.
\end{equation}
Substituting the conditional expectations of the arrival and departure processes, and dropping the constant term $r_{\max}$, yields the surrogate objective
\begin{equation}\label{eq: surrogate objective}
    \Psi_t(c,a):=Vc + r_{X_t} Q_{t,X_t}(1-p_{X_t}c) - r_a Q_{t,a}\mu_a.
\end{equation}

Since the parameters $\bm p$ and $\bm\mu$ are unknown, the \texttt{UCB-DPP} policy replaces them with their optimistic estimates and minimizes
\begin{equation}\label{eq: surrogate objective both unknown}
    \widetilde{\Psi}_t(c,a):=Vc + r_{X_t} Q_{t,X_t}(1-\bar p_{X_t}(t-1)c) - r_a Q_{t,a}\bar{\mu}_a(t-1).
\end{equation}
Thus, at each slot $t$,
\begin{equation}\label{eq: argmin surrogate both unknown}
(c_t,a_t)\in \arg\min_{c\in[0,1],\, a\in[K]} \widetilde{\Psi}_t(c,a).
\end{equation}

Since $\widetilde\Psi_t$ is separable in $c$ and $a$ and linear in $c$, the policy admits the explicit form
$$
c_t=
\begin{cases}
1, & r_{X_t}\bar p_{X_t}(t-1)Q_{t,X_t}\ge V,\\
0, & \text{otherwise,}
\end{cases}
\qquad
a_t\in\arg\max_{j\in[K]} r_jQ_{t,j}\bar\mu_j(t-1).
$$
If all queues are empty, the scheduling decision is irrelevant. 

Thus, the chatbot is activated when its weighted optimistic benefit in reducing future human backlog exceeds the cost threshold $V$, while the human server is assigned to the queue with the largest optimistic weighted service potential, as in a MaxWeight rule.

The threshold structure of the chatbot decision follows from the linear form $s_k(c)=p_kc$: since the surrogate objective is linear in $c$, the chatbot cost always takes values in $\{0,1\}$, corresponding to an \emph{on/off} decision. Learning enters the policy through two UCB mechanisms \cite{bandits2}: one for the chatbot success probabilities and one for the human service rates. The former affects the arrival process into the human queues, while the latter affects the scheduling of human service capacity.

A complete pseudocode description of the \texttt{UCB-DPP} policy is provided in Appendix \ref{app: UCB-DPP policy}.

\section{Regret Bound}
We now show that the regret of the \texttt{UCB-DPP} policy grows sublinearly with the horizon $T$. As a consequence of the backlog estimates used in the analysis, we also obtain mean-rate stability of the human-service queues.

The first step is to decompose the regret into terms that can be controlled separately. For every $t\in[T]$, define the \emph{good event} up to time $t$ as
\begin{equation}\label{eq: good event}
\mathscr{G}_t = \bigcap_{s=1}^t \bigcap_{k=1}^K \left( \left\{|\mu_k-\hat{\mu}_k(s-1)|\le \beta_k^\mu(s-1) \right\} \cap \left\{ |p_k-\hat p_k(s-1)|\le \beta_k^p(s-1) \right\} \right).
\end{equation}
On this event, all empirical estimates of the chatbot success probabilities and human service rates remain within their confidence intervals uniformly over all types and all times up to $t$.

\begin{lemma}[Regret decomposition]\label{lemma: regret decomposition}
The regret of the \texttt{UCB-DPP} policy satisfies
$$
R_T^{\texttt{UCB-DPP}} \leq R_1^p(T)+R_1^\mu(T)+R_2(T)+R_3(T)+R_4(T),
$$
where
\begin{itemize}
    \item $R_1^p(T) := \frac{1}{V} \mathbb E\!\left[ \sum_{t=1}^T r_{X_t}Q_{t,X_t}c_t\left(\bar p_{X_t}(t-1)-p_{X_t}\right)\mathds{1}\{\mathscr G_t\} \right]$,
    \item $R_1^\mu(T):=\frac{1}{V}\mathbb E\!\left[\sum_{t=1}^Tr_{a_t}Q_{t,a_t}\left(\bar\mu_{a_t}(t-1)-\mu_{a_t}\right)\mathds{1}\{\mathscr G_t\}\right]$,
    \item $R_2(T) := T\,\mathbb P(\overline{\mathscr G}_T)$,
    \item $R_3(T):=\frac{r_{\max}T}{V}$,
    \item $R_4(T):=\sum_{k=1}^K r_k\,\mathbb E[Q_{T+1,k}]$.
\end{itemize}
\end{lemma}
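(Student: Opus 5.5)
The plan is to start from Proposition~\ref{propos: first regret bound}. It already produces the terminal-backlog term $R_4(T)$, so it remains to show
$$
\sum_{t=1}^T \mathbb E[c_t-c^*_{X_t}] \le R_1^p(T)+R_1^\mu(T)+R_2(T)+R_3(T).
$$
I would split each summand as $(c_t-c^*_{X_t})\mathds 1\{\mathscr G_t\}+(c_t-c^*_{X_t})\mathds 1\{\overline{\mathscr G}_t\}$. Since $c_t-c^*_{X_t}\le 1$ and $\overline{\mathscr G}_t\subseteq\overline{\mathscr G}_T$ (the good events are nested and decreasing), the second part summed over $t$ is at most $T\,\mathbb P(\overline{\mathscr G}_T)=R_2(T)$. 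The good-event part is handled by a drift-plus-penalty comparison with the static optimum.

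\textbf{Per-slot comparison on $\mathscr G_t$.} Note that $\mathscr G_t$ is $\mathcal F_{t-1}$-measurable. On $\mathscr G_t$ we have $\bar p_k(t-1)\ge p_k$ and $\bar\mu_k(t-1)\ge\mu_k$, so optimism gives $\widetilde\Psi_t(c,a)\le\Psi_t(c,a)$ for every $(c,a)$. Writing the true surrogate at the chosen action as
$$
\Psi_t(c_t,a_t)=\widetilde\Psi_t(c_t,a_t)+r_{X_t}Q_{t,X_t}c_t(\bar p_{X_t}-p_{X_t})+r_{a_t}Q_{t,a_t}(\bar\mu_{a_t}-\mu_{a_t}),
$$
and using that $(c_t,a_t)$ minimizes $\widetilde\Psi_t$, I get
$$
\Psi_t(c_t,a_t)\le\widetilde\Psi_t(c^*_{X_t},a)+\text{err}_t\le\Psi_t(c^*_{X_t},a)+\text{err}_t
$$
for any $a$, where $\text{err}_t$ denotes the two estimation-error terms. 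I then average over $a\sim\bm\rho^*$ (normalized; any slack in $\sum_k\rho^*_k\le1$ only helps) and over $X_t\sim\bm\lambda$, which is independent of $\mathcal F_{t-1}$ and $\bm Q_t$. The queue part of the comparator becomes
$$
\sum_k r_kQ_{t,k}\bigl(\lambda_k(1-p_kc^*_k)-\mu_k\rho^*_k\bigr)\le 0
$$
by feasibility of $(\bm c^*,\bm\rho^*)$. The indicator $\mathds 1\{Q_{t,a}>0\}$ in the departures causes no issue, because the corresponding term carries the factor $Q_{t,a}$. Combining this with Lemma~\ref{lemma: bound drift DPP preemptive} yields, on $\mathscr G_t$,
$$
\Delta_t+Vc_t\le r_{\max}+V\,\mathbb E[c^*_{X_t}\mid\mathcal F_{t-1}]+\mathbb E[\text{err}_t\mid \mathcal F_{t-1}].
$$

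\textbf{Summing and telescoping.} I multiply by $\mathds 1\{\mathscr G_t\}$, take expectations, sum over $t$ and divide by $V$. The error terms give exactly $R_1^p(T)+R_1^\mu(T)$, and the constant gives $r_{\max}T/V=R_3(T)$. The remaining task is to show that
$$
\sum_t\mathbb E\bigl[(\Phi(\bm Q_{t+1})-\Phi(\bm Q_t))\mathds 1\{\mathscr G_t\}\bigr]\ge 0,
$$
so that the drift can be dropped. I expect this to be the main obstacle, since the drift is only controlled on the good event and a naive telescoping of the full sum does not apply. The way around it is to let $\tau$ be the first time the good event fails. Because $\mathds 1\{\mathscr G_t\}=\mathds 1\{t<\tau\}$ is nonincreasing and predictable, the sum telescopes to $\mathbb E[\Phi(\bm Q_{\tau\wedge(T+1)})]-\Phi(\bm Q_1)=\mathbb E[\Phi(\bm Q_{\tau\wedge(T+1)})]\ge0$, using $\bm Q_1=0$ and $\Phi\ge0$. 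Adding back the bad-event bound $R_2(T)$ and the term $R_4(T)$ from Proposition~\ref{propos: first regret bound} completes the decomposition.
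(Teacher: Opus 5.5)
Your proposal is correct and follows essentially the same route as the paper: the bad-event split giving $R_2(T)$, comparison of the optimistic surrogate against $(c^*_{X_t},\tilde a)$ with $\tilde a\sim\bm\rho^*$ combined with optimism and feasibility on $\mathscr G_t$, and dropping the drift because $\sum_t\mathds 1\{\mathscr G_t\}(\Phi(\bm Q_{t+1})-\Phi(\bm Q_t))$ telescopes to a nonnegative quantity (the paper states your stopping-time argument via nestedness of the $\mathscr G_t$). Two small points: your displayed per-slot inequality should also carry $\mathbb E[\,\cdot\mid\mathcal F_{t-1}]$ on the left-hand side, and your normalization of $\bm\rho^*$ handles the case $\sum_k\rho_k^*<1$ slightly more carefully than the paper does.
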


We now bound the terms appearing in Lemma~\ref{lemma: regret decomposition}.

\paragraph{Control of the bad event.}
We first control the probability of the complement of the good event. Fix $\delta\in(0,1)$, whose value will be specified later. We set $\beta_k^\mu(0):=1$, $\beta_k^p(0):=1$ and, for every $t\ge 1$, define

\begin{equation}\label{eq: confidence radii} 
\begin{aligned} 
\beta_k^\mu(t) &= \begin{cases} 1, & \text{if } N_k(t)=0,\\[-1mm] \sqrt{\dfrac{\log(4Kt/\delta)}{2N_k(t)}}, & \text{if } N_k(t)\ge 1, 
\end{cases} \qquad 
\beta_k^p(t) &= \begin{cases} 1, & \text{if } M_k(t)=0,\\[-1mm] \sqrt{\dfrac{\log(4Kt/\delta)}{2M_k(t)}}, & \text{if } M_k(t)\ge 1. 
\end{cases} 
\end{aligned} 
\end{equation}

\begin{proposition}\label{propos: bound on bad event proba}
The event $\mathscr{G}_T$ satisfies
$$
\mathbb{P}(\overline{\mathscr{G}}_T) \leq \delta T.
$$
\end{proposition}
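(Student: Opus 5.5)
We bound $\mathbb P(\overline{\mathscr G}_T)$ by a union bound over times $s\in[T]$, types $k\in[K]$, and the two families of confidence events ($\mu$ and $p$). It is then enough to show that, for every fixed $s$ and $k$,
$$
\mathbb P\big(|\mu_k-\hat\mu_k(s-1)|>\beta_k^\mu(s-1)\big)\le \frac{\delta}{2K},
\qquad
\mathbb P\big(|p_k-\hat p_k(s-1)|>\beta_k^p(s-1)\big)\le \frac{\delta}{2K}.
$$
Summing these over $k$, over the two families, and over $s\le T$ gives $\delta T$.

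The trivial cases come first. For $s=1$, or whenever the relevant counter is zero, the radius equals $1$ and the estimate equals $0$, so the deviation is $|\mu_k|\le 1$ or $|p_k|\le 1$. The bad event is then empty.

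Now fix $t=s-1\ge 1$ and treat $\mu_k$; the case of $p_k$ is identical. The counter $N_k(t)$ is random and depends on past outcomes through the queue-aware policy. We therefore use the standard stack-of-rewards (skeleton) construction. Let $\{Z^{(k)}_j\}_{j\ge1}$ be i.i.d.\ $\mathcal B(\mu_k)$ variables, where the $j$-th effective service attempt on type $k$ uses $Z^{(k)}_j$. This coupling leaves the law of the whole process unchanged, because each effective attempt is a fresh Bernoulli$(\mu_k)$ draw, independent of the past. On $\{N_k(t)=n\}$, the estimate $\hat\mu_k(t)$ equals the empirical mean $\bar Z^{(k)}_n$ of the first $n$ draws.

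Since $N_k(t)\le t$, we get
$$
\mathbb P\big(|\hat\mu_k(t)-\mu_k|>\beta_k^\mu(t),\,N_k(t)\ge1\big)\le\sum_{n=1}^{t}\mathbb P\Big(|\bar Z^{(k)}_n-\mu_k|>\sqrt{\tfrac{\log(4Kt/\delta)}{2n}}\Big)\le \sum_{n=1}^t 2\exp\big(-\log(4Kt/\delta)\big)=\frac{\delta}{2K}.
$$
The second inequality is Hoeffding's inequality applied to the fixed-size sample.

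For $p_k$, the policy uses $c_t\in\{0,1\}$. Hence $M_k(t)$ counts the trials with $X_s=k$ and $c_s=1$, and each such trial yields an independent Bernoulli$(p_k)$ outcome $Y_s$. The same skeleton argument with $n\le t$ applies. Note that $\log(4Kt/\delta)$ is exactly calibrated so that the union over $n\le t$ costs a factor $t$, which cancels.

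The main obstacle is justifying the skeleton coupling when the sample sizes are random and adaptively chosen. The key point is that the decision to make an attempt is $\mathcal F$-measurable before the outcome is drawn, so the outcomes of successive effective attempts are i.i.d. An alternative route is a martingale argument: the sum of $(Z-\mu_k)$ over effective attempts is a martingale with bounded increments, and Azuma--Hoeffding applied to each fixed count $n$ via optional stopping gives the same result.

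Everything else is bookkeeping. Summing $2\cdot K\cdot \delta/(2K)=\delta$ per time step over $s=1,\dots,T$ yields $\mathbb P(\overline{\mathscr G}_T)\le\delta T$.
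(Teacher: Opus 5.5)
Your proposal is correct and takes essentially the same route as the paper. The paper also splits $\overline{\mathscr G}_T$ into the service-side and chatbot-side families and writes the estimator on $\{N_k(s-1)=n\}$ (resp.\ $\{M_k(s-1)=n\}$) as the mean of the first $n$ i.i.d.\ outcomes at the effective-attempt times. It then union-bounds over $n\le s-1$ with Hoeffding giving $\delta/(2K(s-1))$ per term, and finally takes a union over $s$ and $k$. The only differences are minor: the paper samples the outcomes at the stopping times $\tau_{n,k}$ and asserts they are i.i.d.\ rather than building an explicit skeleton coupling, and it discards the trivial slot $s=1$ to get the slightly sharper $\delta(T-1)\le\delta T$.
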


\paragraph{Control of the terminal backlog.}
We next control the backlog contribution $R_4(T)$. Define
\begin{equation}\label{eq: definition Z_t}
    Z_t:=\sqrt{2\Phi(\bm Q_t)} = \left(\sum_{k=1}^K r_kQ_{t,k}^2\right)^{1/2}.
\end{equation}

\begin{lemma}\label{lemma:deterministic_weighted_backlog}
Assume that there exist $\varepsilon>0$ and a strictly feasible point $(\bm c^\varepsilon,\bm \rho^\varepsilon)$ such that
$$
\lambda_k(1-p_kc_k^\varepsilon)+\varepsilon\le \mu_k\rho_k^\varepsilon, \; \forall k\in[K] \;\;\; \text{and} \;\;\; \sum_{k=1}^K \rho_k^\varepsilon\le 1.
$$
Then 
$$ 
\mathbb E[Z_{T+1}\mathds{1}\{\mathscr G_T\}] \leq \mathcal{O}(V + K \log (T/\delta)).
$$
\end{lemma}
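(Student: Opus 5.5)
We follow the standard drift argument, but run it on the norm $Z_t$ rather than on $\Phi$. The target is a negative drift of order $\varepsilon$ once $Z_t$ exceeds a threshold of order $V$ plus a learning term. From this we extract a bound on $\mathbb E[Z_{T+1}\mathds 1\{\mathscr G_T\}]$ that does not grow with $T$ except through logarithms.

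\emph{Step 1: one-step comparison with the strictly feasible point.}
On $\mathscr G_t$ we have $\bar p_k(t-1)\ge p_k$ and $\bar\mu_k(t-1)\ge\mu_k$. The pair $(c_t,a_t)$ minimizes $\widetilde\Psi_t$, so we compare it with a randomized action: chatbot cost $c^\varepsilon_{X_t}$, and $a$ drawn with probabilities $\rho^\varepsilon$. This gives
\begin{equation*}
\sum_k r_kQ_{t,k}\,\mathbb E[A_{t,k}-D_{t,k}\mid\mathcal F_t]
\le V - \varepsilon\sum_k r_kQ_{t,k} + E_t,
\end{equation*}
where $E_t := r_{X_t}Q_{t,X_t}c_t(\bar p_{X_t}-p_{X_t}) + r_{a_t}Q_{t,a_t}(\bar\mu_{a_t}-\mu_{a_t})$ is the optimism error. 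The derivation takes three moves. First, replace the true $p,\mu$ in the drift by $\bar p,\bar\mu$, paying $E_t$. Second, use minimality of $\widetilde\Psi_t$ against the randomized action. Third, use optimism ($\bar p\ge p$, $\bar\mu\ge\mu$) to pass back to the true parameters for the comparator, whose expected value is at most $\sum_k r_kQ_{t,k}(\lambda_k(1-p_kc_k^\varepsilon)-\mu_k\rho_k^\varepsilon)\le-\varepsilon\sum_k r_kQ_{t,k}$. The indicator $\mathds 1\{Q_{t,k}>0\}$ in $D$ is harmless because it multiplies $Q_{t,k}$. Combining with Lemma~\ref{lemma: bound drift DPP preemptive} and $\sum_k r_kQ_{t,k}\ge Z_t\sqrt{r_{\min}}$ yields
\begin{equation*}
\Delta_t\le r_{\max}+V+E_t-\varepsilon\sqrt{r_{\min}}\,Z_t .
\end{equation*}

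\emph{Step 2: from $\Phi$-drift to $Z$-drift.}
Each queue changes by at most $1$ per slot, so $|Z_{t+1}-Z_t|\le\sqrt{r_{\max}}$, and therefore
\[
Z_{t+1}^2-Z_t^2\ge 2Z_t(Z_{t+1}-Z_t)-O(r_{\max}).
\]
Using this, or Hajek-type arguments as in \cite{Book_on_Queues}, we obtain
\[
\mathbb E[Z_{t+1}-Z_t\mid\mathcal F_t]\le -\tfrac{\varepsilon\sqrt{r_{\min}}}{2}
\quad\text{whenever}\quad Z_t\ge \tfrac{C(V+r_{\max}+E_t)}{\varepsilon}.
\]

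\emph{Step 3: controlling $E_t$ (the main obstacle).}
The error $E_t$ is multiplied by $Q$, so it cannot simply be treated as a bounded perturbation. I would normalize it as $E_t/Z_t\le \sqrt{r_{\max}}\,\big(c_t(\bar p_{X_t}-p_{X_t})+(\bar\mu_{a_t}-\mu_{a_t})\big)$. On $\mathscr G_t$ this is at most $2\sqrt{r_{\max}}\,(\beta^p_{X_t}(t-1)c_t+\beta^\mu_{a_t}(t-1))$, and this bound only charges rounds that generate a new observation ($c_t=1$, or an effective service of $a_t$).

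The drift inequality then reads
\[
\mathbb E[Z_{t+1}-Z_t\mid\mathcal F_t]\le \frac{V+r_{\max}}{Z_t}+O(1)\,b_t-\varepsilon',
\]
where $b_t$ is the confidence radius used at round $t$. The radius $b_t$ exceeds $\varepsilon'/2$ only while the corresponding counter satisfies $N_k,M_k\lesssim\log(KT/\delta)/\varepsilon^2$. Summed over $k$ and both estimators, there are only $O(K\log(T/\delta))$ such "uninformed" rounds. During them $Z$ can grow by at most $\sqrt{r_{\max}}$ per round, so their total contribution to $Z$ is $O(K\log(T/\delta))$. Every other round has strictly negative drift whenever $Z_t\gtrsim V/\varepsilon$.

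Rounds with $Q_{t,a_t}=0$, where no observation is made, contribute no error from the service term. I expect the careful bookkeeping here to be the delicate part: the error is attached to $a_t$, which may differ from the queue actually being pushed down.

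\emph{Step 4: conclusion.}
A multiplicative-weights/supermartingale argument, or simply summing the drift over excursions above the threshold, gives $\mathbb E[Z_{T+1}\mathds 1\{\mathscr G_T\}]\le O(V)+O(K\log(T/\delta))$, with constants depending on $\varepsilon,r_{\min},r_{\max}$. In this argument the indicator of $\mathscr G_T$ is handled by noting $\mathds 1\{\mathscr G_T\}\le\mathds 1\{\mathscr G_t\}$ for $t\le T$. This is the claimed bound.
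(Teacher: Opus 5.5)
Your skeleton is the paper's. You compare the optimistic DPP choice with the randomized policy built from $(\bm c^\varepsilon,\bm\rho^\varepsilon)$ and use optimism on $\mathscr G_t$. You call a slot exceptional when the radius actually charged exceeds a multiple of $\varepsilon$; this is the paper's $J_t=\max\{J_t^\mu,J_t^p\}$. Your worry about $a_t$ is harmless, since $r_{a_t}Q_{t,a_t}\le\sum_k r_kQ_{t,k}$ and $J_t^\mu$ requires $Q_{t,a_t}>0$, so each such slot increments $N_{a_t}$. You then count these slots pathwise by $2KL_\varepsilon=O(K\log(T/\delta))$ and pass to the drift of $Z_t$. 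Step 2 is fine, even exact: $Z_{t+1}^2-Z_t^2\ge 2Z_t(Z_{t+1}-Z_t)$ with no correction, which is cleaner than the paper's Cauchy--Schwarz step. Note, though, that $|Z_{t+1}-Z_t|\le\sqrt{2r_{\max}}$, since the arrival and the departure may hit different queues.

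The gap is Step 4, which is where the proof actually has to happen. A Hajek-type result such as Lemma~\ref{lemma:hajek_drift} needs one adapted process whose conditional drift is $\le-\eta$ at every slot where it exceeds the threshold. $Z_t$ is not such a process: exceptional slots are not predictable and may occur while $Z_t$ is large, and off $\mathscr G_t$ there is no negative drift at all. ``Summing the drift over excursions'' yields nothing of order $V$ without a maximal inequality. The remark that exceptional slots contribute $O(K\log(T/\delta))$ to $Z$ is a heuristic until you name the process being controlled. There is also a conditioning error: Steps 1 and 3 state the drift bound conditionally on $\mathcal F_t$, where it fails. The $-\varepsilon\sum_k r_kQ_{t,k}$ term only appears after averaging the comparator's arrival term $r_{X_t}Q_{t,X_t}(1-p_{X_t}c^\varepsilon_{X_t})$ over $X_t$. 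So the drift must be taken given $\mathcal H_t$, and then your error becomes an $X_t$-average of radii, which cannot be counted pathwise.

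The missing idea is the paper's compensated process
\[
U_t=Z_t\mathds{1}\{\mathscr G_t\}-C_r\sum_{s=1}^{t-1}J_s\mathds{1}\{\mathscr G_s\},\qquad C_r=\sqrt{2r_{\max}}+\eta_\varepsilon .
\]
Since $U_t\le Z_t\mathds{1}\{\mathscr G_t\}$, the event $U_t\ge\theta_\varepsilon$ forces both $\mathscr G_t$ and $Z_t\ge\theta_\varepsilon$. Non-exceptional slots then have negative drift, and on exceptional slots the increment is at most $\sqrt{2r_{\max}}-C_r=-\eta_\varepsilon$. The drift lemma therefore applies with respect to $\mathcal H_t$ and gives $\mathbb E[U_{T+1}]=O(V)$. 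The pathwise count then yields $Z_{T+1}\mathds{1}\{\mathscr G_T\}\le U_{T+1}+2C_rKL_\varepsilon$.

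Your $1/Z_t$ normalization fits this construction well. On $\mathscr G_t$, condition on $\mathcal H_t$ and split only the error term: $E_t\le\frac{\varepsilon}{2}\sum_k r_kQ_{t,k}$ if $J_t=0$, and $E_t\le 2\sqrt{r_{\max}}\,Z_t$ if $J_t=1$. This gives
\[
\mathbb E[Z_{t+1}-Z_t\mid\mathcal H_t]\le \frac{A_\varepsilon}{Z_t}-\frac{\varepsilon\sqrt{r_{\min}}}{2}+2\sqrt{r_{\max}}\,\mathbb P(J_t=1\mid\mathcal H_t).
\]
A compensator constant of at least $2\sqrt{r_{\max}}$ cancels the last term exactly, and you never need to evaluate the comparator on the $X_t$-dependent event $\{J_t=0\}$.

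Finally, the good event must be built into the process from the start. The paper multiplies by $\mathds{1}\{\mathscr G_t\}$ here; in Lemma~\ref{lemma:weighted_random_times} it instead continues the process with slope $-\eta_\varepsilon$ after the first bad time, which also keeps increments bounded below. Noting $\mathds{1}\{\mathscr G_T\}\le\mathds{1}\{\mathscr G_t\}$ at the very end does not replace this.
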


The assumption in Lemma \ref{lemma:deterministic_weighted_backlog} is a strong stability condition for the static benchmark. It requires the existence of a type-dependent chatbot allocation and a human-capacity allocation such that, for every class $k$, the effective service capacity $\mu_k\rho_k^\varepsilon$ exceeds the residual arrival rate $\lambda_k(1-p_kc_k^\varepsilon)$ by a uniform margin $\varepsilon$. This slack is used in the drift argument to obtain a negative drift when the backlog is large. Intuitively, it rules out boundary cases in which the system is only critically loaded, where queues may be stable asymptotically but finite-horizon backlog bounds are harder to control.
As a consequence, we obtain the following bound on the terminal backlog term.

\begin{proposition}[Bound on $R_4(T)$]\label{prop:R4_bound_weighted}
Under the assumptions of Lemma~\ref{lemma:deterministic_weighted_backlog}, $R_4(T)$ satisfies
$$
R_4(T) = \mathcal O(V+K\log (T/\delta)) + r_{\max}T\,\mathbb P(\overline{\mathscr G}_T).
$$
\end{proposition}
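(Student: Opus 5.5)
The plan is to split $R_4(T)=\sum_k r_k\mathbb E[Q_{T+1,k}]$ according to whether the good event $\mathscr G_T$ holds. This gives
$$
R_4(T)=\sum_{k=1}^K r_k\,\mathbb E\big[Q_{T+1,k}\mathds 1\{\mathscr G_T\}\big]+\sum_{k=1}^K r_k\,\mathbb E\big[Q_{T+1,k}\mathds 1\{\overline{\mathscr G}_T\}\big],
$$
and the two pieces will be handled separately.

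On the good event, I will compare the linear weighted backlog with $Z_{T+1}$. By Cauchy--Schwarz,
$$
\sum_k r_kQ_{T+1,k}=\sum_k \sqrt{r_k}\,\big(\sqrt{r_k}Q_{T+1,k}\big)\le \Big(\sum_k r_k\Big)^{1/2} Z_{T+1}\le \sqrt{K r_{\max}}\,Z_{T+1}.
$$
Alternatively, using $Q_{T+1,k}\le Z_{T+1}/\sqrt{r_k}$, one gets $\sum_k r_kQ_{T+1,k}\le \sqrt{r_{\max}}\,K\,Z_{T+1}$. Invoking Lemma~\ref{lemma:deterministic_weighted_backlog} then gives $\mathbb E[\sum_k r_kQ_{T+1,k}\mathds 1\{\mathscr G_T\}]\le \sqrt{Kr_{\max}}\cdot\mathcal O(V+K\log(T/\delta))$. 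The factor $\sqrt{Kr_{\max}}$ is a constant with respect to $T$, $V$ and $\delta$. How it is treated depends on the paper's convention for $\mathcal O(\cdot)$, which appears to hide constants depending on $K$, $r$ and $\varepsilon$. If a sharper dependence is wanted, I would revisit the proof of the lemma, where the drift argument plausibly controls $\sum_k r_kQ_{t,k}$ directly.

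On the bad event, I will use a deterministic bound on the backlog. Each slot has exactly one arrival, going to a single queue, so $\sum_k Q_{T+1,k}\le T$ because $Q_{1,k}=0$. Hence $\sum_k r_kQ_{T+1,k}\le r_{\max}\sum_k Q_{T+1,k}\le r_{\max}T$ pointwise. Taking expectations on $\overline{\mathscr G}_T$ yields exactly the term $r_{\max}T\,\mathbb P(\overline{\mathscr G}_T)$.

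Adding the two pieces gives the claim. The only delicate point is the constant in the Cauchy--Schwarz step: the bound holds cleanly once the factor $\sqrt{Kr_{\max}}$ is absorbed into $\mathcal O(\cdot)$. All the real difficulty sits in Lemma~\ref{lemma:deterministic_weighted_backlog}, which I am taking as given.
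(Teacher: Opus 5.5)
Your proposal is correct and matches the paper's proof essentially step for step: you split on $\mathscr G_T$, bound $\sum_k r_kQ_{T+1,k}\le r_{\max}T$ pointwise on $\overline{\mathscr G}_T$, and on $\mathscr G_T$ use the same Cauchy--Schwarz comparison $\sum_k r_kQ_{T+1,k}\le (\sum_k r_k)^{1/2}Z_{T+1}$ followed by Lemma~\ref{lemma:deterministic_weighted_backlog}. The paper also simply absorbs the factor $(\sum_k r_k)^{1/2}\le\sqrt{Kr_{\max}}$ into the $\mathcal O(\cdot)$, so your caveat about this constant (including its hidden $\sqrt K$) applies equally to the original argument.
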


\paragraph{Control of the estimation-error terms.}
It remains to control the two terms involving estimation errors on the good event, namely $R_1^\mu(T)$ and $R_1^p(T)$.

\begin{proposition}\label{prop:R1_weighted_bounds}
The estimation-error terms satisfy
$$
R_1^\mu(T),\; R_1^p(T) = \mathcal{O}\left(K \sqrt{T\log(T/\delta)}\right).
$$
\end{proposition}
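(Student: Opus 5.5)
The plan is to bound the two estimation-error terms in the same way, by combining three ingredients. First, on $\mathscr G_t$ the optimistic index exceeds the true parameter by at most twice the confidence radius. Second, the radii add up along the trajectory like $\sqrt{\text{count}}$. Third, the backlog multiplying each error is controlled uniformly on the good event.

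\emph{Step 1 (reduction to radii).} On $\mathscr G_t$ we have $|\mu_k-\hat\mu_k(t-1)|\le\beta^\mu_k(t-1)$. Hence $0\le \bar\mu_k(t-1)-\mu_k\le \min\{1,2\beta^\mu_k(t-1)\}$, and the same holds for $\bar p_k(t-1)-p_k$. For $R_1^\mu$, note that the summand vanishes unless $Q_{t,a_t}>0$. Every nonzero term therefore corresponds to an effective service attempt, which increments $N_{a_t}$. For $R_1^p$, the summand vanishes unless $c_t=1$, because the policy is threshold-based. Every nonzero term therefore increments $M_{X_t}$ by exactly one.

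\emph{Step 2 (uniform backlog control).} On $\mathscr G_T$ I want a bound $\max_{t\le T+1}\max_k Q_{t,k}\le B$ with $B=\mathcal O(V+K\log(T/\delta))/\sqrt{r_{\min}}$, using $Q_{t,k}\le Z_t/\sqrt{r_{\min}}$. Lemma~\ref{lemma:deterministic_weighted_backlog} is stated only for $Z_{T+1}$ and only in expectation. Its drift argument, however, uses the slack $\varepsilon$ and the optimism of the indices on the good event, so I would rerun it at an arbitrary intermediate horizon $t\le T$. This gives the same order of bound for every $t$, since $T$ enters only through $\log(T/\delta)$. I expect this step to be the \textbf{main obstacle}: what is needed here is essentially a pathwise or high-probability bound on $Z_t$, rather than a bound on $\mathbb E[Z_{T+1}\mathds 1\{\mathscr G_T\}]$ alone. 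If only expectations are available, I would first try to keep $Q_{t,a_t}$ inside the expectation and bound $\mathbb E[Q_{t,a_t}\beta\,\mathds 1\{\mathscr G_t\}]$ via Cauchy--Schwarz. That would use a second-moment version of the lemma's drift bound for $\Phi(\bm Q_t)$ together with the deterministic sum of squared radii, $\sum_{n\le N}1/n\le 1+\log N$.

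\emph{Step 3 (summing radii).} Given Step 2, I would write
\[
R_1^\mu(T)\le \frac{r_{\max}B}{V}\,\mathbb E\Big[\sum_{k=1}^K\Big(1+\sum_{n=1}^{N_k(T)}\sqrt{\tfrac{2\log(4KT/\delta)}{n}}\Big)\Big].
\]
Here the term $1$ accounts for the initial step where the radius equals $1$. Since $\sum_{n\le N}n^{-1/2}\le 2\sqrt N$ and $\sum_k N_k(T)\le T$, Cauchy--Schwarz gives
\[
\sum_k\sqrt{N_k(T)}\le\sqrt{KT}.
\]
Hence $R_1^\mu(T)\le \frac{r_{\max}B}{V}\big(K+2\sqrt{2KT\log(4KT/\delta)}\big)$. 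The identical computation with $M_k(T)$, where $\sum_k M_k(T)\le T$, bounds $R_1^p(T)$.

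\emph{Step 4 (order).} Since $B/V=\mathcal O(1+K\log(T/\delta)/V)$, taking $V$ of order $\sqrt T$ (the choice that balances $R_3$) makes the prefactor $\mathcal O(1)$ for $T\gtrsim K^2\log^2(T/\delta)$. Both terms are then $\mathcal O(\sqrt{KT\log(T/\delta)})\subseteq\mathcal O(K\sqrt{T\log(T/\delta)})$, as claimed.
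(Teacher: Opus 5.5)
There is a genuine gap in Step 2. You correctly flag it as the crux, but you do not resolve it, and Step 3 cannot run without it. (Steps 1 and 4 are fine, and Step 3 is the right computation.)

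\textbf{Why Step 2 fails as proposed.} A pathwise bound $\max_{t\le T+1}\max_k Q_{t,k}\le B=\mathcal O(V+K\log(T/\delta))$ on $\mathscr G_T$ is false. The good event only restricts the estimation errors for $\bm p$ and $\bm\mu$; it says nothing about the arrival sequence. Suppose some type has $1-p_k>\mu_k$, which strict feasibility allows when $\lambda_k$ is small. Then a long run of type-$k$ arrivals makes $Q_{t,k}$ grow linearly, and this has positive probability jointly with $\mathscr G_T$.

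Rerunning Lemma~\ref{lemma:deterministic_weighted_backlog} at each fixed horizon gives only $\max_{t\le T}\mathbb E[Z_t\mathds 1\{\mathscr G_t\}]=\mathcal O(V+K\log(T/\delta))$, which is a maximum of expectations. In Step 3, however, the backlog multiplies path-dependent radii; equivalently, it is evaluated at the random times $\tau_{k,n},\sigma_{k,n}$. To pull it out you need the expectation of the running maximum, $\mathbb E[\max_{t\le T}Z_t\mathds 1\{\mathscr G_t\}]$, or a high-probability bound on that maximum. Note also that $R_1^\mu,R_1^p$ carry $\mathds 1\{\mathscr G_t\}$, not $\mathds 1\{\mathscr G_T\}$. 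Lemma~\ref{lemma:hajek_drift} is a first-moment statement, and Markov's inequality plus a union bound over $T$ slots is far too weak to upgrade it.

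Your Cauchy--Schwarz fallback hits the same wall. It needs a uniform second-moment bound $\mathbb E[Z_t^2\mathds 1\{\mathscr G_t\}]=\mathcal O((V+K\log(T/\delta))^2)$, which the first-moment drift lemma does not provide: summing the drift of $\Phi$ controls only the time-averaged first moment of the backlog. Even granting that bound, the squared radii sum to order $K\log(T/\delta)\log T$. That gives $\mathcal O(\sqrt{KT\log(T/\delta)\log T})$, which misses the stated rate by a factor $\sqrt{\log T/K}$ when $K$ is small.

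\textbf{The missing ingredient} is the exponential-moment argument behind Lemma~\ref{lemma:weighted_random_times}.
\begin{itemize}
\item Work with the compensated process $V_t=Z_t-C_r\sum_{s<t}J_s$ (not to be confused with the parameter $V$) while the good event holds. After the good event first fails, continue it with deterministic slope $-\eta_\varepsilon$. Then it has bounded increments and drift at most $-\eta_\varepsilon$ above $\theta_\varepsilon$ at every $t$.
\item The conditional Hoeffding lemma gives a recursion $\mathbb E[e^{\alpha V_{t+1}}]\le\gamma\,\mathbb E[e^{\alpha V_t}]+C$ with $\gamma\in(0,1)$. Hence $V_t$ has geometric tails uniformly in $t$.
\item A union bound over $t\le T$ and integration give $\mathbb E[\max_{t\le T}V_t]\le\theta_\varepsilon+\mathcal O(\log T)$.
\item The pathwise count $\sum_sJ_s\le 2KL_\varepsilon$ turns this into $\mathbb E[\max_{t\le T}Z_t\mathds 1\{\mathscr G_t\}]=\mathcal O(V+K\log(T/\delta))$.
\end{itemize}

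With this in hand, your Step 3 goes through in expectation, because $\sum_t\beta^\mu_{a_t}(t-1)\mathds 1\{Q_{t,a_t}>0\}\le K+\sqrt{2KT\log(4KT/\delta)}$ holds deterministically. It is in fact sharper than the paper's argument. The paper reorganizes the sum by the stopping times $\tau_{k,n},\sigma_{k,n}$ and bounds each $\mathbb E[r_kQ_{\tau_{k,n},k}\mathds 1\{\tau_{k,n}\le T\}\mathds 1\{\mathscr G_{\tau_{k,n}}\}]$ by that same expected maximum. It then sums $\sum_{n\le T}n^{-1/2}\le 2\sqrt T$ separately for each $k$, paying $K\sqrt T$. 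Your joint use of $\sum_kN_k(T)\le T$ (resp.\ $\sum_kM_k(T)\le T$) yields $\sqrt{KT}$ instead.
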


\paragraph{Final regret bound.}
We are now ready to combine the preceding bounds and state the main regret guarantee.
\begin{theorem}\label{thm: main regret bound}
Under the assumption in Lemma~\ref{lemma:deterministic_weighted_backlog}, by setting $V= \sqrt{T}$ and $\delta=T^{-2}$, the \texttt{UCB-DPP} policy satisfies, for $T\ge 2$,
$$
R_T^{\texttt{UCB-DPP}} = \mathcal{O}\left(K\sqrt{T\log T}\right).
$$
\end{theorem}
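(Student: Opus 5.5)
The plan is to start from the decomposition in Lemma~\ref{lemma: regret decomposition} and substitute the bounds already established for each term. With $V=\sqrt T$ and $\delta=T^{-2}$, the choice of $\delta$ gives $\log(T/\delta)=\log(T^3)=3\log T$. Hence every $\log(T/\delta)$ factor becomes $\mathcal O(\log T)$, and this is the only place where $T\ge 2$ is needed, since it ensures $\log T>0$.

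The five terms are handled in turn.

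\textbf{Bad event.} By Proposition~\ref{propos: bound on bad event proba}, $\mathbb P(\overline{\mathscr G}_T)\le \delta T = T^{-1}$. Therefore
\[
R_2(T)=T\,\mathbb P(\overline{\mathscr G}_T)\le 1 .
\]

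\textbf{Penalty-weight term.} Directly from its definition,
\[
R_3(T)=\frac{r_{\max}T}{V}=r_{\max}\sqrt T .
\]

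\textbf{Terminal backlog.} Proposition~\ref{prop:R4_bound_weighted}, valid under the strict-feasibility assumption of Lemma~\ref{lemma:deterministic_weighted_backlog}, gives
\[
R_4(T)=\mathcal O(V+K\log(T/\delta))+r_{\max}T\,\mathbb P(\overline{\mathscr G}_T)=\mathcal O\bigl(\sqrt T+K\log T\bigr)+r_{\max}.
\]

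\textbf{Estimation errors.} Proposition~\ref{prop:R1_weighted_bounds} gives
\[
R_1^\mu(T),\,R_1^p(T)=\mathcal O\bigl(K\sqrt{T\log(T/\delta)}\bigr)=\mathcal O\bigl(K\sqrt{T\log T}\bigr).
\]
We should check whether this bound already absorbs the $1/V$ prefactor in the definitions of $R_1^\mu$ and $R_1^p$. If it does not, the $1/V$ only helps. Either way, as the proposition is stated, the bound holds with these parameter choices, since its constants do not depend on $V$ in a way that grows.

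Summing the five contributions gives
\[
R_T^{\texttt{UCB-DPP}}=\mathcal O\bigl(K\sqrt{T\log T}+\sqrt T+K\log T+1\bigr).
\]
For $T\ge 2$ we have $K\log T\le K\sqrt{T\log T}$, because $\log T\le \sqrt{T\log T}$ is equivalent to $\log T\le T$. Also $\sqrt T\le K\sqrt{T\log T}/\sqrt{\log 2}$. So all terms are dominated by $K\sqrt{T\log T}$, and the constant hidden in $\mathcal O$ depends only on $r_{\max}$, $r_{\min}$, $\varepsilon$ and the instance, not on $T$.

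The main point requiring care is the dependence on $V$ and on $\delta$ hidden in the $\mathcal O$ constants of Lemma~\ref{lemma:deterministic_weighted_backlog} and Proposition~\ref{prop:R1_weighted_bounds}. In particular, the $R_1$ bounds were obtained from the backlog bound on the good event, which scales like $V+K\log(T/\delta)$. One must verify that multiplying by $1/V$ and summing confidence radii $\sum_t\beta(t)=\mathcal O(\sqrt{KT\log(T/\delta)})$ yields exactly $K\sqrt{T\log T}$ with $V=\sqrt T$, rather than a term of order $\sqrt T\cdot K\log T/V$ that could dominate. With $V=\sqrt T$, the ratio $(V+K\log T)/V=\mathcal O(1+K\log T/\sqrt T)$ is bounded, so this works out.
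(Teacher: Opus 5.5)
Your proposal is correct and follows the same route as the paper. You substitute $V=\sqrt T$ and $\delta=T^{-2}$ into the decomposition of Lemma~\ref{lemma: regret decomposition}, which gives $R_2(T)\le 1$, $R_3(T)=r_{\max}\sqrt T$, $R_4(T)=\mathcal O(\sqrt T+K\log T)$, and $R_1^\mu(T),R_1^p(T)=\mathcal O(K\sqrt{T\log T})$, the last because the backlog constant $\overline M_\varepsilon=\mathcal O(V+K\log(T/\delta))$ in Proposition~\ref{prop:R1_weighted_bounds} gives $\overline M_\varepsilon/V=\mathcal O(1+K\log T/\sqrt T)$, as you note at the end.

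The paper differs only in writing every constant out explicitly, and like you it absorbs the lower-order $K^2\log^{3/2}T$ and $\log K$ contributions into the $\mathcal O$. Your hedge about whether the $1/V$ is ``absorbed'' is unnecessary: $R_1^\mu$ and $R_1^p$ are defined with the $1/V$ prefactor, and the proposition bounds them directly.
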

The proof starts from the regret decomposition in Lemma \ref{lemma: regret decomposition}. On the good event, optimism reduces the regret to two estimation-error terms, corresponding to the chatbot and service parameters. These terms are nonstandard because the estimation errors are weighted by the current queue backlog. They are controlled by combining concentration bounds with Lyapunov-drift arguments that keep the relevant backlog terms under control. The remaining terms are the bad-event probability, the drift contribution $T/V$, and the terminal backlog, which is of order $V$ up to logarithmic factors. Setting $V=\sqrt T$ and $\delta=T^{-2}$ balances these terms and gives the stated regret bound.

We can also observe that, as a direct consequence of Proposition \ref{prop:R4_bound_weighted}, choosing $V=\sqrt T$ and $\delta=T^{-2}$ gives a sublinear weighted terminal backlog. Since $r_k\ge r_{\min}>0$ for all $k\in[K]$, each human-service queue is \emph{mean-rate stable}:
$$
\lim_{T\to\infty}\frac{\mathbb E[Q_{T+1,k}]}{T}=0,\qquad \forall k\in[K].
$$

The complete proofs of this section are provided in Appendix \ref{app: all regret bound}.

\section{Simulations}
We test the \texttt{UCB-DPP} policy on a synthetic instance with $K=5$ task classes, arrival rates $\bm \lambda = (0.32,0.08,0.25,0.20,0.15)$, chatbot success probabilities $\bm p = (0.82,0.35,0.68,0.25,0.9)$, human service rates $\bm \mu = (0.3,0.85,0.45,0.75,0.28)$, and terminal backlog weights $\bm r=(2.3,0.8,1.5,0.9,2.5)$, chosen to satisfy the assumption of Lemma \ref{lemma:deterministic_weighted_backlog}.

All simulations are run over a horizon $T=30000$ for the \texttt{UCB-DPP} plot in Figure \ref{fig:regret UCB-DPP} and over a horizon $T=10000$ for the policy comparison plot in Figure \ref{fig: comparison plot}. Each curve is averaged over $100$ independent runs, and the shaded regions represent one standard error.

As shown in Figure \ref{fig:regret UCB-DPP}, the cumulative regret of \texttt{UCB-DPP} grows sublinearly over time.

For the comparison plot, we consider three benchmark policies. The first is a plug-in version of DPP, which has the same decision structure as \texttt{UCB-DPP} but replaces the optimistic estimates $\bar p_k(t)$ and $\bar\mu_k(t)$ with the empirical estimates $\hat p_k(t)$ and $\hat\mu_k(t)$.
The other two policies always set the chatbot cost to either $c=1$ or $c=0$, respectively, and use a greedy scheduling rule based only on the empirical service rates:
$$
a_t \in \arg\max_{k\in[K]} r_k \hat\mu_k(t-1).
$$
This rule does not take the current backlog into account. Instead, at each slot it prioritizes the task type with the largest product between terminal backlog cost and estimated service rate, namely the class that is both more costly to leave unresolved and empirically easier to serve.

The comparison in Figure \ref{fig: comparison plot} therefore highlights the role of both the optimism mechanism and the queue-aware Drift-Plus-Penalty structure in stabilizing the system while controlling chatbot usage. Other simulations are provided in Appendix \ref{app: simulations}.

\begin{figure}[H] 
\centering 
\begin{subfigure}[t]{0.49\textwidth} 
\centering 
\vspace{0pt} 
\includegraphics[width=\textwidth]{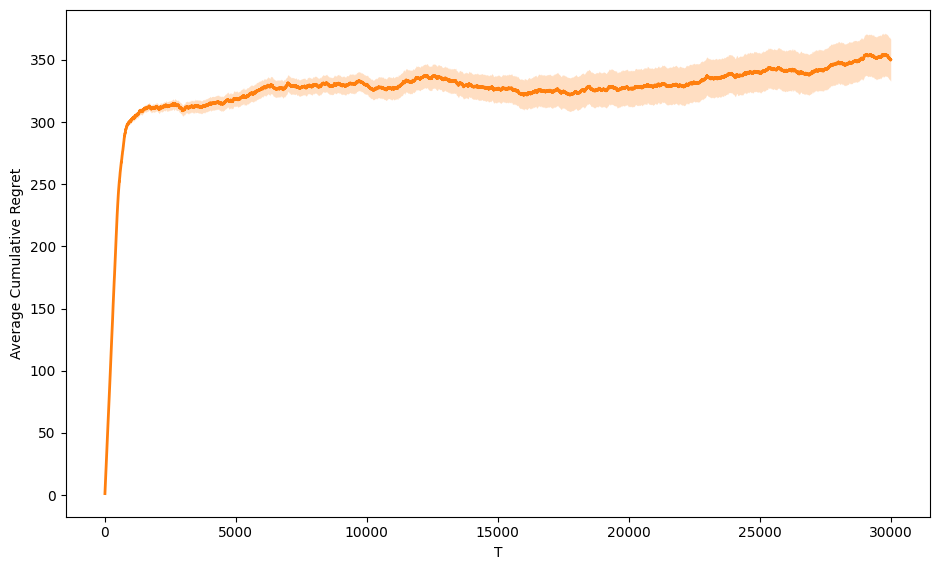} 
\caption{Average cumulative regret of \texttt{UCB-DPP}.} 
\label{fig:regret UCB-DPP} 
\end{subfigure} \hfill 
\begin{subfigure}[t]{0.49\textwidth} 
\centering 
\vspace{0pt} 
\includegraphics[width=\textwidth]{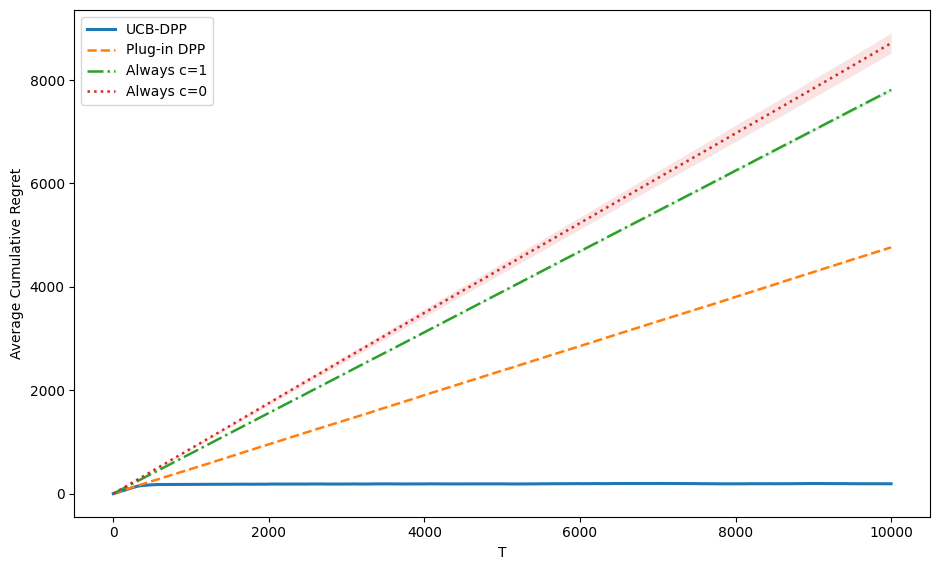} 
\caption{Average cumulative regret of \texttt{UCB-DPP} and three baselines.} 
\label{fig: comparison plot} 
\end{subfigure} 
\label{fig:simulations-paper} 
\end{figure}

\section{Conclusion, Limitations and Open Directions}

We studied an online learning and queueing-control model for human-AI service systems. The main insight is that automation and human scheduling cannot be optimized separately: chatbot decisions shape the arrival process into the human queues, while scheduling decisions determine how limited human capacity is allocated. To address this problem, we proposed the \texttt{UCB-DPP} policy, which combines optimistic parameter estimation with a Drift-Plus-Penalty objective. We proved that \texttt{UCB-DPP} achieves regret $\widetilde{\mathcal O}(K\sqrt T)$ and ensures mean-rate stability of the human-service queues. Simulations on synthetic instances show that the policy outperforms natural baselines.

The model relies on a few simplifying assumptions that make the analysis tractable and could be extended in future work. The linear chatbot success probability leads to an \emph{on/off} chatbot decision, with $c_t\in\{0,1\}$, so that the chatbot is either fully activated or not used. More general response functions could capture richer chatbot behavior and allow for intermediate levels of automation. Similarly, the geometric and preemptive human-service model could be generalized to non-memoryless service times and non-preemptive scheduling. Finally, another natural direction is to consider systems with multiple human agents, possibly with heterogeneous skills, different service rates, or strategic behavior. This would lead to richer allocation and incentive-design questions, especially in settings where human agents may respond strategically to the way tasks are routed or prioritized.

\nocite{*}
\bibliographystyle{plain}
\bibliography{bibliography}

\appendix
\setcounter{section}{0}
\renewcommand{\thesection}{\Alph{section}}
\renewcommand{\theHsection}{appendix.\Alph{section}}

\section{Lower Bound via Static Optimization Problem} 
\label{app:proof_static_lower_bound}

\subsection{Proof of Theorem \ref{thm: static lower bound}}

For all $k \in [K]$, let us sum over $t$ the queue dynamics Equation \eqref{eq: queue dynamics} and use $Q_{1,k}^\pi=0$:
$$
Q_{T+1,k}^\pi = \sum_{t=1}^T(A_{t,k}^\pi-D_{t,k}^\pi).
$$
For simplicity of notation, let $\sum_t$ denote $\sum_{t=1}^T$. Taking expectation and dividing by $T$, we obtain
\begin{equation}\label{eq: average arr, dep, back}
\frac{1}{T}\mathbb{E}\left[\sum_t A_{t,k}^\pi \right] = \frac{1}{T}\mathbb{E}\left[\sum_t D_{t,k}^\pi \right] + \frac{1}{T}\mathbb{E}\left[ Q_{T+1,k}^\pi \right].
\end{equation}

Now let us rewrite the objects associated with a policy $\pi$ in a static form.
Define the average cost per type as
$$
\bar{c}_k^\pi = \frac{1}{T\lambda_k} \mathbb{E}\left[\sum_t c_t^\pi \mathds{1}\{X_t=k\} \right],
$$
then it follows that
$$
\frac{1}{T}\mathbb{E}\left[\sum_t c_t^\pi \right] = \sum_k \lambda_k \bar{c}_k^\pi.
$$

Let
$$
U_{t,k}^\pi := \mathds{1}\{a_t^\pi=k,\;Q_{t,k}^\pi>0\}
$$
be the indicator that, at slot $t$, the human agent serves a non-empty type-$k$ queue. Define
$$
\bar{\rho}_k^\pi = \frac{1}{T}\mathbb{E}\left[\sum_t U_{t,k}^\pi \right].
$$
Since at each slot the agent works on at most one type, it follows that
$$
\sum_{k=1}^K \bar{\rho}_k^\pi \leq 1.
$$

Finally define the average final backlog per unit of time as
\begin{equation}\label{eq: stat LB proof, backlog}
    \bar{q}_k^\pi = \frac{1}{T}\mathbb{E}\left[Q_{T+1,k}^\pi \right].
\end{equation}
Recall that for the arrival process, given $X_t = k$,
$$
\mathbb{E}[A_{t,k}^\pi \mid X_t=k,c_t^\pi] = 1-p_kc_t^\pi.
$$
Therefore, by the tower property,
$$
\mathbb{E}[A_{t,k}^\pi]
=
\mathbb{E}\!\left[\mathbb{E}[A_{t,k}^\pi\mid X_t,c_t^\pi]\right].
$$
Since $A_{t,k}^\pi=0$ whenever $X_t\neq k$, we get
$$
\mathbb{E}[A_{t,k}^\pi]
=
\mathbb{E}\!\left[\mathbf{1}\{X_t=k\}\,\mathbb{E}[A_{t,k}^\pi\mid X_t=k,c_t^\pi]\right]
=
\mathbb{E}\!\left[\mathbf{1}\{X_t=k\}(1-p_kc_t^\pi)\right].
$$
Expanding the right-hand side,
$$
\mathbb{E}[A_{t,k}^\pi]
=
\mathbb{E}[\mathbf{1}\{X_t=k\}]
-
p_k\,\mathbb{E}\!\left[\mathbf{1}\{X_t=k\}c_t^\pi\right].
$$
Using $\mathbb{P}(X_t=k)=\lambda_k$, this becomes
$$
\mathbb{E}[A_{t,k}^\pi]
=
\lambda_k
-
p_k\,\mathbb{E}\!\left[\mathbf{1}\{X_t=k\}c_t^\pi\right].
$$
Summing over $t=1,\dots,T$, dividing by $T$, and using the definition of $\bar c_k^\pi$, we obtain
$$
\frac{1}{T}\mathbb{E}\left[\sum_t A_{t,k}^\pi \right]
=
\lambda_k
-
p_k\,\frac{1}{T}\mathbb{E}\!\left[\sum_t \mathbf{1}\{X_t=k\}c_t^\pi\right]
=
\lambda_k-p_k\lambda_k\bar c_k^\pi.
$$
Hence
\begin{equation}\label{eq: stat LB proof, arrival}
    \frac{1}{T}\mathbb{E}\left[\sum_t A_{t,k}^\pi \right] = \lambda_k(1-p_k\bar{c}_k^\pi).
\end{equation}
For the service process, conditional on actually serving a non-empty queue of type $k$, recall
$$
\mathbb{E}[D_{t,k}^\pi \mid U_{t,k}^\pi=1] = \mu_k.
$$
Since $D_{t,k}^\pi=0$ whenever $U_{t,k}^\pi=0$, we can write
$$
\mathbb{E}[D_{t,k}^\pi]
=
\mathbb{E}[D_{t,k}^\pi\mid U_{t,k}^\pi=1]\mathbb{P}(U_{t,k}^\pi=1)
=
\mu_k\,\mathbb{E}[U_{t,k}^\pi].
$$
Therefore,
\begin{equation}\label{eq: stat LB proof, departure}
    \frac{1}{T}\mathbb{E}\left[\sum_t D_{t,k}^\pi \right]=\mu_k\,\frac{1}{T}\mathbb{E}\left[\sum_t U_{t,k}^\pi \right]=\mu_k\bar\rho_k^\pi.
\end{equation}

Putting Equations \ref{eq: stat LB proof, backlog}, \ref{eq: stat LB proof, arrival} and \ref{eq: stat LB proof, departure} into Equation \eqref{eq: average arr, dep, back}, we get
\begin{equation}\label{eq: putting together static quantities}
    \lambda_k(1-p_k\bar{c}_k^\pi) = \mu_k \bar{\rho}_k^\pi+\bar{q}_k^\pi.
\end{equation}
So it follows that policy $\pi$, through $(\bar{\bm{c}}^\pi,\bar{\bm{\rho}}^\pi)$, violates the static constraints of the optimization problem in \eqref{eq: static opt problem} by a term controlled by the final backlog $\bar{\bm{q}}^\pi$.

Let us use a duality argument. Consider the static problem $\text{OPT}(\bm \theta)$ in \eqref{eq: static opt problem}. Introduce the dual multipliers $y_k \geq 0$ for the queue-balance constraints
$$
\lambda_k (1-p_kc_k) - \mu_k \rho_k \leq 0,
$$
and the multiplier $\nu \geq 0$ for the capacity constraint $\sum_{k=1}^K \rho_k \leq 1$. The Lagrangian of the optimization problem is then
$$
\mathcal{L}(\bm{c}, \bm{\rho}, \bm \theta; \bm{y}, \nu) = \sum_k \lambda_kc_k +\sum_k y_k\big(\lambda_k(1-p_kc_k)-\mu_k\rho_k \big) + \nu \big(\sum_k \rho_k -1 \big).
$$
Let $(\bm{y}^*,\nu^*)$ be an optimal dual solution. Since the static linear program in \ref{eq: static opt problem} is feasible and bounded, by strong duality
$$
\text{OPT}(\bm \theta)= \inf_{\bm{c},\bm{\rho}} \mathcal{L}(\bm{c}, \bm{\rho}, \bm \theta; \bm{y}^*, \nu^*).
$$

So for every choice of $(\bm{c},\bm{\rho})$ it holds that
$$
\mathcal{L}(\bm{c}, \bm{\rho}, \bm \theta; \bm{y}^*, \nu^*) \geq \text{OPT}(\bm \theta).
$$
From here, using the definition of $\mathcal{L}$, we get
\begin{equation}\label{eq: inequality from lagrangian}
    \sum_k \lambda_k c_k \geq \text{OPT}(\bm \theta)-\sum_k y_k^*\big(\lambda_k(1-p_kc_k)-\mu_k \rho_k \big)- \nu^*\big(\sum_k \rho_k -1\big).
\end{equation}

Now let us use \eqref{eq: inequality from lagrangian} with $\bm{c}=\bar{\bm{c}}^\pi$ and $\bm{\rho} = \bar{\bm{\rho}}^\pi$. Since $\sum_k \bar{\rho}_k^\pi \leq 1$ and $\nu^* \geq 0$, we have
$$
-\nu^*\big( \sum_k \bar{\rho}_k^\pi - 1\big) \geq 0.
$$
In addition, using Equation \eqref{eq: putting together static quantities}, we get
\begin{equation}\label{eq: final bound with static quantities}
    \sum_k \lambda_k \bar{c}_k^\pi \geq \text{OPT}(\bm \theta)-\sum_k y_k^* \bar{q}_k^\pi.
\end{equation}

We now conclude the lower bound by multiplying \eqref{eq: final bound with static quantities} by $T$ and using the definition of the averages
$$
\mathbb{E}\big[\sum_{t} c_t^\pi \big] \geq T\text{OPT}(\bm \theta)- \sum_k y_k^* \, \mathbb{E}[Q_{T+1,k}^\pi].
$$
Using the definition of total final cost, we get
$$
\mathbb{E}[C^\pi(T)]=\mathbb{E}\big[\sum_t c_t^\pi\big]+ \sum_k r_k \, \mathbb{E}[Q_{T+1,k}^\pi] \geq T\text{OPT}(\bm \theta)+\sum_k (r_k-y_k^*)\mathbb{E}[Q_{T+1,k}^\pi].
$$
Finally, since we assumed that for all $k \in [K]$ $r_k \geq y_k^*$, the second term of the sum in the rhs is non-negative. Hence by taking the infimum over the set of policies we obtain
$$
\inf_{\pi \in \Pi^*} \mathbb{E}[C^\pi(T)] \geq T \cdot \text{OPT}(\bm \theta),
$$
which proves the theorem.
\hfill $\square$

\subsection{Proof of Proposition \ref{propos: first regret bound}}
Since $c_{X_t}^*=c_k^*$ whenever $X_t=k$, and the arrivals are i.i.d. with distribution $\bm\lambda$, it holds that
$$
\frac{1}{T}\sum_{t=1}^T \mathbb{E}[c_{X_t}^*] = \sum_{k=1}^K \lambda_k c_k^* = \text{OPT}(\bm \theta).
$$
Therefore, using Theorem~\ref{thm: static lower bound} in the definition of regret \eqref{eq: regret definition}, we obtain
$$
R_T^\pi \leq \sum_{t=1}^T \mathbb{E}[c_t^\pi] - T\text{OPT}(\bm \theta) + \sum_{k=1}^K r_k \,  \mathbb{E}[Q_{T+1,k}^\pi]
= \sum_{t=1}^T\mathbb{E}[c_t^\pi-c_{X_t}^*]+ \sum_{k=1}^K r_k \, \mathbb{E}[Q_{T+1,k}^\pi].
$$
\hfill $\square$

\section{\texttt{UCB-DPP} Policy}\label{app: UCB-DPP policy}

\subsection{Proof of Lemma \ref{lemma: bound drift DPP preemptive}}
For each $k\in[K]$, the queue dynamics is
$$
Q_{t+1,k}=Q_{t,k}+A_{t,k}-D_{t,k}.
$$
Therefore,
$$
Q_{t+1,k}^2 = Q_{t,k}^2 + (A_{t,k}-D_{t,k})^2 + 2Q_{t,k}(A_{t,k}-D_{t,k}).
$$
Using the definition of $\Phi$, we obtain
$$
\Phi(\bm Q_{t+1})-\Phi(\bm Q_t) =  \frac12\sum_{k=1}^K r_k(A_{t,k}-D_{t,k})^2 + \sum_{k=1}^K r_k Q_{t,k}(A_{t,k}-D_{t,k}).
$$
At each slot there is at most one arrival to the human queues and at most one departure. Hence the vector $(A_{t,k}-D_{t,k})_{k\in[K]}$ has at most two nonzero entries, each with absolute value at most one. Thus,
$$
\sum_{k=1}^K (A_{t,k}-D_{t,k})^2 \le 2.
$$
Since $r_k\le r_{\max}$ for all $k\in[K]$, it follows that
$$
\frac12\sum_{k=1}^K r_k(A_{t,k}-D_{t,k})^2 \le r_{\max}.
$$
Consequently,
$$
\Phi(\bm Q_{t+1})-\Phi(\bm Q_t) \le r_{\max} + \sum_{k=1}^K r_k Q_{t,k}(A_{t,k}-D_{t,k}).
$$
Taking conditional expectation with respect to $\mathcal F_t$ gives the desired bound.
\hfill $\square$

The \texttt{UCB-DPP} policy is summarized in Algorithm \ref{alg:ucb-dpp}. At each slot, the policy first updates boosted estimates of the unknown chatbot and service parameters, then minimizes the optimistic drift-plus-penalty surrogate. Due to the linearity of the chatbot success probability in the cost, the chatbot decision has a threshold form.

\begin{algorithm}[H]
\caption{\texttt{UCB-DPP} Policy}
\label{alg:ucb-dpp}
\begin{algorithmic}[1]
\State \textbf{Input:} horizon $T$, number of classes $K$, weights $(r_k)_{k=1}^K$, parameter $V>0$, confidence level $\delta\in(0,1)$.
\State \textbf{Initialize:} $Q_{1,k}=0$ and $N_k(0)=S_k(0)=M_k(0)=G_k(0)=0$ for all $k\in[K]$.
\For{$t=1,\ldots,T$}
    \State Observe the current queues $\bm Q_t$ and the arriving type $X_t$.
    \State For each $k\in[K]$, compute $\hat\mu_k(t-1)$ as in \eqref{eq: estimator of mu} and $\bar\mu_k(t-1)$ as in \eqref{eq: mu_ucb}.
    \State For each $k\in[K]$, compute $\hat p_k(t-1)$ as in \eqref{eq: estimator of p} and $\bar p_k(t-1)$ as in \eqref{eq: p_ucb}.
    \State Choose the \emph{chatbot cost}
    $$
    c_t=
    \begin{cases}
    1, & \text{if } r_{X_t}Q_{t,X_t}\bar p_{X_t}(t-1)\ge V,\\
    0, & \text{otherwise.}
    \end{cases}
    $$

    \State Choose the \emph{scheduling action}
    $$
    a_t\in\arg\max_{k\in[K]} r_k Q_{t,k}\bar\mu_k(t-1).
    $$
    \State If all queues are empty, choose $a_t$ arbitrarily.

    \State Apply the chatbot decision and observe: $Y_t=\mathds{1}\{\text{chatbot succeeds at slot }t\}$.
    \State Define the human-queue arrivals: $A_{t,k}=\mathds{1}\{X_t=k\}(1-Y_t)$ for all $k\in[K]$.

    \State \emph{Serve} queue $a_t$. If $Q_{t,a_t}>0$, observe the service outcome $D_{t,a_t}$; otherwise set $D_{t,a_t}=0$.
    \State \emph{Set} $D_{t,k}=0$ for all $k\neq a_t$.

    \State \emph{Update} the queues according to:
    $$
    Q_{t+1,k}=Q_{t,k}+A_{t,k}-D_{t,k}, \qquad k\in[K].
    $$
    \State \emph{Update} the chatbot counters:
    $$
    M_{X_t}(t)=M_{X_t}(t-1)+c_t, \qquad G_{X_t}(t)=G_{X_t}(t-1)+Y_t.
    $$
    \State \emph{Update} the service counters:
    $$
    N_{a_t}(t)=N_{a_t}(t-1)+\mathds{1}\{Q_{t,a_t}>0\}, \qquad S_{a_t}(t)=S_{a_t}(t-1)+D_{t,a_t}.
    $$
    \State All counters not explicitly updated remain unchanged.
\EndFor
\end{algorithmic}
\end{algorithm}

\clearpage

\section{Regret Bound}\label{app: all regret bound}

\subsection{Proof of Regret Decomposition Lemma \ref{lemma: regret decomposition}} \label{app: regret decomposition}

Recall that, according to Proposition~\ref{propos: first regret bound}, the regret of any policy, and in particular that of \texttt{UCB-DPP}, satisfies
$$
R_T^{\texttt{UCB-DPP}} \leq \sum_{t=1}^T \mathbb{E}[c_t^{\texttt{UCB-DPP}}-c_{X_t}^*] + \sum_{k=1}^K r_k\,\mathbb{E}[Q_{T+1,k}^{\texttt{UCB-DPP}}].
$$
For ease of notation, from now on we omit the superscript \texttt{UCB-DPP} from all relevant quantities whenever no ambiguity arises. In particular, we simply write $c_t$, $a_t$, and $Q_{t,k}$.
Recall that $(\bm c^*,\bm \rho^*)$ is the optimal solution of the static optimization problem \eqref{eq: static opt problem}.

\paragraph{Decomposition into good and bad events.}
For every $t \in [T]$,
$$
\mathbb E[c_t-c_{X_t}^*] = \mathbb E[(c_t-c_{X_t}^*)\mathds{1}\{\mathscr G_t\}] + \mathbb E[(c_t -c_{X_t}^*)\mathds{1}\{\overline{\mathscr G}_t\}].
$$
Since both $c_t$ and $c_{X_t}^*$ are in $[0,1]$, it follows that
$$
c_t-c_{X_t}^* \leq 1.
$$
Hence
\begin{equation}
\mathbb E[c_t-c_{X_t}^*] \le \mathbb E[(c_t-c_{X_t}^*)\mathds{1}\{\mathscr G_t\}] + \mathbb P(\overline{\mathscr G}_t).
\end{equation}

We can now observe that $\overline{\mathscr G}_t\subseteq \overline{\mathscr G}_T$, hence
$$
\mathbb P(\overline{\mathscr G}_t)\le \mathbb P(\overline{\mathscr G}_T).
$$
Summing over $t$, we obtain:
\begin{equation}\label{eq: good bad decomposition both unknown}
\sum_{t=1}^T \mathbb E[c_t-c_{X_t}^*]
\le
\sum_{t=1}^T \mathbb E[(c_t-c_{X_t}^*)\mathds{1}\{\mathscr G_t\}]
+
T\,\mathbb P(\overline{\mathscr G}_T).
\end{equation}
The second term is exactly $R_2(T)$.

\paragraph{Drift Plus Penalty under good events.}
Let $\mathcal H_t$ denote the information available at the beginning of slot $t$, before the arrival type $X_t$ is observed. By the i.i.d.\ assumption on the arrivals, for every $k\in[K]$,
$$
\mathbb P(X_t=k\mid \mathcal H_t)=\lambda_k.
$$

By the bound on the conditional drift in Lemma \ref{lemma: bound drift DPP preemptive}, together with the conditional expectations on arrivals and departures, we have
\begin{equation}
    \Delta_t+Vc_t \le r_{\max}+r_{X_t}Q_{t,X_t}(1-p_{X_t}c_t)-r_{a_t}Q_{t,a_t}\mu_{a_t} +Vc_t.
\end{equation}
Let us add and subtract the quantities $r_{X_t}Q_{t,X_t}(1-\bar p_{X_t}(t-1)c_t)$ and $r_{a_t}Q_{t,a_t}\bar\mu_{a_t}(t-1)$.
We obtain
\begin{equation*}
\begin{aligned}
\Delta_t+Vc_t \leq & \;r_{\max} + \left(Vc_t+r_{X_t}Q_{t,X_t}(1-\bar p_{X_t}(t-1)c_t)-r_{a_t}Q_{t,a_t}\bar\mu_{a_t}(t-1) \right)\\
& + r_{X_t}Q_{t,X_t}c_t\left(\bar p_{X_t}(t-1)-p_{X_t}\right) + r_{a_t}Q_{t,a_t}\left(\bar\mu_{a_t}(t-1)-\mu_{a_t}\right).
\end{aligned}
\end{equation*}

By policy definition, \texttt{UCB-DPP} chooses at each $t$ the pair $(c_t,a_t)$ minimizing
$$
Vc+r_{X_t}Q_{t,X_t}(1-\bar p_{X_t}(t-1)c)-r_aQ_{t,a}\bar\mu_a(t-1),
$$
over $c\in[0,1]$ and $a\in\{1,\ldots,K\}$. Therefore, for every alternative pair $(\tilde c_t,\tilde a_t)$,
\begin{equation}\label{eq: objective bound with tildes both unknown}
\begin{aligned}
\Delta_t+Vc_t \leq & \; r_{\max} + \left( V\tilde c_t+r_{X_t}Q_{t,X_t}(1-\bar p_{X_t}(t-1)\tilde c_t)-r_{\tilde a_t}Q_{t,\tilde a_t}\bar\mu_{\tilde a_t}(t-1) \right)\\
& + r_{X_t}Q_{t,X_t}c_t\left(\bar p_{X_t}(t-1)-p_{X_t}\right) + r_{a_t}Q_{t,a_t}\left(\bar\mu_{a_t}(t-1)-\mu_{a_t}\right).
\end{aligned}
\end{equation}

\paragraph{Comparison with the static benchmark.}
We now compare the policy with the static solution $(\bm c^*,\bm\rho^*)$. Choose $(\tilde c_t,\tilde a_t)$ in Equation \eqref{eq: objective bound with tildes both unknown} as follows:
\begin{itemize}
    \item if $X_t=k$, then $\tilde c_t=c_k^*$;
    \item choose $\tilde a_t=j$ with probability $\rho_j^*$.
\end{itemize}
Note that this randomized scheduling rule, induced by the solution of the static optimization problem, is not queue-aware: if the selected queue is empty, then no service takes place during that slot, so the policy effectively wastes one service opportunity.

Substituting $\tilde c_t=c_{X_t}^*$ into \eqref{eq: objective bound with tildes both unknown} and taking conditional expectation with respect to the randomized comparison action $\tilde a_t$, given the history up to the beginning of slot $t$ and the realized type $X_t$, we first obtain
\begin{align*}
\Delta_t+V(c_t-c_{X_t}^*) \leq &\; r_{\max} + r_{X_t}Q_{t,X_t}\left(1-\bar p_{X_t}(t-1)c_{X_t}^*\right) - \sum_{j=1}^K \rho_j^*\, r_jQ_{t,j}\bar\mu_j(t-1)\\
&+ r_{X_t}Q_{t,X_t}c_t\left(\bar p_{X_t}(t-1)-p_{X_t}\right) + r_{a_t}Q_{t,a_t}\left(\bar\mu_{a_t}(t-1)-\mu_{a_t}\right).
\end{align*}
Here we used that
$$
\mathbb{E}\!\left[r_{\tilde a_t}Q_{t,\tilde a_t}\bar\mu_{\tilde a_t}(t-1)\mid \mathcal F_t\right]
=
\sum_{j=1}^K \rho_j^*\, r_jQ_{t,j}\bar\mu_j(t-1).
$$

We then take conditional expectation with respect to $X_t$, conditioning on the information available at the beginning of slot $t$. Since $\mathbb{P}(X_t=k\mid\mathcal H_t)=\lambda_k$, we obtain
\begin{align*}
\mathbb E\!\left[\Delta_t+V(c_t-c_{X_t}^*) \mid \mathcal H_t\right]
\leq & \; r_{\max} + \sum_{k=1}^K r_kQ_{t,k} \left( \lambda_k(1-\bar p_k(t-1)c_k^*) - \rho_k^*\bar\mu_k(t-1) \right)\\
&+\mathbb E\!\left[r_{X_t}Q_{t,X_t}c_t\left(\bar p_{X_t}(t-1)-p_{X_t}\right)\mid \mathcal H_t\right]\\
&+\mathbb E\!\left[r_{a_t}Q_{t,a_t}\left(\bar\mu_{a_t}(t-1)-\mu_{a_t}\right)\mid \mathcal H_t\right].
\end{align*}

Now suppose that the good event $\mathscr G_t$ holds. Then, for every $k\in[K]$,
$$
\bar p_k(t-1)\geq p_k \qquad \text{and} \qquad \bar\mu_k(t-1)\geq \mu_k.
$$
Therefore,
$$
\lambda_k(1-\bar p_k(t-1)c_k^*)- \rho_k^*\bar\mu_k(t-1) \leq \lambda_k(1-p_kc_k^*) - \rho_k^*\mu_k.
$$
Since $(\bm c^*,\bm\rho^*)$ is feasible for the static problem,
$$
\lambda_k(1-p_kc_k^*) - \rho_k^*\mu_k \le 0.
$$
Hence, under $\mathscr G_t$,
\begin{equation}
\begin{aligned}
\mathbb E\!\left[\Delta_t+V(c_t-c_{X_t}^*) \mid \mathcal H_t\right] \leq &\; 
r_{\max}+ \mathbb E\!\left[r_{X_t}Q_{t,X_t}c_t\left(\bar p_{X_t}(t-1)-p_{X_t}\right)\mid \mathcal H_t\right] \\
&+\mathbb E\!\left[r_{a_t}Q_{t,a_t}\left(\bar\mu_{a_t}(t-1)-\mu_{a_t}\right)\mid \mathcal H_t\right].
\end{aligned}
\end{equation}

Multiplying by $\mathds{1}\{\mathscr G_t\}$, taking expectation, and summing over $t$, we obtain
\begin{equation}\label{eq: final bound on c difference both unknown}
\begin{aligned}
\sum_{t=1}^T \mathbb E[(c_t-c_{X_t}^*)\mathds{1}\{\mathscr G_t\}] \leq & \; \frac{r_{\max}T}{V}+\frac{1}{V}\mathbb E\!\left[\sum_{t=1}^T r_{X_t}Q_{t,X_t}c_t\left(\bar p_{X_t}(t-1)-p_{X_t}\right)\mathds{1}\{\mathscr G_t\} \right]\\
&+\frac{1}{V}\mathbb E\!\left[\sum_{t=1}^Tr_{a_t}Q_{t,a_t}\left(\bar\mu_{a_t}(t-1)-\mu_{a_t}\right)\mathds{1}\{\mathscr G_t\}\right]\\
&-\frac{1}{V}\sum_{t=1}^T \mathbb E[\mathds{1}\{\mathscr G_t\}\Delta_t].
\end{aligned}
\end{equation}

\paragraph{Control of the drift term.}
Using the definition of $\Delta_t$ and linearity of expectation,
$$
\sum_{t=1}^T \mathbb E[\mathds{1}\{\mathscr G_t\}\Delta_t]=\mathbb E\!\left[\sum_{t=1}^T\mathds{1}\{\mathscr G_t\}\left(\Phi(\bm Q_{t+1})-\Phi(\bm Q_t)\right)\right].
$$
Since the events $\{\mathscr G_t\}_{t\in[T]}$ are decreasing and nested, we have
$$
\sum_{t=1}^T\mathds{1}\{\mathscr G_t\}\left(\Phi(\bm Q_{t+1})-\Phi(\bm Q_t)\right)\geq-\Phi(\bm Q_1)=0,
$$
because $\bm Q_1=\bm 0$. Therefore,
$$
-\sum_{t=1}^T \mathbb E[\mathds{1}\{\mathscr G_t\}\Delta_t]\le 0.
$$

Plugging this into Equation \eqref{eq: final bound on c difference both unknown}, we finally get
\begin{equation}\label{eq: regret decomposition final bound without Delta_t}
\begin{aligned}
\sum_{t=1}^T \mathbb E[(c_t-c_{X_t}^*)\mathds{1}\{\mathscr G_t\}] \leq & \; \frac{r_{\max}T}{V}+\frac{1}{V}\mathbb E\!\left[\sum_{t=1}^T
r_{X_t}Q_{t,X_t}c_t\left(\bar p_{X_t}(t-1)-p_{X_t}\right)\mathds{1}\{\mathscr G_t\}\right]\\
&+\frac{1}{V}\mathbb E\!\left[\sum_{t=1}^Tr_{a_t}Q_{t,a_t}\left(\bar\mu_{a_t}(t-1)-\mu_{a_t}\right)\mathds{1}\{\mathscr G_t\}\right].
\end{aligned}
\end{equation}

The three terms on the right-hand side are exactly $R_3(T)$, $R_1^p(T)$, and $R_1^\mu(T)$.

\paragraph{Conclusion.}
By Proposition \ref{propos: first regret bound},
$$
R_T^{\texttt{UCB-DPP}} \leq \sum_{t=1}^T \mathbb E[c_t-c_{X_t}^*] + \sum_{k=1}^K r_k\,\mathbb E[Q_{T+1,k}].
$$
Combining this with Equations \eqref{eq: good bad decomposition both unknown} and \ref{eq: regret decomposition final bound without Delta_t}, we obtain
$$
R_T^{\texttt{UCB-DPP}} \leq R_1^p(T)+R_1^\mu(T)+R_2(T)+R_3(T)+R_4(T),
$$
where the last term is exactly
$$
R_4(T)=\sum_{k=1}^K r_k\,\mathbb E[Q_{T+1,k}].
$$
This proves the lemma.
\hfill $\square$

\subsection{Proof of Proposition \ref{propos: bound on bad event proba}} \label{app: bound on bad event proba}

We split the good event as defined in Equation \eqref{eq: good event} into the service-side and chatbot-side parts. Define
$$
\mathscr G_T^\mu := \bigcap_{s=1}^T \bigcap_{k=1}^K \left\{ |\hat\mu_k(s-1)-\mu_k|\le \beta_k^\mu(s-1)\right\},
$$
and
$$
\mathscr G_T^p := \bigcap_{s=1}^T \bigcap_{k=1}^K \left\{ |\hat p_k(s-1)-p_k|\leq \beta_k^p(s-1) \right\}.
$$
Then
$$
\mathscr G_T=\mathscr G_T^\mu \cap \mathscr G_T^p,
$$
so that
$$
\mathbb P(\overline{\mathscr G}_T) \le \mathbb P(\overline{\mathscr G}_T^\mu)+\mathbb P(\overline{\mathscr{G}}_T^p).
$$

\paragraph{Control of the service-side event.}
For all $k \in [K]$ and $n \geq 1$, define the time of the $n$-th effective service attempt on type $k$ by
$$
\tau_{n,k} := \inf \{t \geq 1: N_k(t)=n\}.
$$
When $\tau_{n,k} < \infty$, define
$$
d_{n,k} := D_{\tau_{n,k},k},
$$
namely the indicator of completion observed at the $n$-th effective service attempt on type $k$.

By the model assumptions, every time the server makes an effective service attempt on a queue of type $k$, the completion occurs with probability $\mu_k$, independently of the past. Therefore, for every $k$, the sequence $(d_{n,k})_{n\ge 1}$ is i.i.d.\ Bernoulli with parameter $\mu_k$.

Moreover, for every $t$,
$$
S_k(t)=\sum_{n=1}^{N_k(t)} d_{n,k}.
$$
Hence, if $N_k(t)=n\ge 1$, then
$$
\hat\mu_k(t)=\frac{1}{n}\sum_{m=1}^n d_{m,k}.
$$

Now fix $s\in\{1,\dots,T\}$ and $k\in[K]$. If $N_k(s-1)=0$, then by definition
$$
\hat\mu_k(s-1)=0, \qquad \beta_k^\mu(s-1)=1,
$$
and therefore
$$
|\hat\mu_k(s-1)-\mu_k|\le 1=\beta_k^\mu(s-1).
$$

Assume now that $N_k(s-1)\ge 1$. Since $N_k(s-1)\le s-1$, we can write
\begin{equation*}
\mathbb P\!\left(|\hat\mu_k(s-1)-\mu_k|>\beta_k^\mu(s-1)\right) =
\sum_{n=1}^{s-1}
\mathbb P\!\left(
|\hat\mu_k(s-1)-\mu_k|>\beta_k^\mu(s-1),\ N_k(s-1)=n
\right).
\end{equation*}
On the event $\{N_k(s-1)=n\}$, we have
$$
\hat\mu_k(s-1)=\frac{1}{n}\sum_{m=1}^n d_{m,k},
\qquad
\beta_k^\mu(s-1)=\sqrt{\frac{\log\!\left(4K(s-1)/\delta\right)}{2n}}.
$$
Therefore,
\begin{equation*}
\mathbb P\!\left(|\hat\mu_k(s-1)-\mu_k|>\beta_k^\mu(s-1)\right) \le
\sum_{n=1}^{s-1}
\mathbb P\!\left(
\left|\frac{1}{n}\sum_{m=1}^n d_{m,k}-\mu_k\right|
>
\sqrt{\frac{\log\!\left(4K(s-1)/\delta\right)}{2n}}
\right).
\end{equation*}
By Hoeffding's inequality, for every $n\ge 1$,
$$
\mathbb P\!\left(
\left|\frac{1}{n}\sum_{m=1}^n d_{m,k}-\mu_k\right|
>
\sqrt{\frac{\log\!\left(4K(s-1)/\delta\right)}{2n}}
\right)
\le
\frac{\delta}{2K(s-1)}.
$$
Hence,
$$
\mathbb P\!\left(|\hat\mu_k(s-1)-\mu_k|>\beta_k^\mu(s-1)\right)
\le
\sum_{n=1}^{s-1}\frac{\delta}{2K(s-1)}
=
\frac{\delta}{2K}.
$$

Taking a union bound over $s\in\{2,\dots,T\}$ and $k\in[K]$, we get
$$
\mathbb P(\overline{\mathscr G}_T^\mu)
\le
\sum_{s=2}^T\sum_{k=1}^K \frac{\delta}{2K}
=
\frac{(T-1)\delta}{2}.
$$

\paragraph{Control of the chatbot-side event.}
Recall that the chatbot decision is bang-bang, so the estimation of $p_k$ is based on those slots in which $X_t=k$ and $c_t=1$.

For all $k \in [K]$ and $n \geq 1$, define the time of the $n$-th informative chatbot exposure on type $k$ by
$$
\sigma_{n,k}:=\inf\{t\ge 1:\ M_k(t)=n\}.
$$
When $\sigma_{n,k}<\infty$, define
$$
z_{n,k}:=Y_{\sigma_{n,k}},
$$
where $Y_t=\mathds{1}\{\text{chatbot succeeds at slot }t\}$.

By construction, whenever $t=\sigma_{n,k}$ we have $X_t=k$ and $c_t=1$, hence
$$
\mathbb E[Y_t\mid X_t=k,c_t=1]=p_k.
$$
Therefore, for every $k$, the sequence $(z_{n,k})_{n\ge 1}$ is i.i.d.\ Bernoulli with parameter $p_k$.

Moreover, for every $t$,
$$
G_k(t)=\sum_{n=1}^{M_k(t)} z_{n,k}.
$$
Hence, if $M_k(t)=n\ge 1$, then
$$
\hat p_k(t)=\frac{1}{n}\sum_{m=1}^n z_{m,k}.
$$

Now fix $s\in\{1,\dots,T\}$ and $k\in[K]$. If $M_k(s-1)=0$, then by definition
$$
\hat p_k(s-1)=0,
\qquad
\beta_k^p(s-1)=1,
$$
and therefore
$$
|\hat p_k(s-1)-p_k|\le 1=\beta_k^p(s-1).
$$

Assume now that $M_k(s-1)\ge 1$. As in the service-side case, we decompose according to the possible values of $M_k(s-1)$ and use
$$
\hat p_k(s-1)=\frac{1}{n}\sum_{m=1}^n z_{m,k}
\qquad
\text{on } \{M_k(s-1)=n\},
$$
Therefore,
\begin{align*}
\mathbb P\!\left(|\hat p_k(s-1)-p_k|>\beta_k^p(s-1)\right)
&\le
\sum_{n=1}^{s-1}
\mathbb P\!\left(
\left|\frac{1}{n}\sum_{m=1}^n z_{m,k}-p_k\right|
>
\sqrt{\frac{\log\!\left(4K(s-1)/\delta\right)}{2n}}
\right) \\
&\le
\sum_{n=1}^{s-1}\frac{\delta}{2K(s-1)}
=
\frac{\delta}{2K},
\end{align*}
where the second inequality follows from Hoeffding's inequality.

Taking a union bound over $s\in\{2,\dots,T\}$ and $k\in[K]$, we obtain
$$
\mathbb P(\overline{\mathscr G}_T^p)
\le
\sum_{s=2}^T\sum_{k=1}^K \frac{\delta}{2K} = \frac{(T-1)\delta}{2}.
$$

\paragraph{Conclusion.}
Combining the bounds on the two parts,
$$
\mathbb P(\overline{\mathscr G}_T)
\le
\mathbb P(\overline{\mathscr G}_T^\mu)+\mathbb P(\overline{\mathscr G}_T^p)
\le
\delta(T-1) \leq \delta T.
$$
This completes the proof.
\hfill $\square$

\subsection{Bounding $R_4(T)$}

Before proceeding, we introduce several auxiliary quantities that will be used throughout the proofs. Set
\begin{equation}\label{eq: A_epsilon}
    A_\varepsilon:=r_{\max}+V\sum_{k=1}^K \lambda_k c_k^\varepsilon,
\end{equation}
and
\begin{equation}\label{eq: L,eta,delta,C}
L_\varepsilon:=\left\lceil \frac{32\log(4KT/\delta)}{\varepsilon^2}\right\rceil,
\qquad
\eta_\varepsilon:=\frac{\varepsilon\sqrt{r_{\min}}}{8},
\qquad
\delta_r:=\sqrt{2r_{\max}},
\qquad
C_r:=\delta_r+\eta_\varepsilon.
\end{equation}

Finally, define
\begin{equation}\label{eq: theta_epsilon}
\theta_\varepsilon:=\frac{8A_\varepsilon}{\varepsilon\sqrt{r_{\min}}}.
\end{equation}

We also define, for each $t\in[T]$, the indicators of exceptional slots:
\begin{equation}\label{eq: exceptionl slots mu}
    J_t^\mu:=\mathds{1}\left\{Q_{t,a_t}>0,\ \beta^\mu_{a_t}(t-1)>\frac{\varepsilon}{8}\right\},
\end{equation}
and
\begin{equation}\label{eq: exceptionl slots p}
    J_t^p:=\mathds{1}\left\{c_t=1,\ \beta^p_{X_t}(t-1)>\frac{\varepsilon}{8}\right\}.
\end{equation}

These indicators identify the slots in which the relevant confidence radius is larger than a threshold proportional to $\varepsilon$. We finally define
\begin{equation}\label{eq: joint exceptionl slots}
    J_t:=\max\{J_t^\mu, J_t^p\}.
\end{equation}

We then state a well-known drift lemma introduced in \cite{yu2017online}. For our purposes, we present it directly in the special case $t_0=1$.
\begin{lemma}\label{lemma:hajek_drift}
Let $\{U_t\}_{t\ge 1}$ be a process adapted to a filtration $\{\mathcal F_t\}_{t\ge 1}$, with $U_1=0$. Assume that there exist constants $\theta>0$, $\delta>0$, and $\eta>0$ such that, for all $t\ge 1$,
$$
U_{t+1}-U_t\le \delta
\qquad \text{a.s.},
$$
and
$$
\mathbb E[U_{t+1}-U_t\mid \mathcal F_t]\le \delta
\qquad \text{whenever } U_t<\theta,
$$
while
$$
\mathbb E[U_{t+1}-U_t\mid \mathcal F_t]\le -\eta
\qquad \text{whenever } U_t\ge \theta.
$$
Then, for every $t\ge 1$,
$$
\mathbb E[U_t]
\le
\theta+\delta+\frac{4\delta^2}{\eta}\log\!\left(\frac{8\delta^2}{\eta^2}\right).
$$
\end{lemma}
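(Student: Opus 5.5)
The plan follows the exponential-supermartingale (Hajek-type) argument of \cite{yu2017online}, specialised to $t_0=1$ and $U_1=0$. Fix $r>0$, to be tuned later as a constant multiple of $\eta/\delta^2$, and write $D_t:=U_{t+1}-U_t$, so that $D_t\le\delta$ almost surely. The first goal is a one-step recursion of the form
$$
\mathbb E\big[e^{rU_{t+1}}\mid\mathcal F_t\big]\le \begin{cases} e^{rU_t}\,\big(1-\tfrac{r\eta}{2}\big), & U_t\ge\theta,\\ e^{r(\theta+\delta)}, & U_t<\theta.\end{cases}
$$
The second case is the easy one: since $D_t\le\delta$ surely, $U_{t+1}\le U_t+\delta<\theta+\delta$.

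Taking expectations and iterating from $e^{rU_1}=1$ then gives
$$
\mathbb E[e^{rU_{t+1}}]\le \big(1-\tfrac{r\eta}{2}\big)\mathbb E[e^{rU_t}]+e^{r(\theta+\delta)},
$$
and hence $\sup_t\mathbb E[e^{rU_t}]\le \frac{2}{r\eta}e^{r(\theta+\delta)}$. Jensen's inequality converts this into
$$
\mathbb E[U_t]\le \theta+\delta+\frac1r\log\frac{2}{r\eta}.
$$
Choosing $r=\eta/(4\delta^2)$ gives $\frac1r\log\frac{2}{r\eta}=\frac{4\delta^2}{\eta}\log\frac{8\delta^2}{\eta^2}$, which is exactly the constant in the statement. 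This calibration is how I will fix $r$; I would also assume $\eta\le\delta$ without loss of generality, since it only makes $r\delta\le 1/4$ small.

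The heart of the proof is the case $U_t\ge\theta$. Here I would expand $e^{rD_t}$ with a Taylor remainder valid on the whole half-line $D_t\le\delta$:
$$
e^{rD_t}=1+rD_t+\tfrac12 r^2D_t^2e^{\xi},\qquad \xi\le\max(0,rD_t)\le r\delta.
$$
After conditioning, the linear term contributes at most $-r\eta$, and one wants the quadratic term to be at most $\tfrac{r\eta}{2}$. This works immediately when the increments are also bounded below.

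\textbf{Main obstacle.} The statement only assumes the one-sided bound $D_t\le\delta$, and large downward jumps make $\mathbb E[D_t^2\mid\mathcal F_t]$ uncontrolled. Moreover, the naive clipping $\max(D_t,-\delta)$ breaks the hypothesis $\mathbb E[D_t\mid\mathcal F_t]\le-\eta$. Before committing to any such route, the first thing I will do is test the unbounded-jump regime against the target bound: a chain that climbs by $+\delta$ and, with small probability $p$, drops by a large $M$ (so the drift above $\theta$ is $-\eta$), and whose paths then spend of order $M/\delta$ slots below $\theta$. This check decides how to proceed.

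If the statement survives the test, I will try the following device first: separate the positive and negative parts $D_t=D_t^+-D_t^-$, and bound $e^{-rD_t^-}\le 1-r\,\phi_r(D_t^-)$ for a suitable concave $\phi_r$. The aim is to show that the drift hypothesis, together with $D_t^+\le\delta$, still forces $\mathbb E[e^{rD_t}\mid\mathcal F_t]\le 1-\tfrac{r\eta}{2}$, using that large drops only decrease $e^{rU}$. I expect this step to be the delicate one.

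If it does not go through, the rest of the argument will proceed under an explicit additional hypothesis bounding the downward increments, and I will flag it as such rather than pass it off as the lemma as worded. This is harmless for the later application: there $U_t=Z_t$, and one move changes at most two coordinates by one, so $|Z_{t+1}-Z_t|\le\sqrt{2r_{\max}}=\delta_r$. That application is then covered by the two-sided version, in which the quadratic term is at most $\tfrac12 r^2\delta^2e^{r\delta}\le\tfrac{r\eta}{2}$ for the chosen $r$.
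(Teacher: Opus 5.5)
\textbf{The open branch cannot be closed: the statement as worded is false, and your own test chain is a counterexample.} The paper gives no proof of this lemma; it only imports Lemma~5 of \cite{yu2017online} with $t_0=1$. That source assumes the two-sided bound $|U_{t+1}-U_t|\le\delta$ and $0<\eta\le\delta$. The restatement here keeps only $U_{t+1}-U_t\le\delta$, which is not enough.

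Run your test. Take $\delta=1$, $\eta=\tfrac12$, $\theta=1$, an integer $M\ge1$, and $p=\tfrac{3}{2(M+1)}$.
\begin{itemize}
\item If $U_t<1$, set $U_{t+1}=U_t+1$.
\item If $U_t\ge1$, set $U_{t+1}=U_t+1$ with probability $1-p$, and $U_{t+1}=U_t-M$ with probability $p$.
\end{itemize}
All hypotheses hold; in particular the drift above $\theta$ is $(1-p)-pM=-\tfrac12$.

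From $U_1=0$ the path is $U_t=t-1$ until the first drop. A drop at time $\tau\le M+1$ lands at $\tau-1-M\le 0$. After that the path climbs deterministically and first returns to $\theta$ at time $M+3$, so at most one drop occurs before time $M+2$. Hence $U_{M+2}=M+1$ if none of the $M$ coins at times $2,\dots,M+1$ is a drop, and $U_{M+2}=0$ otherwise. Therefore $\mathbb E[U_{M+2}]=(M+1)(1-p)^M\sim e^{-3/2}(M+1)$, which is unbounded in $M$. The claimed bound is $2+8\log 32\approx 29.7$, and $M=200$ already gives about $45$.

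The same mechanism kills your $\phi_r$ device. For any fixed $r>0$, on $\{U_t\ge\theta\}$ we have $\mathbb E[e^{r(U_{t+1}-U_t)}\mid\mathcal F_t]=(1-p)e^{r}+pe^{-rM}\to e^{r}>1$ as $M\to\infty$. So no contraction $\le 1-\tfrac{r\eta}{2}$ can follow from the mean-drift hypothesis plus a one-sided bound. Drop that branch.

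\textbf{Your fallback is the right repair.} Under the two-sided bound your argument is correct and coincides with the proof in the cited source. It uses $r=\eta/(4\delta^2)$, the second-order bound giving contraction factor $1-\eta^2/(8\delta^2)$, the geometric recursion from $\mathbb E[e^{rU_1}]=1$, and Jensen, which yields exactly the stated constant. Two corrections are needed.

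First, $\eta\le\delta$ is not without loss of generality; it must be a hypothesis, as it is in the source. With two-sided increments and $\eta>\delta$ the process never reaches $\theta$. But for $\eta=8\delta$ the claimed bound is $\theta+\delta(1-\tfrac12\log 8)<\theta$. A process that jumps once to $\theta-0.01\delta$ and stays there (take $\theta\le\delta$) violates it. Replacing $\eta$ by $\delta$ only gives a weaker bound.

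Second, in the application $U_t$ is not $Z_t$. It is $Z_t\mathds{1}\{\mathscr G_t\}-C_r\sum_{s<t}J_s\mathds{1}\{\mathscr G_s\}$, whose increment at the first bad time is $-Z_t-C_rJ_t$, unbounded below. This is presumably why the paper weakened the hypothesis, so the two-sided version does not apply directly. The clean fix is a domination form of the lemma:
\begin{itemize}
\item assume $U_{t+1}-U_t\le W_t$, with $|W_t|\le\delta$;
\item assume $\mathbb E[W_t\mid\mathcal F_t]\le-\eta$ on $\{U_t\ge\theta\}$.
\end{itemize}
Your proof then goes through verbatim, because $e^{rU_{t+1}}\le e^{r(U_t+W_t)}$. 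The paper's process satisfies this with $W_t=(Z_{t+1}-Z_t-C_rJ_t)\mathds{1}\{\mathscr G_t\}$ and $\delta$ replaced by $\delta_r+C_r$. That $W_t$ is exactly the quantity whose conditional mean the paper bounds by $-\eta_\varepsilon$.
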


\subsubsection{Proof of Lemma \ref{lemma:deterministic_weighted_backlog}} \label{app: proof of det_weighted_backlog}

Recall that $\mathcal H_t$ denote the information available at the beginning of slot $t$, before the arrival type $X_t$ is observed. In particular, $\bm Q_t$, $\hat{\bm p}(t-1)$, $\hat{\bm\mu}(t-1)$, and $\mathscr G_t$ are $\mathcal H_t$-measurable. Recall also that $\mathcal F_t$ is the information available after observing the arrival type and selecting the control actions at slot $t$.

We first control the number of exceptional slots as defined in Equations \ref{eq: exceptionl slots mu}, \ref{eq: exceptionl slots p} and \ref{eq: joint exceptionl slots}.

\paragraph{Service-side exceptional slots.}
If $J_t^\mu=1$ and $a_t=k$, then either $N_k(t-1)=0$, in which case trivially
$$
N_k(t-1)=0<\frac{32\log(4KT/\delta)}{\varepsilon^2},
$$
or $N_k(t-1)\ge 1$, in which case necessarily $t\ge 2$ and
$$
\beta_k^\mu(t-1) = \sqrt{\frac{\log(4K(t-1)/\delta)}{2N_k(t-1)}} > \frac{\varepsilon}{8},
$$
which implies
$$
N_k(t-1) < \frac{32\log(4K(t-1)/\delta)}{\varepsilon^2} \le \frac{32\log(4KT/\delta)}{\varepsilon^2}.
$$
Since $Q_{t,k}>0$, slot $t$ is an effective service attempt on type $k$, hence the counter $N_k$ increases by one.  
More precisely, for each fixed $k\in[K]$, every slot $t$ such that $J_t^\mu=1$ and $a_t=k$ contributes one unit to the counter $N_k$. Since this can only occur while $N_k(t-1)<L_\varepsilon$ (as defined in Equation \ref{eq: L,eta,delta,C}), we must have
$$
\sum_{t=1}^T \mathds{1}\{J_t^\mu=1,\ a_t=k\}\le L_\varepsilon.
$$
Summing over $k\in[K]$, we obtain
\begin{equation}\label{eq: exceptional_slots_mu_bound}
\sum_{t=1}^T J_t^\mu = \sum_{k=1}^K \sum_{t=1}^T \mathds{1}\{J_t^\mu=1,\ a_t=k\} \le K L_\varepsilon.
\end{equation}

\paragraph{Chatbot-side exceptional slots.}
We can reason in the same way as in the previous case. If $J_t^p=1$ and $X_t=k$, then by definition $c_t=1$, so slot $t$ is an informative chatbot exposure for type $k$, and therefore the counter $M_k$ increases by one at time $t$.

Moreover, either $M_k(t-1)=0$, in which case trivially
$$
M_k(t-1)=0<\frac{32\log(4KT/\delta)}{\varepsilon^2},
$$
or $M_k(t-1)\ge 1$, in which case necessarily $t\ge 2$ and
$$
\beta_k^p(t-1) = \sqrt{\frac{\log(4K(t-1)/\delta)}{2M_k(t-1)}} > \frac{\varepsilon}{8},
$$
which implies
$$
M_k(t-1) < \frac{32\log(4K(t-1)/\delta)}{\varepsilon^2} \le \frac{32\log(4KT/\delta)}{\varepsilon^2}.
$$
Hence, for each fixed $k$, the event $\{J_t^p=1,\ X_t=k\}$ can occur at most $L_\varepsilon$ times. Summing over $k$, we obtain the bound
\begin{equation}\label{eq: exceptional_slots_p_bound}
\sum_{t=1}^T J_t^p \le K L_\varepsilon.
\end{equation}

Since $J_t \le J_t^\mu+J_t^p$, it follows that
\begin{equation}\label{eq: exceptional_slots_total_bound}
\sum_{t=1}^T J_t \le 2K L_\varepsilon.
\end{equation}

\paragraph{Drift bound on good non-exceptional slots.}
By Lemma \ref{lemma: bound drift DPP preemptive},
$$
\Delta_t \le r_{\max}+r_{X_t}Q_{t,X_t}(1-p_{X_t}c_t)-r_{a_t}Q_{t,a_t}\mu_{a_t}.
$$
Hence, by adding and subtracting relevant quantities,
\begin{align*}
\Delta_t+Vc_t \le & \;r_{\max} + \left(Vc_t+r_{X_t}Q_{t,X_t}(1-\bar p_{X_t}(t-1)c_t)-r_{a_t}Q_{t,a_t}\bar\mu_{a_t}(t-1) \right) \\
& + r_{X_t}Q_{t,X_t}c_t\left(\bar p_{X_t}(t-1)-p_{X_t}\right)
+ r_{a_t}Q_{t,a_t}\left(\bar\mu_{a_t}(t-1)-\mu_{a_t}\right).
\end{align*}

By policy definition, for every realized $X_t$, the \texttt{UCB-DPP} policy minimizes in $(c,a)$ the surrogate objective
$$
Vc+r_{X_t}Q_{t,X_t}(1-\bar p_{X_t}(t-1)c)-r_aQ_{t,a}\bar\mu_a(t-1).
$$

We now compare the policy with the randomized stationary policy induced by $(\bm c^\varepsilon,\bm\rho^\varepsilon)$, whose existence is guaranteed by assumption. More precisely, we consider the policy defined as follows:
\begin{itemize}
    \item choose $\tilde c_t=c_{X_t}^\varepsilon$;
    \item choose $\tilde a_t=k$ with probability $\rho_k^\varepsilon$.
\end{itemize}

Hence, by taking conditional expectation first with respect to the randomized comparison action and then with respect to $X_t$, conditioning on $\mathcal H_t$, we get
\begin{align*}
\mathbb E[\Delta_t+Vc_t\mid \mathcal H_t] \le & \; A_\varepsilon
+ \sum_{k=1}^K r_kQ_{t,k}
\left(
\lambda_k(1-\bar p_k(t-1)c_k^\varepsilon)-\rho_k^\varepsilon\bar\mu_k(t-1)
\right) \\
&+ \mathbb E\!\left[ r_{X_t}Q_{t,X_t}c_t\left(\bar p_{X_t}(t-1)-p_{X_t}\right) \,\middle|\, \mathcal H_t \right] \\
&+ \mathbb E\!\left[ r_{a_t}Q_{t,a_t}\left(\bar\mu_{a_t}(t-1)-\mu_{a_t}\right) \,\middle|\, \mathcal H_t \right].
\end{align*}
where $A_\varepsilon$ is defined in Equation \ref{eq: A_epsilon}.

Now let us work on the event $\mathscr G_t\cap\{J_t=0\}$. Since this event is $\mathcal F_t$-measurable, we first control the last two terms pathwise on $\mathcal F_t$, and then return to $\mathcal H_t$ by the tower property.

Since $\mathscr G_t$ holds and $J_t^\mu=0$, we have
$$
0\le \bar\mu_{a_t}(t-1)-\mu_{a_t}\le 2\beta^\mu_{a_t}(t-1)\le \frac{\varepsilon}{4},
$$
and therefore, pathwise on $\mathscr G_t\cap\{J_t=0\}$,
$$
r_{a_t}Q_{t,a_t}\left(\bar\mu_{a_t}(t-1)-\mu_{a_t}\right)
\le
\frac{\varepsilon}{4}\sum_{k=1}^K r_kQ_{t,k}.
$$

Moreover, on $\mathscr G_t\cap\{J_t=0\}$, the chatbot optimism term is controlled pathwise as follows:
\begin{itemize}
    \item if $c_t=0$, then
    $$
    r_{X_t}Q_{t,X_t}c_t\left(\bar p_{X_t}(t-1)-p_{X_t}\right)=0;
    $$
    \item if $c_t=1$, then $J_t^p=0$, hence
    $$
    \beta^p_{X_t}(t-1)\le \frac{\varepsilon}{8},
    $$
    and since $\mathscr G_t$ holds,
    $$
    0\le \bar p_{X_t}(t-1)-p_{X_t}\le 2\beta^p_{X_t}(t-1)\le \frac{\varepsilon}{4}.
    $$
    Therefore,
    $$
    r_{X_t}Q_{t,X_t}c_t\left(\bar p_{X_t}(t-1)-p_{X_t}\right)
    \le
    \frac{\varepsilon}{4}\,r_{X_t}Q_{t,X_t}
    \le
    \frac{\varepsilon}{4}\sum_{k=1}^K r_kQ_{t,k}.
    $$
\end{itemize}

Thus, by the tower property, 
\begin{align*}
&\mathbb E\!\left[
r_{a_t}Q_{t,a_t}\left(\bar\mu_{a_t}(t-1)-\mu_{a_t}\right)
\mathds{1}\{\mathscr G_t\}\mathds{1}\{J_t=0\}
\,\middle|\, \mathcal H_t
\right] \\
&\qquad \le
\frac{\varepsilon}{4}\sum_{k=1}^K r_kQ_{t,k}\,
\mathbb P(\mathscr G_t\cap\{J_t=0\}\mid \mathcal H_t),
\end{align*}
and similarly
\begin{align*}
&\mathbb E\!\left[
r_{X_t}Q_{t,X_t}c_t\left(\bar p_{X_t}(t-1)-p_{X_t}\right)
\mathds{1}\{\mathscr G_t\}\mathds{1}\{J_t=0\}
\,\middle|\, \mathcal H_t
\right] \\
&\qquad \le
\frac{\varepsilon}{4}\sum_{k=1}^K r_kQ_{t,k}\,
\mathbb P(\mathscr G_t\cap\{J_t=0\}\mid \mathcal H_t).
\end{align*}

Finally, since $\mathscr G_t$ implies $\bar p_k(t-1)\ge p_k$ and $\bar\mu_k(t-1)\ge \mu_k$, strict feasibility gives
$$
\lambda_k(1-\bar p_k(t-1)c_k^\varepsilon)-\rho_k^\varepsilon\bar\mu_k(t-1)
\le
\lambda_k(1-p_kc_k^\varepsilon)-\rho_k^\varepsilon\mu_k
\le -\varepsilon.
$$
Therefore,
\begin{align*}
&\mathbb E\!\left[(\Delta_t+Vc_t)\mathds{1}\{\mathscr G_t\}\mathds{1}\{J_t=0\}\mid \mathcal H_t\right] \\
&\qquad \le
\left(
A_\varepsilon
-\varepsilon\sum_{k=1}^K r_kQ_{t,k}
+\frac{\varepsilon}{4}\sum_{k=1}^K r_kQ_{t,k}
+\frac{\varepsilon}{4}\sum_{k=1}^K r_kQ_{t,k}
\right)
\mathbb P(\mathscr G_t\cap\{J_t=0\}\mid \mathcal H_t),
\end{align*}
that is,
\begin{equation}\label{eq: drift_good_non_exceptional_weighted}
\mathbb E\!\left[(\Delta_t+Vc_t)\mathds{1}\{\mathscr G_t\}\mathds{1}\{J_t=0\}\mid \mathcal H_t\right]
\le
\left(
A_\varepsilon-\frac{\varepsilon}{2}\sum_{k=1}^K r_kQ_{t,k}
\right)
\mathbb P(\mathscr G_t\cap\{J_t=0\}\mid \mathcal H_t).
\end{equation}

Since $Vc_t\ge 0$, it follows that
\begin{equation}\label{eq: drift_only_good_non_exceptional_weighted}
\mathbb E\!\left[\Delta_t\mathds{1}\{\mathscr G_t\}\mathds{1}\{J_t=0\}\mid \mathcal H_t\right]
\le
\left(
A_\varepsilon-\frac{\varepsilon}{2}\sum_{k=1}^K r_kQ_{t,k}
\right)
\mathbb P(\mathscr G_t\cap\{J_t=0\}\mid \mathcal H_t).
\end{equation}

Now observe that
$$
\sum_{k=1}^K r_kQ_{t,k}
= \sum_{k=1}^K \sqrt{r_k}\,(\sqrt{r_k}Q_{t,k})
\ge \sqrt{r_{\min}}\sum_{k=1}^K \sqrt{r_k}Q_{t,k}
\ge \sqrt{r_{\min}} \left(\sum_{k=1}^K r_kQ_{t,k}^2\right)^{1/2}
= \sqrt{r_{\min}}\,Z_t.
$$
The third relation is true since each element of the sum $\sum_k \sqrt{r_k} Q_{t,k}$ is non-negative.

Hence, from \eqref{eq: drift_only_good_non_exceptional_weighted},
\begin{equation}\label{eq: drift_only_good_non_exceptional_weighted_Z}
\mathbb E\!\left[\Delta_t\mathds{1}\{\mathscr G_t\}\mathds{1}\{J_t=0\}\mid \mathcal H_t\right]
\le
\left(
A_\varepsilon-\frac{\varepsilon\sqrt{r_{\min}}}{2}Z_t
\right)
\mathbb P(\mathscr G_t\cap\{J_t=0\}\mid \mathcal H_t).
\end{equation}

Since
$$
\Delta_t=\frac12\,\mathbb E[Z_{t+1}^2-Z_t^2\mid \mathcal F_t],
$$
the tower property yields
$$
\mathbb E\!\left[(Z_{t+1}^2-Z_t^2)\mathds{1}\{\mathscr G_t\}\mathds{1}\{J_t=0\}\mid \mathcal H_t\right]
=
2\,\mathbb E\!\left[\Delta_t\mathds{1}\{\mathscr G_t\}\mathds{1}\{J_t=0\}\mid \mathcal H_t\right].
$$
Therefore, by \eqref{eq: drift_only_good_non_exceptional_weighted_Z},
\begin{equation}\label{eq: squared_drift_good_non_exceptional}
\mathbb E\!\left[(Z_{t+1}^2-Z_t^2)\mathds{1}\{\mathscr G_t\}\mathds{1}\{J_t=0\}\mid \mathcal H_t\right]
\le
\left(
2A_\varepsilon-\varepsilon\sqrt{r_{\min}}\,Z_t
\right)
\mathbb P(\mathscr G_t\cap\{J_t=0\}\mid \mathcal H_t).
\end{equation}

Whenever $U_t\ge \theta_\varepsilon$, we have $U_t\le Z_t\mathds{1}\{\mathscr G_t\}\le Z_t$, hence necessarily $\mathscr G_t$ holds and $Z_t\ge \theta_\varepsilon$. By the definition of $\theta_\varepsilon$ (Equation \ref{eq: theta_epsilon}),
$$
\varepsilon\sqrt{r_{\min}}\,Z_t \ge 8A_\varepsilon,
$$
and therefore
$$
2A_\varepsilon-\varepsilon\sqrt{r_{\min}}\,Z_t
\le
-\frac{\varepsilon\sqrt{r_{\min}}}{4}Z_t
=
-2\eta_\varepsilon Z_t.
$$
Hence, on $\{U_t\ge \theta_\varepsilon\}$,
\begin{equation}\label{eq: squared_drift_good_non_exceptional_negative}
\mathbb E\!\left[(Z_{t+1}^2-Z_t^2)\mathds{1}\{\mathscr G_t\}\mathds{1}\{J_t=0\}\mid \mathcal H_t\right]
\le
-2\eta_\varepsilon Z_t\,\mathbb P(\mathscr G_t\cap\{J_t=0\}\mid \mathcal H_t).
\end{equation}

Since
$$
Z_t^2-2\eta_\varepsilon Z_t \le (Z_t-\eta_\varepsilon)^2,
$$
it follows from \eqref{eq: squared_drift_good_non_exceptional_negative} that, on $\{U_t\ge \theta_\varepsilon\}$,
$$
\mathbb E\!\left[Z_{t+1}^2\mathds{1}\{\mathscr G_t\}\mathds{1}\{J_t=0\}\mid \mathcal H_t\right]
\le
(Z_t-\eta_\varepsilon)^2\,
\mathbb P(\mathscr G_t\cap\{J_t=0\}\mid \mathcal H_t).
$$
By Cauchy--Schwarz,
\begin{align*}
\mathbb E\!\left[Z_{t+1}\mathds{1}\{\mathscr G_t\}\mathds{1}\{J_t=0\}\mid \mathcal H_t\right]^2
&\le
\mathbb P(\mathscr G_t\cap\{J_t=0\}\mid \mathcal H_t)\,
\mathbb E\!\left[Z_{t+1}^2\mathds{1}\{\mathscr G_t\}\mathds{1}\{J_t=0\}\mid \mathcal H_t\right] \\
&\le
(Z_t-\eta_\varepsilon)^2\,
\mathbb P(\mathscr G_t\cap\{J_t=0\}\mid \mathcal H_t)^2.
\end{align*}
Since $Z_{t+1}\ge 0$, we conclude that on $\{U_t\ge \theta_\varepsilon\}$,
\begin{equation}\label{eq: Z_(t+1)-Z_t < -eta}
\mathbb E\!\left[(Z_{t+1}-Z_t)\mathds{1}\{\mathscr G_t\}\mathds{1}\{J_t=0\}\mid \mathcal H_t\right]
\le
-\eta_\varepsilon\,\mathbb P(\mathscr G_t\cap\{J_t=0\}\mid \mathcal H_t).
\end{equation}

\paragraph{Compensated process.}
Since in each slot there is at most one arrival and at most one departure, using inverse triangular inequality we get
$$
Z_{t+1}-Z_t \le \left( \sum_{k=1}^K r_k ( Q_{t+1,k}-Q_{t,k})^2\right)^{1/2} \leq \sqrt{2r_{\max}}.
$$
Now recall that $\delta_r = \sqrt{2r_{\max}}$ as defined in Equation \ref{eq: L,eta,delta,C}.

Define the process
$$
U_t:=Z_t\,\mathds{1}\{\mathscr G_t\} - C_r\sum_{s=1}^{t-1}J_s\,\mathds{1}\{\mathscr G_s\}, \qquad t=1,\dots,T+1,
$$
where $C_r$ is defined in Equation \ref{eq: L,eta,delta,C}.
Since $\bm Q_1=0$, we have $Z_1=0$, and thus
$$
U_1=0.
$$

We claim that Lemma \ref{lemma:hajek_drift} applies to $\{U_t\}_{t\ge 1}$ with respect to the filtration $\{\mathcal H_t\}_{t\ge 1}$.

First,
$$
U_{t+1}-U_t = Z_{t+1}\mathds{1}\{\mathscr G_{t+1}\} - Z_t\mathds{1}\{\mathscr G_t\} - C_rJ_t\mathds{1}\{\mathscr G_t\}.
$$
Since $\mathscr G_{t+1}\subseteq \mathscr G_t$, we have $\mathds{1}\{\mathscr G_{t+1}\}\le \mathds{1}\{\mathscr G_t\}$, so
$$
U_{t+1}-U_t
\le
(Z_{t+1}-Z_t)\mathds{1}\{\mathscr G_t\} - C_rJ_t\mathds{1}\{\mathscr G_t\}
\le
Z_{t+1}-Z_t
\le
\delta_r.
$$
The first condition of Lemma \ref{lemma:hajek_drift} is therefore satisfied.

If $U_t<\theta_\varepsilon$, then trivially
$$
\mathbb E[U_{t+1}-U_t\mid \mathcal H_t]\le \delta_r.
$$
Thus the second condition of Lemma \ref{lemma:hajek_drift} also holds.

Now suppose that $U_t\ge \theta_\varepsilon$. Since $U_t\le Z_t\mathds{1}\{\mathscr G_t\}\le Z_t$, we must have $\mathscr G_t$ and $Z_t\ge \theta_\varepsilon$. Then
\begin{align*}
\mathbb E[U_{t+1}-U_t\mid \mathcal H_t]
\le & \; \mathbb E[(Z_{t+1}-Z_t)\mathds{1}\{\mathscr G_t\}\mathds{1}\{J_t=0\}\mid \mathcal H_t]\\
& + \mathbb E[(Z_{t+1}-Z_t-C_r)\mathds{1}\{\mathscr G_t\}\mathds{1}\{J_t=1\}\mid \mathcal H_t].
\end{align*}

By \eqref{eq: Z_(t+1)-Z_t < -eta},
$$
\mathbb E[(Z_{t+1}-Z_t)\mathds{1}\{\mathscr G_t\}\mathds{1}\{J_t=0\}\mid \mathcal H_t]
\le
-\eta_\varepsilon\,\mathbb P(\mathscr G_t\cap\{J_t=0\}\mid \mathcal H_t).
$$

On the other hand, on $\mathscr G_t\cap\{J_t=1\}$ we simply use $Z_{t+1}-Z_t\le \delta_r$, hence
\begin{align*}
\mathbb E[(Z_{t+1}-Z_t-C_r)\mathds{1}\{\mathscr G_t\}\mathds{1}\{J_t=1\}\mid \mathcal H_t]
&\le
(\delta_r-C_r)\,
\mathbb P(\mathscr G_t\cap\{J_t=1\}\mid \mathcal H_t) \\
&=
-\eta_\varepsilon\,
\mathbb P(\mathscr G_t\cap\{J_t=1\}\mid \mathcal H_t).
\end{align*}
Combining the two estimates yields
$$
\mathbb E[U_{t+1}-U_t\mid \mathcal H_t]
\le
-\eta_\varepsilon.
$$
Therefore the third condition of Lemma \ref{lemma:hajek_drift} is satisfied as well.

We may now apply Lemma \ref{lemma:hajek_drift} with
$$
\theta=\theta_\varepsilon, \qquad \delta=\delta_r, \qquad \eta=\eta_\varepsilon.
$$
It follows that
\begin{equation}\label{eq: bound on U_(T+1)}
    \mathbb E[U_{T+1}] \le M_\varepsilon
\end{equation}
with
\begin{equation}\label{eq: M_epsilon}
M_\varepsilon = \left(\frac{8\sum_{k=1}^K\lambda_k c_k^\varepsilon}{\varepsilon \sqrt{r_{\min}}} \right) \cdot V +\frac{8 r_{\max}}{\varepsilon \sqrt{r_{\min}}}+\sqrt{2 r_{\max}} +\frac{16r_{\max}}{\varepsilon \sqrt{r_{\min}}} \log \left( \frac{1024r_{\max}}{\varepsilon^2 r_{\min}}\right).
\end{equation}
Since we are just interested in the dependency on $T$, we can observe that $M_\varepsilon=\mathcal O(V)$ (the choice of the right value of $V$ is horizon-dependent).

Finally, by definition of $U_{T+1}$,
$$
U_{T+1}
=
Z_{T+1}\mathds{1}\{\mathscr G_T\}
-
C_r\sum_{t=1}^T J_t\mathds{1}\{\mathscr G_t\}.
$$
Hence
$$
Z_{T+1}\mathds{1}\{\mathscr G_T\}
\le
U_{T+1}+C_r\sum_{t=1}^T J_t.
$$
Using \eqref{eq: exceptional_slots_total_bound}, we conclude that
$$
Z_{T+1}\mathds{1}\{\mathscr G_T\}
\le
U_{T+1}+2C_rKL_\varepsilon.
$$
Taking expectations and using Equation \ref{eq: bound on U_(T+1)}, we finally get
$$
\mathbb E[Z_{T+1}\mathds{1}\{\mathscr G_T\}] \le M_\varepsilon+2C_rKL_\varepsilon.
$$
We can observe that $2 C_r K L_\varepsilon = \mathcal{O}(K \log (T/\delta))$.

This proves the lemma.
\hfill $\square$

\subsubsection{Proof of Proposition \ref{prop:R4_bound_weighted}}\label{app: proof of R4 bound}
We decompose
$$
R_4(T)
=
\sum_{k=1}^K r_k\,\mathbb E[Q_{T+1,k}\mathds{1}\{\mathscr G_T\}]
+
\sum_{k=1}^K r_k\,\mathbb E[Q_{T+1,k}\mathds{1}\{\overline{\mathscr G}_T\}].
$$

Since at most one task arrives in each slot and the queues start empty, the total backlog satisfies
$$
\sum_{k=1}^K Q_{T+1,k}\le T.
$$
Hence
$$
\sum_{k=1}^K r_kQ_{T+1,k}
\le
r_{\max}\sum_{k=1}^K Q_{T+1,k}
\le
r_{\max}T,
$$
and therefore
$$
\sum_{k=1}^K r_k\,\mathbb E[Q_{T+1,k}\mathds{1}\{\overline{\mathscr G}_T\}]
\le
r_{\max}T\,\mathbb P(\overline{\mathscr G}_T).
$$

On the good event, by Cauchy--Schwarz,
$$
\sum_{k=1}^K r_kQ_{T+1,k}
=
\sum_{k=1}^K \sqrt{r_k}\,(\sqrt{r_k}Q_{T+1,k})
\le
\left(\sum_{k=1}^K r_k\right)^{1/2}
\left(\sum_{k=1}^K r_kQ_{T+1,k}^2\right)^{1/2}
=
\left(\sum_{k=1}^K r_k\right)^{1/2}Z_{T+1}.
$$
Thus
$$
\sum_{k=1}^K r_k\,\mathbb E[Q_{T+1,k}\mathds{1}\{\mathscr G_T\}]
\le
\left(\sum_{k=1}^K r_k\right)^{1/2}
\mathbb E[Z_{T+1}\mathds{1}\{\mathscr G_T\}].
$$
Applying Lemma \ref{lemma:deterministic_weighted_backlog}, we obtain
$$
\sum_{k=1}^K r_k\,\mathbb E[Q_{T+1,k}\mathds{1}\{\mathscr G_T\}]
\le
\left(\sum_{k=1}^K r_k\right)^{1/2}
\left(M_\varepsilon+2C_rKL_\varepsilon\right).
$$

Combining the good-event and bad-event contributions yields
$$
R_4(T)
\le
\left(\sum_{k=1}^K r_k\right)^{1/2}
\left(M_\varepsilon+2C_rKL_\varepsilon\right)
+
r_{\max}T\,\mathbb P(\overline{\mathscr G}_T).
$$

Finally, since $M_\varepsilon=\mathcal O(V)$ and $2C_rKL_\varepsilon = \mathcal{O}(K\log (T/\delta))$, we conclude that
$$
R_4(T)=\mathcal{O}\left(V+K\log (T/\delta)\right) +r_{\max}T\,\mathbb P(\overline{\mathscr G}_T).
$$
\hfill $\square$

\subsection{Bounding $R_1^\mu(T)$ and $R_1^p(T)$}

Before proceeding to the bound on $R_1^\mu(T)$ and $R_1^p(T)$, we need an additional technical lemma. The result extends the backlog estimate of Lemma~\ref{lemma:deterministic_weighted_backlog} to suitable random times, which naturally arise in the analysis of the estimation errors.

For every $k\in[K]$ and every $n\in[T]$, let us first define the random time of the $n$-th effective service on queue $k$ as
\begin{equation}\label{eq: definition tau_kn}
    \tau_{k,n}:=\inf\{t\ge 1:\ N_k(t)=n\}.
\end{equation}
Similarly, for every $k\in[K]$ and every $n\in[T]$, define the random time of the $n$-th informative chatbot exposure on type $k$ as
\begin{equation}\label{eq: definition sigma_kn}
    \sigma_{k,n}:=\inf\{t\ge 1:\ M_k(t)=n\}.
\end{equation}

\begin{lemma}\label{lemma:weighted_random_times}
For every $k\in[K]$ and every $n\in[T]$, the random times $\tau_{k,n}\wedge T$ and $\sigma_{k,n}\wedge T$ are stopping times. Moreover, for every random time $\kappa_{k,n}\in\{\tau_{k,n},\sigma_{k,n}\}$, it holds that
$$
\mathbb E\!\left[
r_kQ_{\kappa_{k,n},k}\,
\mathds 1\{\kappa_{k,n}\le T\}\,
\mathds 1\{\mathscr G_{\kappa_{k,n}}\}
\right] = \mathcal{O}(V + K\log(T/\delta))
$$
\end{lemma}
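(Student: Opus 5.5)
The plan is to reduce the statement to a backlog bound at a random time, and then to control that bound uniformly over time using the compensated process $U_t$ from the proof of Lemma~\ref{lemma:deterministic_weighted_backlog}.

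\emph{Step 1: stopping times.} The counters $N_k(t)$ and $M_k(t)$ are determined by $(a_s,\bm Q_s)_{s\le t}$ and by $(X_s,c_s)_{s\le t}$. These quantities are fixed once the actions of slot $t$ are chosen, so the counters are $\mathcal F_t$-measurable. Hence $\{\tau_{k,n}\wedge T\le t\}$ is $\mathcal F_t$-measurable: for $t<T$ it equals $\{N_k(t)\ge n\}$, since $N_k$ is nondecreasing with unit increments, and for $t\ge T$ it is the whole space. The same argument applies to $\sigma_{k,n}$ with $M_k$.

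\emph{Step 2: reduction to $Z$.} Pointwise, $r_kQ_{t,k}\le\sqrt{r_k}\,(r_kQ_{t,k}^2)^{1/2}\le\sqrt{r_{\max}}\,Z_t$. It therefore suffices to bound $\mathbb E[Z_{\kappa}\mathds 1\{\kappa\le T\}\mathds 1\{\mathscr G_\kappa\}]$, where $\kappa=\kappa_{k,n}$. Recall that
$$
U_t=Z_t\mathds 1\{\mathscr G_t\}-C_r\sum_{s<t}J_s\mathds 1\{\mathscr G_s\},
$$
and that $\sum_{s\le T}J_s\le 2KL_\varepsilon$ deterministically by \eqref{eq: exceptional_slots_total_bound}. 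On $\{\kappa\le T\}$ this gives
$$
Z_\kappa\mathds 1\{\mathscr G_\kappa\}\le U_\kappa+2C_rKL_\varepsilon\le \max_{1\le t\le T+1}U_t+2C_rKL_\varepsilon .
$$
The second term is $\mathcal O(K\log(T/\delta))$. So everything reduces to bounding $\mathbb E[\max_{t\le T+1}U_t]$ by $\mathcal O(V+\log T)$.

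\emph{Step 3: uniform control of $U$ (main obstacle).} Lemma~\ref{lemma:hajek_drift}, as stated, only bounds $\mathbb E[U_t]$ at deterministic times. That is not enough here, because optional stopping cannot be applied directly: $U$ is not a supermartingale below $\theta_\varepsilon$. I would first go back to the drift result of \cite{yu2017online}. The conditions already verified in the proof of Lemma~\ref{lemma:deterministic_weighted_backlog} are increments bounded by $\delta_r$, drift at most $-\eta_\varepsilon$ above $\theta_\varepsilon$, and $U_1=0$. Under exactly these conditions that result also yields an exponential tail bound, uniform in $t$:
$$
\mathbb P(U_t\ge \theta_\varepsilon+x)\le c_1 e^{-c_2 x},
$$
with constants $c_1,c_2$ depending only on $(\delta_r,\eta_\varepsilon)$. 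A union bound over $t\le T+1$ then gives
$$
\mathbb P\Big(\max_{t\le T+1}U_t\ge \theta_\varepsilon+x\Big)\le (T+1)c_1e^{-c_2x}.
$$
Integrating this tail yields
$$
\mathbb E\Big[\max_{t\le T+1} U_t\Big]\le \theta_\varepsilon+\mathcal O(\log T)=\mathcal O(V+\log T),
$$
since $\theta_\varepsilon=\mathcal O(V)$ through $A_\varepsilon$.

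If the tail form is not available off the shelf, the fallback is to derive it directly. Apply the bounded-increment, negative-drift argument to $e^{\gamma U_t}$ with $\gamma$ small relative to $\eta_\varepsilon/\delta_r^2$. This shows that $\mathbb E[e^{\gamma U_t}]$ stays bounded by a constant times $e^{\gamma(\theta_\varepsilon+\delta_r)}$, after which Markov's inequality gives the tail. Combining Steps 1--3 with $\log T\le\log(T/\delta)$ then gives the claimed $\mathcal O(V+K\log(T/\delta))$, uniformly in $k$, $n$, and the choice $\kappa_{k,n}\in\{\tau_{k,n},\sigma_{k,n}\}$.
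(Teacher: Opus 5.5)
\textbf{Gap in Step 3.} Your architecture is the paper's. You use measurability of $\{N_k(t)\ge n\}$ and $\{M_k(t)\ge n\}$ and the reduction $r_kQ_{t,k}\le\sqrt{r_k}\,Z_t$. On $\{\kappa\le T\}\cap\mathscr G_\kappa$ you write $Z_\kappa$ as the compensated process plus at most $2C_rKL_\varepsilon$. You then combine a uniform exponential tail, a union bound over $t$, and integration of the tail.

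The problem is that Step 3 applies the tail bound of \cite{yu2017online}, or your exponential-moment fallback, to $U_t$ itself, claiming its increments are bounded by $\delta_r$. The proof of Lemma~\ref{lemma:deterministic_weighted_backlog} only verifies the one-sided bound $U_{t+1}-U_t\le\delta_r$, and a two-sided bound is false. At the first slot where the good event fails ($\mathscr G_t$ holds, $\mathscr G_{t+1}$ does not), $U_{t+1}-U_t=-Z_t-C_rJ_t$. Its magnitude is at least $Z_t$, hence at least $\theta_\varepsilon$ whenever $U_t\ge\theta_\varepsilon$, and it is not bounded by any constant. Even on good slots with $J_t=1$, the increment lies in $[-\delta_r-C_r,\delta_r-C_r]$, so the two-sided range is $B_\varepsilon=\delta_r+C_r$, not $\delta_r$.

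The tail part of Yu et al.'s lemma assumes $|U_{t+1}-U_t|\le\delta$. The Hoeffding-lemma step in your fallback likewise needs increments confined to a bounded interval. One-sided bounded increments plus negative drift do not give exponential tails in general. Take increments $+\delta$ with probability $1-p$ and $-M$ with probability $p=(\delta+\eta)/(M+\delta)$. The drift is exactly $-\eta$, yet the process climbs $x$ above the threshold before its first crash with probability $(1-p)^{x/\delta}\to 1$ as $M\to\infty$. So ``under exactly these conditions'' does not justify the uniform tail.

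\textbf{How the paper handles it.} This is precisely what the paper's proof is built around. It introduces the auxiliary process $V_t$, which coincides with $U_t$ before $\tau_{\mathrm{bad}}$ and afterwards decreases deterministically by $\eta_\varepsilon$ per step. As a result:
\begin{itemize}
\item $|V_{t+1}-V_t|\le B_\varepsilon$;
\item the drift is still at most $-\eta_\varepsilon$ above $\theta_\varepsilon$;
\item $V_\kappa=Z_\kappa-C_r\sum_{s<\kappa}J_s$ on $\{\kappa\le T\}\cap\mathscr G_\kappa$.
\end{itemize}
The paper then runs Hoeffding's lemma with $a_\varepsilon=\eta_\varepsilon/B_\varepsilon^2$ to get the recursion $\mathbb E[e^{a_\varepsilon V_{t+1}}]\le\rho_\varepsilon\mathbb E[e^{a_\varepsilon V_t}]+e^{a_\varepsilon(\theta_\varepsilon+B_\varepsilon)}$, followed by Markov's inequality.

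\textbf{Repairing your fallback without $V_t$.} Since $Z_{t+1}\ge 0$ and $\mathscr G_{t+1}\subseteq\mathscr G_t$, one has $U_{t+1}-U_t\le\xi_t:=(Z_{t+1}-Z_t-C_rJ_t)\mathds 1\{\mathscr G_t\}$, with $|\xi_t|\le B_\varepsilon$. The computation in the proof of Lemma~\ref{lemma:deterministic_weighted_backlog} actually shows $\mathbb E[\xi_t\mid\mathcal H_t]\le-\eta_\varepsilon$ on $\{U_t\ge\theta_\varepsilon\}$. Hence $\mathbb E[e^{\gamma U_{t+1}}\mid\mathcal H_t]\le e^{\gamma U_t}\,\mathbb E[e^{\gamma\xi_t}\mid\mathcal H_t]$, and Hoeffding's lemma applies to $\xi_t$ with $\gamma\asymp\eta_\varepsilon/B_\varepsilon^2$; the downward crash can only lower the exponential moment. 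With either fix the rest of your argument goes through unchanged.
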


\begin{proof}

Fix $k\in[K]$ and $n\in[T]$, and let $\kappa_{k,n}\in\{\tau_{k,n},\sigma_{k,n}\}$ as defined in Equations \eqref{eq: definition tau_kn} and \eqref{eq: definition sigma_kn}.

First, $\kappa_{k,n}\wedge T$ is a stopping time. Indeed,
$$
\{\tau_{k,n}\le t\}=\{N_k(t)\ge n\}\in\mathcal F_t, \qquad \{\sigma_{k,n}\le t\}=\{M_k(t)\ge n\}\in\mathcal F_t.
$$
Define the first bad time as
$$
\tau_{\mathrm{bad}}:=\inf\{t\ge 1:\overline{\mathscr G}_t\}\wedge (T+1).
$$
We define an auxiliary process $\{V_t\}_{t=1}^{T+1}$ by
$$
V_t=
\begin{cases}
Z_t - C_r\displaystyle\sum_{s=1}^{t-1}J_s, & \text{if } t<\tau_{\mathrm{bad}},\\[2mm]
V_{\tau_{\mathrm{bad}}-1}-\eta_\varepsilon\,(t-\tau_{\mathrm{bad}}+1), & \text{if } t\ge \tau_{\mathrm{bad}}.
\end{cases}
$$
where $\eta_\varepsilon$ and $C_r$ are defined in Equation \ref{eq: L,eta,delta,C}.
Thus, up to the first bad time, the process $V_t$ coincides with the compensated process $U_t$ used in the proof of Lemma~\ref{lemma:deterministic_weighted_backlog}, while after $\tau_{\mathrm{bad}}$ it is continued deterministically with slope $-\eta_\varepsilon$.

\paragraph{Bounded increments of $V_t$.}
Recall that, according to Equation \ref{eq: L,eta,delta,C},
$$
\delta_r=\sqrt{2r_{\max}}, \qquad C_r=\delta_r+\eta_\varepsilon.
$$
Plus recall that, since in each slot there is at most one arrival and at most one departure, we have
$$
|Z_{t+1}-Z_t| \le \left(\sum_{j=1}^K r_j(Q_{t+1,j}-Q_{t,j})^2\right)^{1/2} \le
\delta_r.
$$
Define
\begin{equation}\label{eq: B_epsilon}
    B_\varepsilon:=\delta_r+C_r.
\end{equation}

If $t<\tau_{\mathrm{bad}}-1$, then
$$
V_{t+1}-V_t=Z_{t+1}-Z_t-C_rJ_t,
$$
hence
$$
|V_{t+1}-V_t| \le |Z_{t+1}-Z_t|+C_rJ_t \le \delta_r+C_r = B_\varepsilon.
$$
If $t=\tau_{\mathrm{bad}}-1$, then by definition
$$
V_{t+1}-V_t=-\eta_\varepsilon,
$$
and if $t\ge \tau_{\mathrm{bad}}$, again
$$
V_{t+1}-V_t=-\eta_\varepsilon.
$$
Therefore, for every $t\in[T]$,
\begin{equation}\label{eq: V bounded increment}
|V_{t+1}-V_t|\le B_\varepsilon \qquad \text{a.s.}
\end{equation}

Next we verify the negative drift condition above the threshold $\theta_\varepsilon$, defined in Equation \ref{eq: theta_epsilon}. If $t<\tau_{\mathrm{bad}}-1$, then $V_t$ coincides with the compensated process $U_t$ considered in the proof of Lemma \ref{lemma:deterministic_weighted_backlog}. Hence, whenever $V_t\ge \theta_\varepsilon$, that proof gives
$$
\mathbb E[V_{t+1}-V_t\mid \mathcal F_t]\le -\eta_\varepsilon.
$$
If instead $t\ge \tau_{\mathrm{bad}}-1$, then by construction
$$
V_{t+1}-V_t=-\eta_\varepsilon.
$$
Thus, for every $t\in[T]$,
\begin{equation}\label{eq: V negative drift}
\mathbb E[V_{t+1}-V_t\mid \mathcal F_t]\le -\eta_\varepsilon
\qquad \text{whenever } V_t\ge \theta_\varepsilon.
\end{equation}

\paragraph{Exponential recursion.}
Set
$$
a_\varepsilon:=\frac{\eta_\varepsilon}{B_\varepsilon^2}, \qquad \rho_\varepsilon:=\exp\!\left(-\frac{\eta_\varepsilon^2}{2B_\varepsilon^2}\right)\in(0,1).
$$
Let
$$
\Delta V_t:=V_{t+1}-V_t.
$$
By \eqref{eq: V bounded increment}, we have $|\Delta V_t|\le B_\varepsilon$ almost surely. Therefore, by the conditional Hoeffding lemma, with $-B_\varepsilon \leq \Delta V_t \leq B_\varepsilon$,
$$
\mathbb E\!\left[e^{a_\varepsilon(\Delta V_t-\mathbb E[\Delta V_t\mid\mathcal F_t])}\mid \mathcal F_t\right] \le \exp\!\left(\frac{a_\varepsilon^2B_\varepsilon^2}{2}\right).
$$
Hence, on the event $\{V_t\ge \theta_\varepsilon\}$, using \eqref{eq: V negative drift},
\begin{align*}
\mathbb E[e^{a_\varepsilon\Delta V_t}\mid \mathcal F_t]
&\le \exp\!\left( a_\varepsilon\mathbb E[\Delta V_t\mid\mathcal F_t] +\frac{a_\varepsilon^2B_\varepsilon^2}{2} \right) \\
&\le \exp\!\left( -a_\varepsilon\eta_\varepsilon+\frac{a_\varepsilon^2B_\varepsilon^2}{2} \right) \\
&= \exp\!\left(-\frac{\eta_\varepsilon^2}{2B_\varepsilon^2}\right) =\rho_\varepsilon.
\end{align*}
Therefore,
\begin{equation}\label{eq: exp drift above threshold}
\mathbb E[e^{a_\varepsilon V_{t+1}}\mid\mathcal F_t] \le \rho_\varepsilon e^{a_\varepsilon V_t} \qquad \text{on } \{V_t\ge \theta_\varepsilon\}.
\end{equation}

On the other hand, if $V_t<\theta_\varepsilon$, then using \eqref{eq: V bounded increment},
$$
V_{t+1}\le V_t+B_\varepsilon<\theta_\varepsilon+B_\varepsilon,
$$
and hence
\begin{equation}\label{eq: exp drift below threshold}
e^{a_\varepsilon V_{t+1}} \le e^{a_\varepsilon(\theta_\varepsilon+B_\varepsilon)}
\qquad \text{on } \{V_t<\theta_\varepsilon\}.
\end{equation}

Combining \eqref{eq: exp drift above threshold} and \eqref{eq: exp drift below threshold}, we obtain
$$
\mathbb E[e^{a_\varepsilon V_{t+1}}\mid\mathcal F_t] \le \rho_\varepsilon e^{a_\varepsilon V_t} + e^{a_\varepsilon(\theta_\varepsilon+B_\varepsilon)}.
$$
Let
$$
M_t:=\mathbb E[e^{a_\varepsilon V_t}].
$$
Taking expectations, we get the recursion
$$
M_{t+1}\le \rho_\varepsilon M_t+e^{a_\varepsilon(\theta_\varepsilon+B_\varepsilon)}.
$$
Since $V_1=Z_1=0$, we have $M_1=1$. Iterating the recursion yields and using the fact that $\rho_\varepsilon \leq 1$
$$
M_t \le \rho_\varepsilon^{\,t-1} + \frac{1-\rho_\varepsilon^{\,t-1}}{1-\rho_\varepsilon} e^{a_\varepsilon(\theta_\varepsilon+B_\varepsilon)}
\le 1+\frac{e^{a_\varepsilon(\theta_\varepsilon+B_\varepsilon)}}{1-\rho_\varepsilon}.
$$
Define
$$
b_\varepsilon := e^{-a_\varepsilon\theta_\varepsilon}
\left( 1+\frac{e^{a_\varepsilon(\theta_\varepsilon+B_\varepsilon)}}{1-\rho_\varepsilon} \right).
$$
Then, by Markov's inequality, for every $t\in[T+1]$ and every $x\ge 0$,
\begin{equation}\label{eq: V_exp_tail}
\begin{aligned}
\mathbb P(V_t\ge \theta_\varepsilon+x) &= \mathbb P\!\left(e^{a_\varepsilon V_t}\ge e^{a_\varepsilon(\theta_\varepsilon+x)}\right)\\
&\le e^{-a_\varepsilon(\theta_\varepsilon+x)}\,\mathbb E[e^{a_\varepsilon V_t}]\\
&\le b_\varepsilon e^{-a_\varepsilon x}.
\end{aligned}
\end{equation}

\paragraph{Bound on the maximum.}
Applying a union bound over $t=1,\dots,T$, from \eqref{eq: V_exp_tail} we obtain
\begin{equation}\label{eq: union bound on V_s}
\mathbb P\!\left(\max_{1\le s\le T}V_s\ge \theta_\varepsilon+x\right) \le Tb_\varepsilon e^{-a_\varepsilon x}.
\end{equation}
This is true since the probability of the maximum is equal to probability that at least one $V_s$ is greater than the threshold, i.e. the union.

Integrating the bound, we obtain
\begin{align}
\mathbb E\!\left[\max_{1\le s\le T}V_s\right] &= \int_0^\infty \mathbb P\!\left(\max_{1\le s\le T}V_s\ge y\right)\,dy \notag\\
&\le \theta_\varepsilon + \int_0^\infty \min\{1,Tb_\varepsilon e^{-a_\varepsilon x}\}\,dx.
\end{align}
Indeed, for $y\le \theta_\varepsilon$ we use the trivial bound by $1$, while for
$y=\theta_\varepsilon+x$, $x\ge0$, we use Equation \ref{eq: union bound on V_s}.
Let $C:=Tb_\varepsilon$. Since
$$
\min\{1,Ce^{-a_\varepsilon x}\} \le \mathds{1}\left\{x\le \frac{\log C}{a_\varepsilon}\right\} + Ce^{-a_\varepsilon x} \mathds{1}\left\{x> \frac{\log C}{a_\varepsilon}\right\},
$$
we get
\begin{align}
\int_0^\infty \min\{1,Ce^{-a_\varepsilon x}\}\,dx &\le \frac{\log C}{a_\varepsilon}
+ \int_{\frac{\log C}{a_\varepsilon}}^\infty Ce^{-a_\varepsilon x}\,dx \notag\\
&= \frac{\log C}{a_\varepsilon} + \frac{1}{a_\varepsilon}.
\end{align}
Therefore,
\begin{equation}\label{eq: max_V_bound}
\mathbb E\!\left[\max_{1\le s\le T}V_s\right] \le \theta_\varepsilon + \frac{1+\log(Tb_\varepsilon)}{a_\varepsilon}.
\end{equation}

\paragraph{Control at the random time $\kappa_{k,n}$.}
Define the event
$$
A_{k,n}:=\{\kappa_{k,n}\le T\}\cap\{\mathscr G_{\kappa_{k,n}}\}.
$$
On $A_{k,n}$ we necessarily have $\kappa_{k,n}<\tau_{\mathrm{bad}}$, and therefore
$$
V_{\kappa_{k,n}} = Z_{\kappa_{k,n}}-C_r\sum_{s=1}^{\kappa_{k,n}-1}J_s.
$$
Hence, on $A_{k,n}$,
$$
Z_{\kappa_{k,n}} = V_{\kappa_{k,n}}+C_r\sum_{s=1}^{\kappa_{k,n}-1}J_s
\le V_{\kappa_{k,n}}+C_r\sum_{s=1}^{T}J_s.
$$
Using the pathwise bound in Equation \eqref{eq: exceptional_slots_total_bound},
$$
\sum_{s=1}^{T}J_s\le 2KL_\varepsilon,
$$
we deduce that
$$
Z_{\kappa_{k,n}}\mathds 1\{A_{k,n}\} \le \left(\max_{1\le s\le T}V_s+2C_rKL_\varepsilon\right)\mathds 1\{A_{k,n}\}
\le \max_{1\le s\le T}V_s+2C_rKL_\varepsilon.
$$

Finally, by the definition of $Z_t$,
$$
r_kQ_{t,k} = \sqrt{r_k}\,(\sqrt{r_k}Q_{t,k}) \le \sqrt{r_k}\,Z_t.
$$
Therefore,
\begin{align*}
r_kQ_{\kappa_{k,n},k}\mathds 1\{A_{k,n}\} &\le \sqrt{r_k}\,Z_{\kappa_{k,n}}\mathds 1\{A_{k,n}\}\\
&\le \sqrt{r_k}\left(\max_{1\le s\le T}V_s+2C_rKL_\varepsilon\right).
\end{align*}
Taking expectations and using \eqref{eq: max_V_bound}, we conclude that
\begin{align*}
\mathbb E\!\left[r_kQ_{\kappa_{k,n},k}\,\mathds 1\{\kappa_{k,n}\le T\}\,
\mathds 1\{\mathscr G_{\kappa_{k,n}}\}\right]&\le\sqrt{r_k}\,\mathbb E\!\left[\max_{1\le s\le T}V_s\right]+2\sqrt{r_k}\,C_rKL_\varepsilon\\
&\le \sqrt{r_k}\left( \theta_\varepsilon+\frac{1+\log(Tb_\varepsilon)}{a_\varepsilon}
+2C_rKL_\varepsilon \right).
\end{align*}
Since $\theta_\varepsilon=\mathcal O(V)$, $a_\varepsilon^{-1}=\mathcal O(1)$, $\log (T b_\varepsilon)=\mathcal O(\log T)$, and $2C_rKL_\varepsilon=\mathcal O(K\log(T/\delta))$, this yields
$$
\mathbb E\!\left[r_kQ_{\kappa_{k,n},k}\, \mathds 1\{\kappa_{k,n}\le T\}\,
\mathds 1\{\mathscr G_{\kappa_{k,n}}\} \right]
\le \sqrt{r_k}\,\mathcal O(V+K\log(T/\delta)).
$$
This proves the lemma.
\end{proof}

\subsubsection{Proof of Proposition \ref{prop:R1_weighted_bounds}}
\label{app: proof of R_1 bounds}

\paragraph{Bound of $R_1^\mu(T)$}
On the event $\mathscr G_t$, for every $k\in[K]$,
$$
0\le \bar\mu_k(t-1)-\mu_k \le 2\beta_k^\mu(t-1).
$$
Therefore,
\begin{equation}\label{eq:R1_mu_first_step}
R_1^\mu(T) \le \frac{2}{V} \mathbb E\!\left[ \sum_{t=1}^T r_{a_t}Q_{t,a_t}\beta^\mu_{a_t}(t-1)\mathds 1\{\mathscr G_t\} \right].
\end{equation}

We now reorganize the sum by type and service number. Whenever queue $k$ is effectively served at time $t$, we have $t=\tau_{k,n}$ for some $n\in[N_k(T)]$ (as defined in Equation \eqref{eq: definition tau_kn}). Hence,
$$
R_1^\mu(T) \le \frac{2}{V} \sum_{k=1}^K \mathbb E\!\left[
\sum_{n=1}^{N_k(T)} r_kQ_{\tau_{k,n},k}\, \beta_k^\mu(\tau_{k,n}-1)\,
\mathds 1\{\mathscr G_{\tau_{k,n}}\} \right].
$$
Extending the sum to $n=1,\dots,T$, we get
\begin{equation}\label{eq:R1_mu_tau_expansion}
R_1^\mu(T) \le \frac{2}{V} \sum_{k=1}^K \sum_{n=1}^T \mathbb E\!\left[
r_kQ_{\tau_{k,n},k}\, \beta_k^\mu(\tau_{k,n}-1)\,
\mathds 1\{\tau_{k,n}\le T\}\, \mathds 1\{\mathscr G_{\tau_{k,n}}\}
\right].
\end{equation}

Now we bound $\beta_k^\mu(\tau_{k,n}-1)$ on the event $\{\tau_{k,n}\le T\}$.

If $n=1$, then $N_k(\tau_{k,1}-1)=N_k(0)=0$, hence
$$
\beta_k^\mu(\tau_{k,1}-1)=\beta_k^\mu(0)=1.
$$
Since $\log(4KT/\delta)\ge \log 4$, we have
$$
1 \le 2\sqrt{\frac{\log(4KT/\delta)}{2}} = 2\sqrt{\frac{\log(4KT/\delta)}{2\cdot 1}}.
$$

If $n\ge 2$, then on $\{\tau_{k,n}\le T\}$ we have $N_k(\tau_{k,n}-1)=n-1\ge 1$, so
$$
\beta_k^\mu(\tau_{k,n}-1) = \sqrt{\frac{\log\!\left(4K(\tau_{k,n}-1)/\delta\right)}{2(n-1)}}
\le \sqrt{\frac{\log(4KT/\delta)}{2((n-1)\vee 1)}}.
$$
Using the elementary inequality
$$
\frac{1}{\sqrt{(n-1)\vee 1}}\le \frac{2}{\sqrt n},
\qquad \forall n\ge 1,
$$
we deduce that
$$
\beta_k^\mu(\tau_{k,n}-1) \le 2\sqrt{\frac{\log(4KT/\delta)}{2n}}.
$$

Therefore, for every $n\in[T]$,
\begin{equation}\label{eq:mu_beta_bound}
\beta_k^\mu(\tau_{k,n}-1) \le 2\sqrt{\frac{\log(4KT/\delta)}{2n}}.
\end{equation}

Substituting \eqref{eq:mu_beta_bound} into \eqref{eq:R1_mu_tau_expansion}, we obtain
$$
R_1^\mu(T) \le \frac{4}{V} \sqrt{\frac{\log(4KT/\delta)}{2}} \sum_{k=1}^K
\sum_{n=1}^T \frac{1}{\sqrt n}\, \mathbb E\!\left[ r_kQ_{\tau_{k,n},k}\,
\mathds 1\{\tau_{k,n}\le T\}\, \mathds 1\{\mathscr G_{\tau_{k,n}}\} \right].
$$

We may now apply Lemma \ref{lemma:weighted_random_times}, which yields
$$
\mathbb E\!\left[ r_kQ_{\tau_{k,n},k}\, \mathds 1\{\tau_{k,n}\le T\}\,
\mathds 1\{\mathscr G_{\tau_{k,n}}\} \right] \le \sqrt{r_k}\,\overline M_\varepsilon.
$$
with
\begin{equation}\label{eq: M_epsilon bar}
\overline M_\varepsilon:=\theta_\varepsilon+\frac{1+\log(Tb_\varepsilon)}{a_\varepsilon} +2C_rKL_\varepsilon
\end{equation}
Hence
\begin{align*}
R_1^\mu(T) &\le \frac{4\,\overline M_\varepsilon}{V} \sqrt{\frac{\log(4KT/\delta)}{2}} \sum_{k=1}^K \sqrt{r_k}
\sum_{n=1}^T \frac{1}{\sqrt n} \\
&\le \frac{8\,\overline M_\varepsilon}{V} \left(\sum_{k=1}^K \sqrt{r_k}\right) \sqrt{T\log\!\left(\frac{4KT}{\delta}\right)},
\end{align*}
where we used
$$
\sum_{n=1}^T \frac{1}{\sqrt n}\le 2\sqrt T.
$$

Finally, using
$$
\sum_{k=1}^K \sqrt{r_k}\le K\sqrt{r_{\max}},
$$
we obtain the simpler bound
$$
R_1^\mu(T) \le \frac{8K\sqrt{r_{\max}}\,\overline M_\varepsilon}{V} \sqrt{T\log\!\left(\frac{4KT}{\delta}\right)}.
$$
This completes the first part of the proof.

\paragraph{Bound of $R_1^p(T)$}
The same reasoning holds in the bound of $R_1^p(T)$ as well.
On the event $\mathscr G_t$, for every $k\in[K]$,
$$
0\le \bar p_k(t-1)-p_k \le 2\beta_k^p(t-1,\delta).
$$
Therefore,
\begin{equation}\label{eq:R1_p_first_step}
R_1^p(T) \le \frac{2}{V} \mathbb E\!\left[ \sum_{t=1}^T r_{X_t}Q_{t,X_t}c_t\beta^p_{X_t}(t-1)\mathds 1\{\mathscr G_t\} \right].
\end{equation}

Since the chatbot decision is bang-bang, we have $c_t\in\{0,1\}$. Thus only the informative slots with $c_t=1$ contribute to the sum. Whenever $X_t=k$ and $c_t=1$, we have $t=\sigma_{k,n}$ (as defined in Equation \eqref{eq: definition sigma_kn}) for some $n\in[M_k(T)]$. Hence,
$$
R_1^p(T) \le \frac{2}{V} \sum_{k=1}^K \mathbb E\!\left[
\sum_{n=1}^{M_k(T)} r_kQ_{\sigma_{k,n},k}\, \beta_k^p(\sigma_{k,n}-1)\, \mathds 1\{\mathscr G_{\sigma_{k,n}}\}
\right].
$$
Extending the sum to $n=1,\dots,T$, we get
\begin{equation}\label{eq:R1_p_sigma_expansion}
R_1^p(T) \le \frac{2}{V} \sum_{k=1}^K \sum_{n=1}^T
\mathbb E\!\left[ r_kQ_{\sigma_{k,n},k}\, \beta_k^p(\sigma_{k,n}-1)\,
\mathds 1\{\sigma_{k,n}\le T\}\, \mathds 1\{\mathscr G_{\sigma_{k,n}}\}
\right].
\end{equation}

For bounding $\beta_k^p(\sigma_{k,n}-1)$ on the event $\{\sigma_{k,n}\le T\}$, we can apply the same exact reasoning used for the bound of $\beta_k^\mu(\tau_{k,n}-1)$ and we get

$$
\beta_k^p(\sigma_{k,n}-1) \le 2\sqrt{\frac{\log(4KT/\delta)}{2n}}.
$$

Therefore, for every $n\in[T]$,
\begin{equation}\label{eq:p_beta_bound}
\beta_k^p(\sigma_{k,n}-1) \le 2\sqrt{\frac{\log(4KT/\delta)}{2n}}.
\end{equation}

Substituting \eqref{eq:p_beta_bound} into \eqref{eq:R1_p_sigma_expansion}, we obtain
$$
R_1^p(T) \le \frac{4}{V} \sqrt{\frac{\log(4KT/\delta)}{2}} \sum_{k=1}^K
\sum_{n=1}^T \frac{1}{\sqrt n}\, \mathbb E\!\left[ r_kQ_{\sigma_{k,n},k}\,
\mathds 1\{\sigma_{k,n}\le T\}\, \mathds 1\{\mathscr G_{\sigma_{k,n}}\}
\right].
$$

Applying Lemma \ref{lemma:weighted_random_times} we get
$$
\mathbb E\!\left[r_kQ_{\sigma_{k,n},k}\, \mathds 1\{\sigma_{k,n}\le T\}\,
\mathds 1\{\mathscr G_{\sigma_{k,n}}\} \right] \le \sqrt{r_k}\,\overline M_\varepsilon.
$$
with $\overline M_\varepsilon$ defined in Equation \eqref{eq: M_epsilon bar}.
Hence
\begin{align*}
R_1^p(T) &\le \frac{4\,\overline M_\varepsilon}{V} \sqrt{\frac{\log(4KT/\delta)}{2}} \sum_{k=1}^K \sqrt{r_k}
\sum_{n=1}^T \frac{1}{\sqrt n} \\
&\le \frac{8\,\overline M_\varepsilon}{V} \left(\sum_{k=1}^K \sqrt{r_k}\right) \sqrt{T\log\!\left(\frac{4KT}{\delta}\right)},
\end{align*}
where we used again
$$
\sum_{n=1}^T \frac{1}{\sqrt n}\le 2\sqrt T.
$$

Using again
$$
\sum_{k=1}^K \sqrt{r_k}\le K\sqrt{r_{\max}},
$$
we obtain the bound
$$
R_1^p(T) \le \frac{8K\sqrt{r_{\max}}\,\overline M_\varepsilon}{V} \sqrt{T\log\!\left(\frac{4KT}{\delta}\right)}.
$$
This completes the second part of the proof.

\paragraph{Conclusion.}
Combining the two bounds, we obtain
$$
R_1^\mu(T),\; R_1^p(T) \le CK\frac{\overline M_\varepsilon}{V} \sqrt{T\log(4KT/\delta)}
$$
for some constant $C>0$. Since $\overline M_\varepsilon=\mathcal O(V),$
it follows that
$$
R_1^\mu(T),\; R_1^p(T) = \mathcal O\!\left( K\sqrt{T\log(T/\delta)} \right).
$$
This concludes the proof.
\hfill $\square$

\subsection{Proof of Theorem \ref{thm: main regret bound}}
\label{app: final bound on the regret}

Recall from Lemma~\ref{lemma: regret decomposition} that
$$
R_T^{\texttt{UCB-DPP}} \le R_1^\mu(T)+R_1^p(T)+R_2(T)+R_3(T)+R_4(T).
$$

We first make explicit the constants appearing in the bound on the estimation terms.
Recall that
$$
\eta_\varepsilon = \frac{\varepsilon\sqrt{r_{\min}}}{8}, \qquad
\delta_r=\sqrt{2r_{\max}}, \qquad C_r=\delta_r+\eta_\varepsilon,
$$
and
$$
B_\varepsilon = \delta_r+C_r = 2\sqrt{2r_{\max}} + \frac{\varepsilon\sqrt{r_{\min}}}{8}.
$$
Moreover,
$$
a_\varepsilon = \frac{\eta_\varepsilon}{B_\varepsilon^2} = \frac{\varepsilon\sqrt{r_{\min}}}{8\left(2\sqrt{2r_{\max}}+\frac{\varepsilon\sqrt{r_{\min}}}{8}\right)^2},
$$
and
$$
\rho_\varepsilon = \exp\!\left(-\frac{\eta_\varepsilon^2}{2B_\varepsilon^2} \right).
$$
The constant $b_\varepsilon$ is defined as
$$
b_\varepsilon = e^{-a_\varepsilon\theta_\varepsilon} \left(
1+ \frac{e^{a_\varepsilon(\theta_\varepsilon+B_\varepsilon)}}{1-\rho_\varepsilon} \right).
$$
Equivalently,
$$
b_\varepsilon = e^{-a_\varepsilon\theta_\varepsilon}
+ \frac{e^{a_\varepsilon B_\varepsilon}}{1-\rho_\varepsilon}.
$$
Since $\theta_\varepsilon\ge 0$, we have
$$
b_\varepsilon \le 1+ \frac{e^{a_\varepsilon B_\varepsilon}}{1-\rho_\varepsilon}.
$$
Thus, defining
$$
\Gamma_\varepsilon := \log\left( 1+ \frac{e^{a_\varepsilon B_\varepsilon}}{1-\rho_\varepsilon} \right),
$$
we have
$$
\log b_\varepsilon\le \Gamma_\varepsilon.
$$
Notice that $\Gamma_\varepsilon$ depends only on
$\varepsilon,r_{\min},r_{\max}$, and is independent of $T,V$ and $K$.

Now recall that
$$
\theta_\varepsilon = \frac{8}{\varepsilon\sqrt{r_{\min}}}\left(
r_{\max}+V\sum_{k=1}^K\lambda_k c_k^\varepsilon \right),
$$
and
$$
L_\varepsilon = \left\lceil \frac{32\log(4KT/\delta)}{\varepsilon^2} \right\rceil.
$$
Therefore,
$$
\overline M_\varepsilon = \theta_\varepsilon + \frac{1+\log(Tb_\varepsilon)}{a_\varepsilon} + 2C_rKL_\varepsilon
$$
satisfies the explicit upper bound
$$
\overline M_\varepsilon \le \theta_\varepsilon + \frac{1+\log T+\Gamma_\varepsilon}{a_\varepsilon} + 2C_rK \left\lceil \frac{32\log(4KT/\delta)}{\varepsilon^2} \right\rceil.
$$
Let us denote the right-hand side by
$$
\widetilde M_\varepsilon := \theta_\varepsilon + \frac{1+\log T+\Gamma_\varepsilon}{a_\varepsilon} + 2C_rK
\left\lceil \frac{32\log(4KT/\delta)}{\varepsilon^2} \right\rceil.
$$

By Proposition~\ref{prop:R1_weighted_bounds}, we have
$$
R_1^\mu(T)+R_1^p(T) \le \frac{16K\sqrt{r_{\max}}}{V} \widetilde M_\varepsilon \sqrt{T\log(4KT/\delta)}.
$$

Moreover,
$$
R_2(T) = T\mathbb P(\overline{\mathscr G}_T) \le \delta T^2,
$$
and
$$
R_3(T) = \frac{r_{\max}T}{V}.
$$
Finally, Proposition~\ref{prop:R4_bound_weighted} yields
$$
R_4(T) \le \sqrt{\sum_{k=1}^K r_k} \left( M_\varepsilon+2C_rKL_\varepsilon
\right) + r_{\max}T\mathbb P(\overline{\mathscr G}_T),
$$
where
$$
M_\varepsilon = \frac{8V}{\varepsilon\sqrt{r_{\min}}} \sum_{k=1}^K\lambda_k c_k^\varepsilon + \frac{8r_{\max}}{\varepsilon\sqrt{r_{\min}}} + \sqrt{2r_{\max}}
+ \frac{16r_{\max}}{\varepsilon\sqrt{r_{\min}}} \log\!\left(
\frac{1024r_{\max}} {\varepsilon^2r_{\min}} \right).
$$

We now choose
$$
V=\sqrt T, \qquad \delta=T^{-2}.
$$
Then
$$
\log(4KT/\delta)=\log(4KT^3),
$$
and Proposition~\ref{propos: bound on bad event proba} gives
$$
\mathbb P(\overline{\mathscr G}_T) \le \delta T = T^{-1}.
$$
Thus
$$
R_2(T)\le 1, \qquad r_{\max}T\mathbb P(\overline{\mathscr G}_T)\le r_{\max}.
$$

With this choice of parameters,
$$
\theta_\varepsilon = \frac{8}{\varepsilon\sqrt{r_{\min}}} \left(
r_{\max} + \sqrt T\sum_{k=1}^K\lambda_k c_k^\varepsilon \right),
$$
and
$$
\widetilde M_\varepsilon := \frac{8}{\varepsilon\sqrt{r_{\min}}}\left(
r_{\max} + \sqrt T\sum_{k=1}^K\lambda_k c_k^\varepsilon \right)
+ \frac{1+\log T+\Gamma_\varepsilon}{a_\varepsilon}
+ 2C_rK \left\lceil \frac{32\log(4KT^3)}{\varepsilon^2} \right\rceil.
$$

Combining all the terms, we obtain the explicit bound
\begin{align*}
R_T^{\texttt{UCB-DPP}} \le\;& 16K\sqrt{r_{\max}}\, \widetilde M_\varepsilon \sqrt{\log(4KT^3)} + r_{\max}\sqrt T + 1 + r_{\max}
\\ & + \sqrt{\sum_{k=1}^K r_k} \Bigg[ \frac{8\sqrt T}{\varepsilon\sqrt{r_{\min}}} \sum_{k=1}^K\lambda_k c_k^\varepsilon
+ \frac{8r_{\max}}{\varepsilon\sqrt{r_{\min}}} + \sqrt{2r_{\max}} \\
&+ \frac{16r_{\max}}{\varepsilon\sqrt{r_{\min}}}\log\!\left(
\frac{1024r_{\max}}{\varepsilon^2r_{\min}}\right)
+ 2\left(\sqrt{2r_{\max}}+\frac{\varepsilon\sqrt{r_{\min}}}{8}\right)
K\left\lceil\frac{32\log(4KT^3)}{\varepsilon^2}\right\rceil\Bigg].
\end{align*}

This expression keeps all constants explicit. Since
$$
\sum_{k=1}^K\lambda_k c_k^\varepsilon \le \sum_{k=1}^K \lambda_k =  1,
$$
we have
$$
\widetilde M_\varepsilon = \mathcal O\!\left( \sqrt T+K\log(KT)\right),
$$
where the hidden constants depend only on $\varepsilon,r_{\min},r_{\max}$.
Consequently,
$$
R_T^{\texttt{UCB-DPP}} = \mathcal O\!\left(K\sqrt{T\log T} \right),
$$
up to constants depending on $\varepsilon,r_{\min},r_{\max}$.
\hfill $\square$

\section{Simulations}\label{app: simulations}
In this section, we report numerical simulations of the \texttt{UCB-DPP} policy on synthetic instances.

\begin{figure}[H]
    \centering
    \begin{subfigure}{0.48\textwidth}
        \centering
        \includegraphics[width=\textwidth]{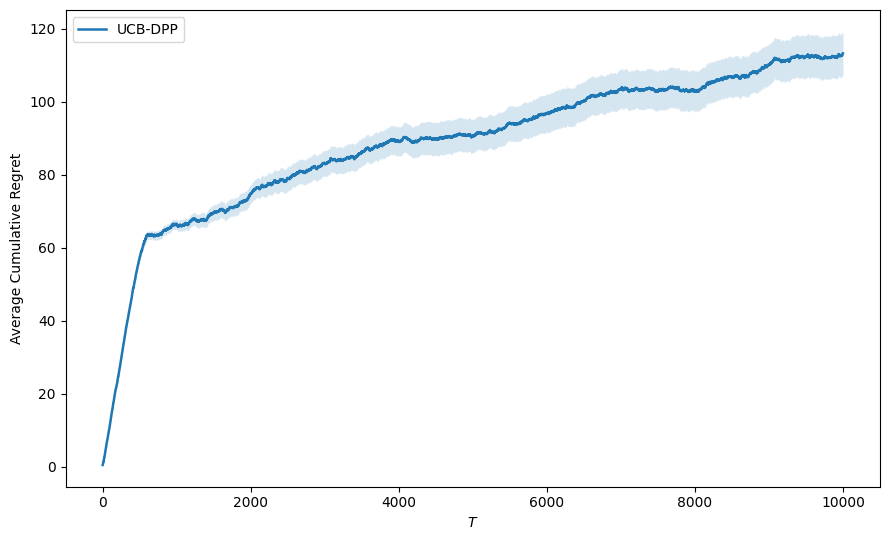}
        \caption{$\bm p^{\mathrm{strong}}$ and $\bm \mu^{\mathrm{weak}}$}
        \label{fig:SC-WH}
    \end{subfigure}
    \hfill
    \begin{subfigure}{0.48\textwidth}
        \centering
        \includegraphics[width=\textwidth]{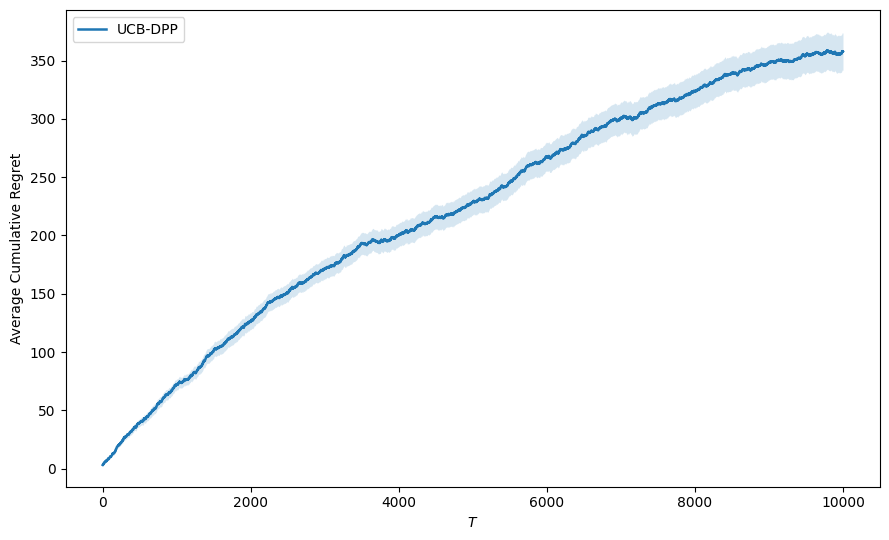}
        \caption{$\bm p^{\mathrm{weak}}$ and $\bm \mu^{\mathrm{strong}}$}
        \label{fig:WC-SH}
    \end{subfigure}
    \caption{Average cumulative regret of the \texttt{UCB-DPP} policy in two different parameter regimes. The shaded regions correspond to one standard error.}
    \label{fig:two_plots}
\end{figure}

\begin{figure}[H]
    \centering
    \includegraphics[width=0.6\linewidth]{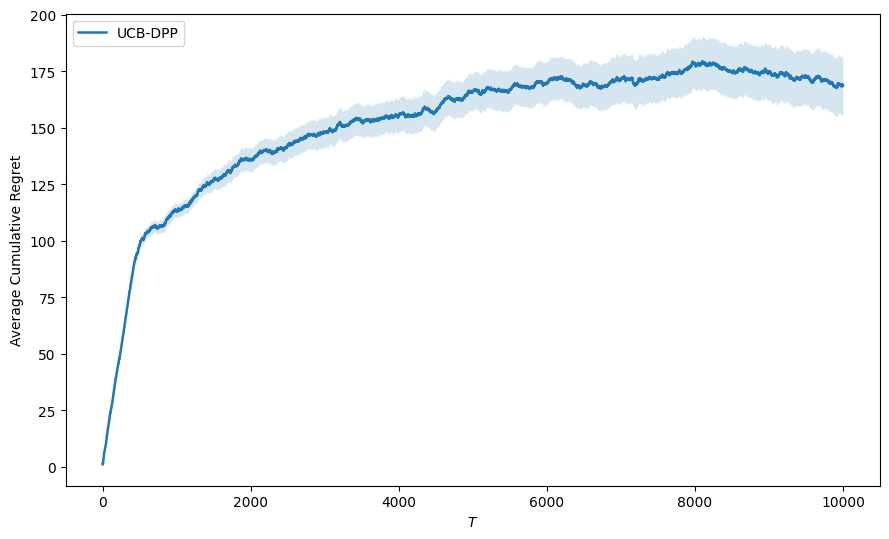}
    \caption{Average cumulative regret of the \texttt{UCB-DPP} policy with $\bm p^{\mathrm{medium}}$ and $\bm \mu^{\mathrm{medium}}$. The shaded region corresponds to one standard error.}
    \label{fig:medium}
\end{figure}

We consider a system with $K=5$ task classes and a uniform arrival distribution, namely $\lambda_k = 1/K$ for all $ k=1,\ldots,K$.
The experiments are carried out under three different regimes for the chatbot success probabilities $\bm p$ and the human service rates $\bm \mu$. The first regime corresponds to a strong chatbot and weak human service (Figure \ref{fig:SC-WH}), the second to a weak chatbot and strong human service (Figure \ref{fig:backlogs}), and the third to an intermediate case in which both components have medium performance (Figure \ref{fig:medium}).

The chatbot success probability vectors are chosen as
$$
\bm p^{\mathrm{strong}}
=
(0.90,0.85,0.95,0.88,0.92),
\qquad
\bm p^{\mathrm{weak}}
=
(0.25,0.30,0.20,0.35,0.28),
$$
and
$$
\bm p^{\mathrm{medium}}
=
(0.55,0.60,0.50,0.65,0.58).
$$
The corresponding human service-rate vectors are
$$
\bm \mu^{\mathrm{strong}}
=
(0.80,0.90,0.85,0.88,0.82),
\qquad
\bm \mu^{\mathrm{weak}}
=
(0.25,0.30,0.35,0.28,0.32),
$$
and
$$
\bm \mu^{\mathrm{medium}}
=
(0.45,0.50,0.48,0.52,0.46).
$$

For each regime, the parameters are chosen so that the static optimization problem is feasible. 

We then solve the static benchmark problem and compute an optimal dual solution $\bm y^*$. The terminal backlog weights $\bm r$ are selected slightly above the corresponding optimal dual multipliers, namely
$$
r_k = 1.05 \cdot y_k^*, \qquad k=1,\ldots,K.
$$
This choice is consistent with the condition required by the static lower-bound result.

The policy parameter is set to $V = \sqrt{T}$ and $\delta = T^{-2}$. In the simulations we use the horizon $T = 10000$.
Each experiment is repeated over $100$ independent runs. The curves report the empirical average cumulative regret, computed with respect to the static benchmark value $T \cdot \operatorname{OPT}(\theta)$. The shaded regions represent one standard error, that is, the empirical standard deviation divided by the square root of the number of runs.

\begin{figure}[H]
    \centering
    \includegraphics[width=0.6\linewidth]{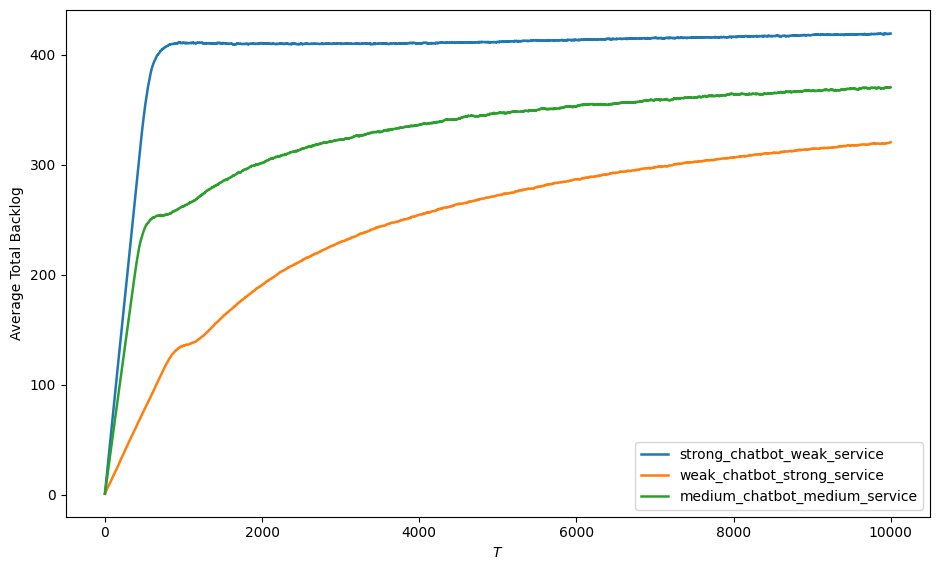}
    \caption{Average total backlog of the \texttt{UCB-DPP} policy in the three parameter regimes.}
    \label{fig:backlogs}
\end{figure}

Figure \ref{fig:backlogs} reports the evolution of the average total backlog,
defined as $\sum_{k=1}^K Q_{T+1,k}$, under the same experimental specifications
described above. The three curves correspond to the strong-chatbot/weak-service,
weak-chatbot/strong-service, and medium-chatbot/medium-service regimes. As in
the regret plots, each curve is averaged over $100$ independent runs,
and the shaded regions represent one standard error.

Overall, the plots show that the cumulative regret remains controlled across all
three regimes, with relatively narrow confidence bands over the replications. The strong-chatbot/weak-service regime exhibits a larger backlog,
as the human server alone has limited capacity and the policy must rely more
heavily on the chatbot to stabilize the system. In contrast, when the human
service rates are high, the queues are more easily drained, leading to a smaller
overall congestion level. The medium regime displays an intermediate behavior.

The backlog plot confirms that the total queue length remains stable over time
in all three scenarios. This is consistent with the stabilizing effect of the Drift-Plus-Penalty scheduling rule.

The code used for the experiments is available at the anonymous link:
\href{https://anonymous.4open.science/r/UCB-DPP-Simulations-ACA5/README.md}{\texttt{UCB-DPP} Simulations}.

\end{document}